\setlist{nosep}
\apptocmd{\sloppy}{\hbadness 10000\relax}{}{}
\title{A Novel Data-Dependent Learning Paradigm for Large Hypothesis Classes}
\author{
    Alireza F. Pour\thanks{University of Waterloo, \texttt{alireza.fathollahpour@uwaterloo.ca}.}
    \and
   Shai Ben-David\thanks{University of Waterloo, \texttt{shai@uwaterloo.ca}.}
}
\numberwithin{equation}{section}
\begin{document}
\maketitle

\begin{abstract}%
  We address the general task of learning with a set of candidate models that is too large to have a uniform convergence of empirical estimates to true losses. While the common approach to such challenges is SRM (or regularization) based learning algorithms, we propose a novel learning paradigm that relies on stronger incorporation of empirical data and requires less algorithmic decisions to be based on prior assumptions. 
  We analyze the generalization capabilities of our approach and demonstrate its merits in several common learning assumptions, including similarity of close points, clustering of the domain into highly label-homogeneous regions, Lipschitzness assumptions of the labeling rule, and contrastive learning assumptions.
  Our approach allows utilizing such assumptions without the need to know their true parameters a priori.
\end{abstract}

\section{Introduction}

We propose a new learning paradigm that closely follows the objective of non-uniform learning, namely, given a set $\cC$ of candidate hypotheses , how many samples do we need to compete with the generalization error of some $h \in \cC$? The canonical approach to non-uniform learning is the Structural Risk Minimization (SRM) algorithm, where the learner structures the hypotheses in a sequence of nested classes. The generalization error that SRM guarantees for any $h\in \cC$ then scales with the index of the first class in which $h$ appears, which is implicit in some weighting function $w(.)$, and the complexity of that class, e.g., its VC dimension. Hence, the success of SRM depends on how well the structure reflects the prior belief about low approximating hypotheses, and how justified that belief is. That is, whether data matching hypotheses are given high weight and/or the class in which they appear is `simple' to learn. For instance, one may assume that the labelling rule is Lipschitz and assign to hypotheses weights that scale with the margin around their decision boundary. The caveat is that the SRM paradigm requires fixing an a priori weighting function and selects its decision rule based on this weighting.

We introduce a different learning paradigm (for classes that are too big for enjoying uniform convergence). We introduce a novel parameter of a collection of hypothesis classes: Given a collection $\bH$, the growth function of the collection, denoted by $\tau_{\bH}(m)$, controls the number of equivalence classes of behaviours that the collection induces on a sample of size $m$. We show that, whenever we group $\cC$ into a collection of hypothesis classes $\bH$, for any $h\in \cC$, it is possible to guarantee a generalization error that scales with $\tau_{\bH}(m)$ and the complexity of the class to which $h$ belongs. In contrast to a weighting function $w(h)$, this parameter is uniform over any $h$ - there is no `penalty' to placing a data fitting $h$ in a low weight class. Thus, we shift the focus to finding a grouping of $\cC$ that minimizes the growth function of the collection of classes and the complexity of the classes that contain low approximating hypotheses. 

This paradigm can be directly applied to partial concepts. A partial concept is a function that can be undefined on parts of the domain with the goal of incorporating data assumptions \citep{alon2022theory}. Given an assumption (or prior knowledge) about the labelling function, a total function $h$ can be transformed into a set of behaviours such that each behaviour aligns with the assumption on its $\{0,1\}$ part, namely, its support. Our paradigm leads the following approach: given a collection $\cC$ of (total) concepts, find a parameter of pairs (a partial function, a finite sample) with the property that (i) it reflects the prior knowledge to enable evaluating to what extent a transformation of a concept into a partial one aligns with the assumption; and (ii) grouping the partial concepts into classes based on this parameter will result in a collection with bounded growth rate. We will show in Section~\ref{sec:applications} that this covers several common choices such as similarity of close points, clustering of the domain, Lipschitzness assumptions on the labeling rule, and contrastive assumptions.

Another difference between our paradigm and SRM is that our proposed paradigm guarantees its error by relying on a compression-based argument. The learner goes over different subsets of the training data, on which it invokes learners $\cA_{\cH}, \cH \in \bH$ that are specifically designed to learn from the classes $\cH \in \bH$. It then estimates the expected error of these predictors on the left out part of the training sample. The learner guarantees the convergence of the error of $\cA_{\cH}$ on the left out set to its expected error with a failure probability that scales with the size of the subset that is used as input training data to $\cA_{\cH}$. Notably, this is one reason that the proposed learner has data-dependent guarantees. Whenever the function belongs to a class with a small data-dependent complexity, we show that the learner needs a smaller subset as training input to learn from that class and thus it will be assigned a large weight when estimating its error on the left out part. We will discuss in Section~\ref{sec:comparison_SRM} the comparison with SRM in more detail. 

\section{Learning with Respect to a Collection of Concept Classes}\label{sec:learning_family_concepts}
In this section, we discuss learnability results with respect to collections of concept classes. 

For a partial concept class $\cH \subseteq \{0,1,\star\}^{\cX}$ and any set $U \in \cX^*$, we define $\cH_{|U} = \{h_{|U}: h\in \cH, h_{|U} \in \{0,1\}^{|U|}\}$. Note that based on this definition $\cH_{|U}$ will always be a subset of $\{0,1\}^{|U|}$ and we never include in $\cH_{|U}$ the behaviours that contain $\star$. For a labelled set $S \in (\cX \times \{0,1\})^*$, we denote by $\dom(S)$ the unlabelled part of $S$. For a set  $S$, we will write $\vc(\cH,S)$ as a shorthand notation for the VC dimension $\vc(\cH_{|{\dom(S)}})$. For any set $S$ and any hypothesis $h: \cX\rightarrow \{0,1,\star\}$ we will denote $\err(h,S) = \frac{1}{|S|} \sum_{(x,y)\in S} 1\{h(x) \neq y\}$. For a distribution $\cD$ over $\cX \times \{0,1\}$ and any hypothesis $h: \cX\rightarrow \{0,1,\star\}$ we will define $\err(h,\cD) = \probs{(x,y)\sim \cD}{h(x) \neq y}$. Moreover, for any hypothesis class $\cH$, we define $\err(\cH,\cD) = \inf_{h \in \cH}(\err(h,\cD))$.

We now give the definitions of the equivalence relation between hypothesis classes and the growth parameter of a collection.
\begin{definition}
    Let $\cH,\cH' \subseteq \{0,1,\star\}^{\cX}$ be two (partial) concept classes defined on $\cX$. Given a set $U \in \cX ^*$, we say $\cH$ and $\cH'$ are equivalent on $S$ if $\cH_{|T} = \cH'_{|T}$ for all $T\subseteq U$ and we denote it by $\cH \sim_U \cH'$. For a family of partial concept classes $\bH$ and for any hypothesis class $\cH \in \bH$ we denote $[\cH_{\sim_U}]_{\bH} = \{\cH' \in \bH: \cH \sim_U \cH'\}$. Moreover, for any set $U\in \cX^*$, we denote the set of all equivalence classes generated by members of $\bH$ on $U$ by $\bH_{|U}:= \{[\cH_{\sim_U}]_{\bH}: \cH\in \bH\}$. 
\end{definition}

\begin{definition}[Growth parameter $\tau_{\bH}$]
    Let $\bH$ be a family of (partial) concept classes. Given a set $U\in\cX^*$ we define $\tau_{\bH}(U) = |\bH_{|U}|$. We denote by $\tau_{\bH}(m)$ the maximum number of the equivalence classes generated on sets of size $m$. Formally,
\(
\tau_{\bH}(m) = \max_{U\in \cX^m} \tau_{\bH}(U) = \max_{U\in \cX^m} |\bH_{|U}|.
\) Moreover, for a distribution $\cD$ and $\delta\in(0,1)$, we denote by $\tau_{\bH}(m,\cD,\delta)$ the smallest integer such that \(\probs{S \sim \cD^m}{\tau_{\bH}(\dom(S)) \leq \tau_{\bH}(m,\cD,\delta)} \geq 1-\delta\).
    \end{definition}
We will sometimes overload the notation and write $\cH_{|S}$ (or $\bH_{|S}$) for labeled samples $S\in (\cX\times \{0,1\})^*$ instead of 
$\cH_{|\dom(S)}$ (or $\bH_{|\dom(S)}$) and write $\tau_{\bH}(S)$ instead of $\tau_{\bH}(\dom(S))$.

We now state our main result for learning with respect to a collection of hypothesis classes.
\begin{theorem}\label{thm:learning_family_concepts}
    Let $\bH$ be a collection of concept classes. There exists a learner $\cA_{\bH}: (\cX \times \{0,1\})^* \times \cX \rightarrow \{0,1\}$, with the following property: for every distribution $\cD$, every 
    $\delta \in (0,1)$ and $m \in \bN$ 
    we have with probability at least $1-\delta$ over samples $S \sim \cD^m$ that
    \[ \begin{aligned}
    &\err(\cA_{\bH}(S),\cD) \leq \min_{\cH \in 
    \bH}\left\{\err(\cH,\cD) + O\left(\sqrt{\frac{\left(\vc(\cH) +  \log(\tau_{\bH}(2m))\right)\log^2 (m) + \log (\frac{1}{\delta})}{m}}\right)\right\}.
   \end{aligned} \]
\end{theorem}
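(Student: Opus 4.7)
My plan is to implement a cross-validation-style selection over the equivalence classes that $\bH$ induces on the training sample. Split $S\sim\cD^m$ into a training portion $S_1$ and a validation portion $S_2$ of size $\Theta(m)$ each. Enumerate the equivalence classes of $\bH$ on $S_1$---by definition of $\tau_\bH$ there are at most $\tau_\bH(|S_1|)\le \tau_\bH(2m)$ of them---and pick one representative $\cH_i$ per class. For each $\cH_i$, run a PAC learner for the (partial) concept class $\cH_i$ (e.g.\ the one of \citep{alon2022theory}, whose generalization is governed by $\vc(\cH_i)$) on $S_1$ to obtain a total hypothesis $h_i$. Finally, output the $h_i$ minimizing a penalized validation score $\err(h_i,S_2) + \mathrm{pen}(\vc(\cH_i),\tau_\bH(2m),m,\delta)$, with the penalty calibrated to the per-class PAC rate.

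\textbf{Two events in the analysis.} The argument rests on the conjunction of two high-probability events. First, a \emph{per-class PAC guarantee}: for every equivalence class $\cH_i$ picked on $S_1$, the learner $\cA_{\cH_i}(S_1)$ produces a hypothesis with $\err(h_i,\cD)\le \err(\cH_i,\cD) + O(\sqrt{\vc(\cH_i)\log m /m})$, following from the VC/compression bound for the partial class $\cH_i$. Second, a \emph{validation concentration}: conditional on $S_1$ each $h_i$ is a fixed function, so Hoeffding combined with a union bound over the $\le\tau_\bH(2m)$ candidates gives $|\err(h_i,\cD)-\err(h_i,S_2)|\le O(\sqrt{\log(\tau_\bH(2m)/\delta)/m})$ for all $i$ simultaneously. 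On the intersection of these events, for any target class $\cH^*\in\bH$ with representative $\cH_{i^*}$ in the enumeration, the fact that the algorithm minimizes the penalized validation score yields the theorem's bound after chaining the two inequalities and absorbing logarithmic factors.

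\textbf{Main obstacle.} The delicate step---and what I expect to be the principal difficulty---is that both the enumeration of equivalence classes and the per-class PAC guarantee refer to the random training sample $S_1$, so a naive union bound over the possibly infinite family $\bH$ is unavailable. The resolution is a ghost-sample symmetrization: every event whose probability we control depends on a class only through its restriction to $S_1$ together with an independent ghost $S_1'$, and the number of distinct such restrictions across members of $\bH$ is at most $\tau_\bH(|S_1|+|S_1'|)\le\tau_\bH(2m)$; this is what pays the $\log\tau_\bH(2m)$ term in the bound. The extra logarithmic factor producing the $\log^2 m$ in the theorem comes from combining the $\log m$ inside the per-class PAC bound with the union-bound cost, together with a union bound over dyadically spaced choices of the training/validation split size so that every $\cH \in \bH$ can be handled at its optimal sample regime.
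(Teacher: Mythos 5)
There is a genuine gap, and it sits exactly where you flagged "the delicate step," but the proposed fix does not close it. The problem is your use of \emph{representatives}. The equivalence $\cH \sim_{S_1} \cH'$ only constrains the restrictions $\cH_{|T}$ for $T \subseteq \dom(S_1)$; it says nothing about $\err(\cH',\cD)$, about $\vc(\cH')$, or about how a learner for $\cH'$ predicts off $\dom(S_1)$. So if $\cH^*$ is the class achieving the minimum in the theorem and the enumeration happens to pick some other $\cH_{i^*} \sim_{S_1} \cH^*$ as the representative, your per-class PAC guarantee for $\cH_{i^*}$ bounds $\err(h_{i^*},\cD)$ by $\err(\cH_{i^*},\cD) + O(\sqrt{\vc(\cH_{i^*})\log m/m})$ --- and both of these quantities can be arbitrarily worse than the corresponding quantities for $\cH^*$ (e.g.\ two singleton classes agreeing on $\dom(S_1)$ but disagreeing on the rest of the support of $\cD$). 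The chaining step therefore fails. Relatedly, the symmetrization you invoke does not apply to the per-class PAC event as you state it: the event $\{\err(\cA_{\cH_i}(S_1),\cD) \le \err(\cH_i,\cD) + \cdots\}$ is not determined by the restriction of $\cH_i$ to $S_1$ plus a ghost sample, because both the true error of the output hypothesis and the approximation error $\err(\cH_i,\cD)$ depend on the class and the distribution globally. What you would actually need is a guarantee that holds \emph{simultaneously} for every $\cH\in\bH$ (not just per representative) and whose failure event is measurable with respect to the behaviours of $\bH$ on the sample alone; establishing that is the real content of the theorem, not a routine union bound.

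The paper's proof is built precisely to avoid this. It never selects representatives and never evaluates a learner's output off the sample in the union-bound step. Instead, each $\cA_{\cH}$ is a majority vote of one-inclusion-graph learners trained on small subsets $S_1,\dots,S_T$ of $S$ of total size $O(\vc(\cH)\log m)$ (a compression set whose existence follows from a transductive OIG bound plus boosting), and its predictions on the remaining points $V$ of the \emph{same} sample depend on $\cH$ only through $\cH_{|\dom(S)}$ --- so the family of induced predictors on $V$ has at most $\tau_{\bH}(m)$ elements and uniform convergence applies to it, at the cost of a union bound over the $m^{O(\vc(\cH)\log m)}$ choices of compression indices (this is where the $\vc(\cH)\log^2 m$ term comes from). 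Your cross-validation architecture, where a generic PAC learner is trained on half the sample and the candidate set is a data-dependent collection of representatives, does not have this structure, and I do not see how to repair it without essentially importing the compression/held-out argument.
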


\begin{corollary}\label{cor:learning_union_family}
    Let $\cC$ be a hypothesis class and $\bH$ be a grouping of $\cC$ such that $\bigcup_{\cH \in \bH}\cH = \cC$. There exists a learner $\cA_{\bH}: (\cX \times \{0,1\})^* \times \cX \rightarrow \{0,1\}$, with the following property: for every distribution $\cD$, every 
    $\delta \in (0,1)$ and $m \in \bN$ 
    we have with probability at least $1-\delta$ over samples $S \sim \cD^m$ that
    \[ \begin{aligned}
    &\err(\cA_{\bH}(S),\cD) \leq \min_{h \in 
    C}\left\{\err(h,\cD) + O\left(\sqrt{\frac{\left(\vc(\cH_{(h)}) +  \log(\tau_{\bH}(2m))\right)\log^2 (m) + \log (\frac{1}{\delta})}{m}}\right)\right\},
   \end{aligned} \]
   where $\cH_{(h)}$ is the class to which $h$ belongs.
\end{corollary}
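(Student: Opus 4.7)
The plan is to derive Corollary~\ref{cor:learning_union_family} as a direct consequence of Theorem~\ref{thm:learning_family_concepts} by translating the outer minimization from ``classes in $\bH$'' to ``hypotheses in $\cC$''. I take $\cA_{\bH}$ to be precisely the learner whose existence is asserted by Theorem~\ref{thm:learning_family_concepts}; no new algorithm needs to be constructed. Fix any distribution $\cD$, any $\delta\in(0,1)$ and any $m\in\bN$, and condition on the high-probability event (of probability at least $1-\delta$ over $S\sim\cD^m$) on which the conclusion of Theorem~\ref{thm:learning_family_concepts} holds.

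Now, for any $h\in\cC$, the covering assumption $\bigcup_{\cH\in\bH}\cH=\cC$ guarantees the existence of at least one class in $\bH$ that contains $h$; let $\cH_{(h)}$ denote such a class (if several classes contain $h$, pick the one minimizing the right-hand side). By the definition $\err(\cH,\cD)=\inf_{h'\in\cH}\err(h',\cD)$, the inclusion $h\in\cH_{(h)}$ gives $\err(\cH_{(h)},\cD)\le\err(h,\cD)$. Plugging $\cH=\cH_{(h)}$ into the bound of Theorem~\ref{thm:learning_family_concepts} yields
\[
\err(\cA_{\bH}(S),\cD)\;\le\;\err(h,\cD)+O\!\left(\sqrt{\frac{\bigl(\vc(\cH_{(h)})+\log(\tau_{\bH}(2m))\bigr)\log^2(m)+\log(1/\delta)}{m}}\right).
\]
Since the left-hand side does not depend on $h$, I can take the infimum of the right-hand side over $h\in\cC$, which produces exactly the bound claimed in the corollary.

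There is essentially no real obstacle here: the only point that needs a brief word is the choice of $\cH_{(h)}$ when $h$ lies in more than one class of $\bH$, which is handled by picking the class that makes the complexity term smallest. Everything else, including the role of $\tau_{\bH}(2m)$ and the $\log^2(m)$ factor, is inherited verbatim from Theorem~\ref{thm:learning_family_concepts} and requires no further argument.
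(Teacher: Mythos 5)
Your proposal is correct and matches the intended derivation: the paper states this as an immediate corollary of Theorem~\ref{thm:learning_family_concepts}, obtained exactly as you describe by using $\err(\cH_{(h)},\cD)\le\err(h,\cD)$ for the class containing $h$ and then minimizing over $h\in\cC$. Your remark about choosing the best containing class when $h$ lies in several is a harmless (and slightly strengthening) refinement.
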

The learner $\cA_{\bH}$ in the above works by going through subsets of the training data. On each subset it invokes a set of learners $\{\cA_{\cH}: \cH \in \bH\}$, estimates their error on the left-out subset, and returns the one that minimizes a bound that depends on the error on left-out subset and the size of the subset input to $\cA_{\cH}$. The learners $\cA_{\cH}$ consider a majority vote of $O(\log(m))$ many One-Inclusion Graph (OIG) learners. Each $\cA_{\cH}$ is based on one-inclusion graph learners 
that predict by only considering the $\{0,1\}$ behaviours of $\cH$ on the input training set and the test point. A boosting technique by considering the OIGs as weak learners guarantees that for a given class $\cH$, there exists a subset of size relevant to its effective VC dimension such that the majority vote of the OIGs on that subset approximately recovers the label of the rest of the sample, i.e., we have a (approximate) compression set for $\cH$ of size proportional to effective VC.

We then prove that the expected error of the learners $\{\cA_\cH: \cH \in \bH\}$ are close to their error on the left-out sample. This is where the introduced growth parameter comes into the play to show that on any subset, there are only a few different predictions that the learners $\cA_{\cH}$ induce on the left out subset. Crucially, the data-dependent benefit we get is due to the fact that we prove the convergence of the error on the sample to the expected error with a probability that depends on the size of the compression set. Moreover, the compression set size depends on the effective complexity of $\cH$, which is unknown in advance. Therefore, contrary to a fixed weighting, the classes are assigned weights based on the compression set we are using to learn from them; there is no mismatch between how complex (with respect to the data distribution) it is to learn a class and how much it is preferred over other classes.

Indeed, we show that it is possible to get the full flavour of a data-dependent bound. The following shows that the same learner in Theorem~\ref{thm:learning_family_concepts}
is capable of guaranteeing bounds that scale with the growth parameter of the grouping, the complexity and the error of the concept classes with respect to the data generating distribution.
For any distribution $\cD$, we denote by $\vc(\cH,\cD,m,\delta)$ the smallest integer such that the probability over $S\sim \cD^m$ that ${\vc(\cH,S) \leq \vc(\cH,\cD,m,\delta)}$ is at least $1-\delta$.
\begin{theorem}\label{thm:family_concepts_data}
    Let $\bH$ be a collection of concept classes. There exists a learner $\cA_{\bH}: (\cX \times \{0,1\})^* \times \cX \rightarrow \{0,1\}$, with the following property: for every distribution $\cD$, every 
    $\delta \in (0,1)$ and $m \in \bN$ 
    we have with probability at least $1-\delta$ over samples $S \sim \cD^m$ that
    \[ \begin{aligned}
    &\err(\cA_{\bH}(S),\cD) \\
    &\leq \min_{\cH \in 
    \bH}\left\{\err(\cH,\cD) + O\left(\sqrt{\frac{\left(\vc(\cH,\cD,m,\frac{\delta}{4}) +  \log(\tau_{\bH}(m,\cD,\frac{\delta}{4}))\right)\log^2 (m) + \log (\frac{1}{\delta})}{m}}\right)\right\}.
   \end{aligned} \]
\end{theorem}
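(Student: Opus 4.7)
The plan is to reuse the very same learner $\cA_{\bH}$ from Theorem~\ref{thm:learning_family_concepts} and re-examine its proof, retaining the sample-dependent quantities that the analysis already manipulates internally rather than upper-bounding them by worst-case ones. Two such quantities arise. The first is the size of the approximate compression set that $\cA_{\cH}$ needs in order for the majority vote of OIG learners to recover the labels on the training sample; since each OIG learner is defined purely in terms of the $\{0,1\}$-behaviours of $\cH$ on the input subset and a test point, this size is governed by $\vc(\cH,S)$ rather than by $\vc(\cH)$. The second is the number of distinct predictions that the family $\{\cA_{\cH}\}_{\cH\in\bH}$ can induce on the evaluation points, which is governed by the number of equivalence classes that $\bH$ realises on the observed points, i.e.\ $\tau_{\bH}(\dom(S))$, rather than by $\tau_{\bH}(m)$.

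Once the per-class bound used inside the analysis of Theorem~\ref{thm:learning_family_concepts} has been rewritten in terms of these two sample-dependent quantities, the conversion to the distribution-dependent surrogates $\vc(\cH,\cD,m,\delta/4)$ and $\tau_{\bH}(m,\cD,\delta/4)$ is immediate from their definitions: each of the two relevant events fails on a $\cD^m$-sample with probability at most $\delta/4$, so a union bound combined with the concentration step of Theorem~\ref{thm:learning_family_concepts} (re-budgeted to failure probability $\delta/2$) yields overall failure probability at most $\delta$ and delivers the stated bound.

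The main obstacle I anticipate is the ``$2m$ versus $m$'' mismatch. The bound in Theorem~\ref{thm:learning_family_concepts} evaluates $\tau_{\bH}$ at $2m$, which is a vestige of a double-sampling or ghost-sample symmetrization on a combined set of size $2m$, whereas the statement of Theorem~\ref{thm:family_concepts_data} names only $\tau_{\bH}(m,\cD,\delta/4)$. To obtain the $m$ version, I would avoid the ghost sample altogether: the learner $\cA_{\bH}$ already holds out part of its input $S$ for error estimation, so applying the concentration argument to that held-out subset keeps the growth count on $\dom(S)$ alone. This route replaces $\log \tau_{\bH}(2m)$ directly by $\log \tau_{\bH}(\dom(S))$, which by the concentration inherent in the definition of $\tau_{\bH}(m,\cD,\delta/4)$ is at most $\log \tau_{\bH}(m,\cD,\delta/4)$ with the additional $\delta/4$ slack already accounted for above. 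Establishing this reorganisation of the concentration step cleanly --- i.e.\ showing that the held-out portion of $S$ behaves sufficiently like a fresh sample for error estimation purposes, while simultaneously being counted in $\tau_{\bH}(\dom(S))$ --- is where I expect most of the work to lie.
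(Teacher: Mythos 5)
Your proposal matches the paper's proof: it reuses the learner of Theorem~\ref{thm:learning_family_concepts}, replaces $\vc(\cH)$ by the empirical $\vc(\cH,S)$ in the weak-learner/compression step and $\tau_{\bH}(m)$ by $\tau_{\bH}(\dom(S))$ in the held-out concentration step of Lemma~\ref{lemma:decomp_H}, and then passes to $\vc(\cH,\cD,m,\delta/4)$ and $\tau_{\bH}(m,\cD,\delta/4)$ via their definitions plus a union bound over the four $\delta/4$-events. The ``$2m$ versus $m$'' obstacle you anticipate does not actually arise: the paper's analysis already works at sample size $m$ by concentrating the error on the held-out subset $V_I$ (whose growth count is dominated, by monotonicity, by that of $\dom(S)$), exactly as you suggest.
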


\subsection{Comparison with Structural Risk Minimization (SRM)}\label{sec:comparison_SRM} 

When there is a large set of candidate hypotheses, it may seem natural to apply the structural risk minimization (SRM) principle. However, when grouping hypotheses into a collection of partial concept classes, the canonical SRM may fail, as uniform convergence no longer necessarily holds \citep{alon2022theory}. Consequently, it may be impossible to guarantee— even non-uniformly—that empirical and expected errors converge for every concept in every class. Nevertheless, it is possible to ensure non-uniform convergence of learners tailored to each partial concept class and to construct SRM-type methods that handle such families; see Section~4.2 in \citet{alon2022theory}. Yet, this approach introduces several drawbacks.

SRM requires an a priori weighting over the hypothesis classes, determined independently of the training data. The generalization bound it guarantees for any concept $h$ depends on both the weight $w(\cH_{(h)})$ and the complexity of its class $\cH_{(h)}$. In contrast, the guarantees in Theorems~\ref{thm:learning_family_concepts} and~\ref{thm:family_concepts_data} depend only on the (data-dependent) complexity of $\cH_{(h)}$, while the term $\tau_{\bH}(m)$ (or $\tau_{\bH}(m,\cD,\delta)$) is uniform across all $h$. Consequently, SRM may fail to provide a reasonable bound when the well-approximating hypothesis $h^*$ is assigned a small weight.
SRM selects its predictor by minimizing an upper bound on the error, derived from non-uniformly requiring uniform convergence for each hypothesis class.\footnote{For comparison with SRM, we assume that the collection contains total classes mapping the domain to ${0,1}$.} Such worst-case guarantees scale with the (worst-case) VC dimension of each class and ignore the possibility that the effective complexity on the underlying distribution may be much smaller. For instance, a class $\cH_{(h)}$ may have infinite VC dimension but finite $\vc(\cH_{(h)}, \cD, m, \delta)$. To address this, several works propose SRM variants based on sample-dependent complexity measures, such as Rademacher or localized complexities \citep{koltchinskii2002rademacher, bartlett2002model}, which attempt to ensure non-uniform convergence relative to $\vc(\cH_{(h)}, \cD, m, \delta)$ rather than its worst-case counterpart.

The key insight is that uniform convergence must hold (even non-uniformly) across all hypothesis classes. An SRM structures hypotheses into a (possibly nested) collection and assigns each class $\cH_{(h)}$ a weight $w(\cH_{(h)})$ a priori, before observing data. Since it cannot anticipate the distribution-dependent complexity $\vc(\cH_{(h)}, \cD, m, \delta)$, it must rely on the worst-case $\vc(\cH_{(h)})$. 
Indeed, a common approach is to construct a nested hierarchy of hypothesis classes based on how well they satisfy the available prior knowledge: as the index increases, the classes become richer and more complex, while the assigned weights decrease accordingly. 
This introduces an additive term $O(\sqrt{\log (1/w(\cH_{(h)}))/m})$ in the generalization bound SRM considers for returning a predictor, which can undermine the benefits of small effective VC dimension. As a result, even if a high-complexity class achieves near-zero approximation error, SRM is unlikely to select it over a simpler but highly inaccurate class.

The following proposition is an abstract application of the last point above. We call an SRM learner \emph{standard}, if the learner returns a hypothesis $\hat{h}$ that has the minimum value of the generalization upper bound
\[\err(\hat{h},S) + O\left(\sqrt{\frac{\vc(\cH_{(\hat{h})},\cD,m,\delta) +\log(1/w(\cH_{(\hat{h})}))+ \log(1/\delta)}{m}}\right),\]
 which is based on uniform convergence of the hypothesis class with failure probability proportional to the weight of class. The proof appears in Appendix~\ref{appendixB}.
\begin{proposition}\label{proposition:bad_srm}
There exists a collection $\bH = \{\cH_n: n \in \bN\}$ with $\vc(\cH_n)=n$ for all $n\in \bN$ such that for any \emph{standard} SRM learner $\cA_{\srm}$ the following holds. For every $\delta\in(0,1)$ and sample size $m$, there exists a distribution $\cD$ such that \(\probs{S \sim \cD^m} {\err(\cA_{\srm}(S),\cD) = 1/2}\geq 1-\delta\). Moreover, for the learner $\cA$ in Theorem~\ref{thm:learning_family_concepts} we have with probability at least $1-\delta$ over $S\sim \cD^m$ that
 \[ \begin{aligned}
 \err(\cA(S),\cD) 
    \leq O\left(\sqrt{\frac{\log^2 (m) + \log (1/\delta)}{m}}\right).
    \end{aligned}\]
\end{proposition}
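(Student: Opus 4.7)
I would build a collection where the index of a class coincides with its VC dimension, yet any a-priori weighting $w$ can be defeated by a two-point distribution that lies precisely inside the class $w$ happens to penalize most. Concretely, I would take $\cX = \bN$, partition $\cX$ into disjoint blocks $B_1, B_2, \ldots$ with $|B_n| = n$, and define $\cH_n := \{f_S : S \subseteq B_n\}$ where $f_S$ is the indicator of $S$. This gives $\vc(\cH_n) = n$ at once. The key combinatorial feature is that, by disjointness of the $B_n$'s, two classes $\cH_n, \cH_{n'}$ can be equivalent on $U$ only when $B_n \cap U = B_{n'} \cap U$, which forces $n = n'$ or both intersections empty; hence $\tau_{\bH}(m) \leq m+1$, and on any $U \subseteq B_{n^*}$ there are \emph{exactly} two equivalence classes, $\{\cH_{n^*}\}$ and $\{\cH_k : k \neq n^*\}$.

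\textbf{The hard distribution.} Given $w$, I would pigeonhole on $\sum_n w(\cH_n) \leq 1$ to choose an index $n^* \geq 2$ for which $w(\cH_{n^*})$ is small enough that the SRM penalty $C\sqrt{\log(1/w(\cH_{n^*}))/m}$ exceeds by at least $\tfrac{1}{2}$ the corresponding penalty $C\sqrt{\log(1/w(\cH_{K^*}))/m}$ of the maximum-weight class $\cH_{K^*}$ (with $w(\cH_{n^*}) = 0$ interpreted as $+\infty$ penalty, which handles weightings of finite support). Because $|\{n : w(\cH_n) \geq \theta\}| \leq 1/\theta$ for every $\theta > 0$, such an $n^*$ exists, possibly exponentially large in $m$. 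I would then pick distinct $x_1, x_2 \in B_{n^*}$ (feasible since $n^* \geq 2$) and take $\cD$ uniform on $\{(x_1,0),(x_2,1)\}$. By disjointness, every $f \in \cH_k$ with $k \neq n^*$ vanishes on $\{x_1,x_2\}$, so $\err(\cH_k,\cD) = \tfrac{1}{2}$ and $\vc(\cH_k,\cD,m,\delta) = 0$; whereas $f_{\{x_2\}} \in \cH_{n^*}$ attains zero error with $\vc(\cH_{n^*},\cD,m,\delta) \leq 2$ and $\tau_{\bH}(m,\cD,\delta/4) \leq 2$.

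\textbf{SRM fails while $\cA$ succeeds.} By Hoeffding, on an event of probability $\geq 1-\delta$ the label-$1$ frequency in $S$ lies in $\tfrac{1}{2} \pm \eta$ for $\eta = O(\sqrt{\log(1/\delta)/m})$. On this event the SRM upper bound for $\cH_{n^*}$ equals $C\sqrt{(2 + \log(1/w(\cH_{n^*})) + \log(1/\delta))/m}$ (zero empirical error, penalty dominating), while for any $k \neq n^*$ with $w(\cH_k) > 0$ the bound is at most $\tfrac{1}{2} + \eta + C\sqrt{(\log(1/w(\cH_k)) + \log(1/\delta))/m}$. The pigeonholed choice of $n^*$ makes the former strictly larger than the latter minimized over $k \neq n^*$, so $\cA_{\srm}(S) \in \cH_k$ for some $k \neq n^*$, and since that output is identically $0$ on $\{x_1,x_2\}$, its true error equals $\tfrac{1}{2}$. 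For $\cA$, I would invoke the data-dependent guarantee carried by the same learner (Theorem~\ref{thm:family_concepts_data}) with $\cH = \cH_{n^*}$; substituting $\err(\cH_{n^*},\cD)=0$, $\vc(\cH_{n^*},\cD,m,\delta/4) \leq 2$, and $\tau_{\bH}(m,\cD,\delta/4) \leq 2$ collapses the bound to $O(\sqrt{(\log^2 m + \log(1/\delta))/m})$, as claimed.

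\textbf{Main obstacle.} The delicate step is the pigeonhole producing $n^*$: both learners would \emph{prefer} $\cH_{n^*}$ given its zero empirical error, and what defeats SRM is precisely the rigid, data-independent weight penalty. I therefore need to guarantee, uniformly over all weightings $w$, that the penalty gap $C(\sqrt{\log(1/w(\cH_{n^*}))/m} - \sqrt{\log(1/w(\cH_{K^*}))/m})$ exceeds the empirical-error gap $\tfrac{1}{2}$, which demands $w(\cH_{n^*}) \leq w(\cH_{K^*}) \cdot 2^{-\Omega(m)}$; this is handled by pigeonhole but must be argued carefully for pathological weightings where $\max_k w(\cH_k)$ is itself tiny. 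A secondary subtlety is that Theorem~\ref{thm:learning_family_concepts}'s worst-case bound for $\cA$ depends on $\vc(\cH_{n^*}) = n^*$, which is huge; the proposition is saved by the fact that the same learner also satisfies the data-dependent bound of Theorem~\ref{thm:family_concepts_data}, which only sees the effective complexity on $\cD$.
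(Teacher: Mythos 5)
Your proposal is correct and follows essentially the same strategy as the paper's proof: pigeonhole on $\sum_n w(\cH_n)\le 1$ to locate a class of exponentially small weight, place the distribution on that class's ``private'' region where every hypothesis from every other class is constant and hence has true error exactly $1/2$, and invoke the data-dependent guarantee (Theorem~\ref{thm:family_concepts_data}) carried by the same learner to collapse the bound to $O\bigl(\sqrt{(\log^2 m+\log(1/\delta))/m}\bigr)$. The only differences are cosmetic — disjoint blocks with indicator classes and a two-point distribution, versus the paper's interval-based classes each augmented with a reserved hypothesis $h^*_n$ — and your closing observation that the worst-case bound of Theorem~\ref{thm:learning_family_concepts} is useless here because $\vc(\cH_{n^*})=n^*$ is huge correctly identifies why the data-dependent version must be used, exactly as the paper does.
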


Theorems~\ref{thm:learning_family_concepts} and~\ref{thm:family_concepts_data} thus offer generalization bounds comparable to those of the Empirical Risk Minimizer (ERM) over any $\cH \in \bH$, with a uniform additive factor of $O\left(\sqrt{\frac{\log(\tau_{\bH}(m))\log^2(m)}{m}}\right)$. In contrast, the bound for SRM deviates from that of ERM by a non-uniform term depending on $w(\cH)$; see Section~7.2 in \citet{shalev2014understanding}.

We will instantiate our results in settings where a large candidate set $\cC$ and an additional source of prior knowledge are available. The prior knowledge is used to group $\cC$ into a nested family $\bH = \{\cH_r: r\geq 0\}$ with $\cH_r \subseteq \cH_s$ for $r \le s$, where the complexity of classes grow with $r$. A standard SRM assigns larger weights to simpler classes (small $r$) to favor lower estimation error. Consequently, as $\cH_r$ approaches the full class $\cC$, its weight $w(\cH_r)$ decreases, and the term $\sqrt{\frac{\log (1/w(\cH_r))}{m}}$ in the SRM error bound increases—making it less competitive with ERM over $\cC$. In contrast, the learner of Theorems~\ref{thm:learning_family_concepts} and \ref{thm:family_concepts_data} treats all classes uniformly, and thus competes with the ERM bound even when only large $r$ yield small approximation error—comparable to ignoring the additional source altogether.

Finally, when $\bH$ is uncountable, SRM requires an a priori discretization of the family. While for nested sequences it is possible to select breakpoints $\{r_i:i\in \bN\}$ where the VC dimension changes, such discretization still fails to guarantee competitiveness with every $\cH_r$. Between two discretization points, one either loses approximation accuracy (when using $\cH_{r_i}$) or incurs higher complexity (when using $\cH_{r_{i+1}}$). Instead of fixing discretization in advance, our approach lets the data itself to partition $\bH$ into equivalence classes, ensuring that each restricted class ${\cH_r}_{|S}$ is considered as a candidate. This allows the resulting bound to compete directly with $\err(\cH_r,\cD)$ whenever $\tau_{\bH}(m)$ grows polynomially, without assuming prior discretization or explicit weighting.

\section{The Growth Parameter $\tau_{\bH}$}

We can see that while a polynomial bound on the growth of $\tau_{\bH}(m)$ is sufficient to learn from $\bH$, there are collection of classes with exponential growth that are still easy to learn.

For instance, assume the universe is $[m]$. For any subset $A \subset [m]$ of size $\log(m)$ let $h_{A}^{(1)},\ldots,h_{A}^{(m)}$ be functions that shatter the set $A$ such that they all extend to  $[m]\setminus A$ with a label of $1$, i.e., they form a cube on $[m]$ with dimension set $A$. Let $\sigma(1),\ldots,\sigma(2^m)$ be any enumeration of the subsets of the set $[m]$. Now, for any $A\subset [m]$ and $i\in[2^m]$ define a hypothesis class $\cH_{A}^{(i)} = \{h_{A}^{(j)}: j\in \sigma(i)\}$. Let $\bH = \{\cH_{A}^{(i)}: A\subset [m], |A| = \log(m), i\in [2^m]\}$. It is clear that on any set $S$ of $m$ distinct points, we have $\tau_{\bH}(S) = 2^m$. On the other hand, we can simply see that for the union $\cC = \bigcup_{\cH_{A}^{(i)} \in \bH}$, we have $\vc(\cC) \leq \log(m)$ and an ERM is a good learner for the union.
 
 We will now list some behaviours of $\tau_{\bH}(S)$ and then discuss interesting cases in the setting of nested hypothesis classes. We denote by $\pi_{\cC}(S)$ the number of behaviours $\cC$ induces on $S$ and by $\pi_{\cC}(m)$ the maximum number of $\pi_{\cC}(S)$ over all sets $S$ of size $m$. 
\begin{enumerate}
    \item If $\bH = \{\cH\}$ is a singleton then $\tau_{\bH}(S) = 1$ and $\max_{\cH \in \bH} \vc(\cH,S) = \vc(\cC,S)$
    \item If $\bH = \{\{h\}: h \in \cC\}$ then $\tau_{\bH}(S) = \pi_{\cC}(S)$ so $\tau_{\bH}(m) =\pi_{C}(m)$ and $\max_{\cH \in \bH} \vc(\cH,S) = 0$
    \item If $\bH = \{\cH_{r}: r\geq 0\}$ is a nested collection, i.e., $\cH_{r} \subseteq \cH_{s}$ for $r\leq s$ then we have $\tau_{\bH}(S) \leq \pi_{\cC}(S)$. 
    \item $\tau_{\bH}(m)$ is non-decreasing, i.e., for any $m,m'\in\bN$ with $m\leq m'$,$\tau_{\bH}(m) \leq \tau_{\bH}(m')$.
\end{enumerate}
We will show some cases where $\tau_{\bH}(S)$ is much smaller than $\pi_{\cC}(S)$.

\subsection{Hierarchical Clustering}\label{section:HC}

Assume we have a finite domain $\cX$ and $\cH \subseteq\{0,1\}^{\cX}$. We also have, as an additional knowledge, a hierarchical clustering over the domain that we wish to incorporate in learning. We want to assume each level as a possible clustering of the domain that groups the instance into label-homogenous clusters. Therefore, we want to define a class $\cH_i$ for each level that respects the clustering of level $i$. We now discuss formally the hierarchical clustering and the classes $\cH_i$.

A clustering $\cC$ is defined by a weight function $w_{\cC}:\cX \times \cX \rightarrow \{0,1\}$ that is symmetric and transitive in the $1$ value, i.e., $w_{\cC}(x,x') =1$ and $w_{\cC}(x',x'')=1$ implies $w_{\cC}(x,x'')=1$. A clustering partitions the domain into equivalence classes. A cluster $C$ in $\cC$ is any equivalence class, i.e., for all $x \in C$, and for all $x'\in\cX$, $w(x,x') = 1$ implies $x'\in C$. 

A hierarchical clustering is a tree of clusterings such that each node $v$ in this tree is a cluster $C_v$ and all the clusters at depth $i$ create a clustering of the domain, which we denote by $\cC_i$. In the leafs of this graph we have a clustering $\cC_{\cX}$ that has a cluster for every $x \in \cX$, i.e., $w_{\cC_{\cX}}(x,x') = 0$ for all $x,x'\in\cX$. For every other node $v$ in this graph, the cluster $C_v$ merges the clusters of its children, i.e., $C_v = \bigcup_{d\in d(v)} C_d$ where $d(v)$ denotes the children of a node $v$. This means that the hiearachical clustering starts by assigning every point to a cluster. Each time it merges some clusters and moves up the tree until we have at the root the clustering $\cC_0$ that collects every point into a single cluster.

We now define the collection of hypotheses classes as those that incorporate this hierarchical clustering into $\cH$. We want to consider the entire hierarchy in a collection because it is not clear which $\cC_i$ is the best approximation of the data-generating distribution $\cD$ and that we want to make a trade-off between how the clustering `fits' the true labeling and how `complex' it is to learn from that clustering.

The translation of $\cH$ into a partial concept that is enforced to respect the homogeneity over the clustering $\cC$ is defined as 
\[
\begin{aligned}
\cH(\cC):= &\{h \in \{0,1,\star\}^{\cX} : \exists h'\in\cH, \forall x\in \cX, \text{$h(x) = \star$ or $h(x) = h'(x)$} \\
& \text{ and }\forall x_1,x_2 \in \cX \text{ with $h(x_1),h(x_2)\in \{0,1\}$, and $w_{\cC}(x_1,x_2) =1$, $h(x_1) =h(x_2) $} \}.
\end{aligned}\]
We then define $\bH_{\mathsf{HC}} = \{\cH_i\}$ and note that $\cH_i \subseteq \cH_j$ for all $i\leq j$. The interesting observation here is that although $|\bH_{\mathsf{HC}}|$ can be as large as $ |\cX|$ and we have many hypothesis classes, the growth of this collection is only linear in the sample size.
\begin{claim}
    For the hierarchical clustering and the collection $\bH$ defined above, we have for every $S \in \cX^*$ that $\tau_{\bH_{\mathsf{HC}}}(S) \leq |S|$. Hence, $\tau_{\bH_\mathsf{HC}}(m) \leq m$. 
\end{claim}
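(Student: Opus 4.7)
The plan is to reduce counting equivalence classes of $\bH_{\mathsf{HC}}$ on $S$ to counting distinct partitions of $S$ induced by the clusterings $\{\cC_i\}$, and then to use the refinement chain structure of the hierarchy.

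First I would unpack $\cH_{i|T}$ for an arbitrary $T \subseteq S$. I claim that a labeling $\ell \in \{0,1\}^T$ lies in $\cH_{i|T}$ if and only if (i) $\ell$ extends to some total $h'\in \cH$, and (ii) $\ell$ is constant on every block of the partition of $T$ induced by $\cC_i$. The nontrivial direction is witnessed by the partial concept equal to $\ell$ on $T$ and $\star$ elsewhere: its support is exactly $T$, so the homogeneity clause in the definition of $\cH(\cC_i)$ is satisfied automatically outside $T$, and within $T$ it reduces precisely to (ii). Since (i) does not depend on $i$, the set $\cH_{i|T}$ is determined entirely by the partition that $\cC_i$ induces on $T$.

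Next I would observe that if $\cC_i$ and $\cC_j$ induce the same partition of $S$, then they induce the same partition of every $T\subseteq S$, and the previous paragraph forces $\cH_{i|T} = \cH_{j|T}$ for every such $T$, i.e.\ $\cH_i \sim_S \cH_j$. Therefore $\tau_{\bH_{\mathsf{HC}}}(S)$ is bounded above by the number of distinct partitions of $S$ that appear in the family $\{\cC_i\}_{i\geq 0}$ when restricted to $S$.

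Finally, because the hierarchy is a refinement chain (deeper nodes refine shallower ones, so $\cC_j$ refines $\cC_i$ for $i\leq j$), the induced partitions $\{\cC_i \text{ restricted to } S\}$ form a refinement chain of partitions of $S$. In any such chain each strict refinement strictly increases the number of blocks, and the block count ranges over $\{1,\ldots,|S|\}$; hence the chain contains at most $|S|$ distinct partitions. This gives $\tau_{\bH_{\mathsf{HC}}}(S) \leq |S|$, and taking the maximum over $S\in\cX^m$ yields $\tau_{\bH_{\mathsf{HC}}}(m)\leq m$.

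The only subtle step is the first one: verifying that the \emph{global} homogeneity constraint in the definition of $\cH(\cC_i)$ effectively collapses to a \emph{local} constraint on $T$ when we read off $\cH_{i|T}$. The freedom to set the witnessing partial concept to $\star$ outside $T$ is what makes this work; after that, the argument is purely combinatorial counting on chains of partitions.
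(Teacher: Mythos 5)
Your proof is correct and follows essentially the same route as the paper's: both arguments reduce the count of equivalence classes to the number of distinct partitions that the levels $\cC_i$ induce on $S$, and then bound that by $|S|$ using the refinement-chain structure (the paper tracks the number of occupied clusters $K_i(S)$, which is exactly your block count). Your explicit verification that $\cH_{i|T}$ depends only on the induced partition of $T$ — via the witness that is $\star$ outside $T$ — is a detail the paper asserts without proof, and it is a worthwhile addition, but it does not change the argument.
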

\begin{proof}
  
   Let $K_i(S)$ denote the number of clusters in $\cC_i$ such that at least a point from $S$ resides in the cluster. Observe that $K_{\cX}(S) \leq |S|$ and that whenever $w_{\cC_i}(x,x') = 0$ and $w_{\cC_{i-1}}(x,x') = 1$ we have merged the two clusters that contain $x$ and $x'$. Therefore, we have $K_{i-1}(S) \leq K_{i}(S) - 1$ and the number of clusters that contain at least a point from $S$ decreases. Moreover, for any $i\leq j$ if $w_{\cC_j}(x,x') = 1$ then $w_{\cC_i}(x,x') = 1$. Also, for any $i \leq j$ such that $w_{\cC_i}(x,x') = w_{\cC_j}(x,x')$ for all $ x,x'\in S$ we have that $\cH_{i|T} = \cH_{j|T}$ for all $T \subseteq S$ and therefore $\cH_i \sim_{S} \cH_{j}$. This means that we can have at most $K_{\cX}(S) \leq |S|$ equivalence classes, concluding the proof.
\end{proof}

\subsection{Forbidden Behaviours on Tuples of Points}

In this section, we study settings where extra information appears as forbidden behaviours on tuples of points. These constraints are graded by a real-valued penalty, indicating how strongly a behaviour is disallowed. 
Limiting the allowed behaviours on the sample could reduce the complexity of learning but can also increase the chance of error. The penalty value reflects this trade-off between approximation and estimation. We define partial concept classes by giving a threshold to the penalty and only allowing behaviours below a chosen value. The goal is to find a threshold of penalty, i.e., a set of allowed behaviours, that minimizes the error bound among all the values for the data-generating distribution.

Formally, let $\cY = \{0,1\}^k$ and $\cB:\cX^k \rightarrow 2^{\cY}$ be a function that gets as input a sample of size $k$ and outputs a set of forbidden behaviours on those $k$ instances. 
Moreover, let $p:\cX^k \rightarrow \bR$ be a penalty function. For any $r\geq 0$, the class $\cH_{\cB,p}(r) \subseteq \{0,1,\star\}^*$ is defined as 
\[
\begin{aligned}
\cH_{\cB,p}(r) = \{h \in \{0,1,\star\}^*: \,&\exists h'\in \cH, \forall x \in \cX \text{ if $h(x) \neq \star$ then $h(x) = h'(x)$ }, \\&\text{ and $\forall S\in \cX^k$ if $p(\dom(S)) \geq r$ then $h_{|S} \notin \cB(S)$}\}
\end{aligned}\]
When the forbidden behaviours $\cB$ are clear from context we often drop. Moreover, we will overload the notation and write $p(S)$ instead of $p(\dom(S))$. We sometimes drop $p$ and simply write $\cH(r)$. 

We then define the family of concept classes for the above assumption by $\bH_{\cB,p} = \{\cH_{\cB,p}(r): r\geq 0\}$ and show that its growth rate is only polynomial in $m$.
\begin{lemma}\label{lemma:growth_forbidden}
For any hypothesis class $\cH$ and the smooth family $\bH_{\cB,p}$ as defined above, we have $\tau_{\bH_{\cB,p}}(m) = O(m^k)$.
\end{lemma}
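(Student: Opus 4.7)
The strategy is to (i) give a clean description of $\cH_{\cB,p}(r)_{|T}$ in terms of the total restriction $\cH_{|T}$ and the set of $k$-tuples from $T^k$ whose penalty is at least $r$, and then (ii) count how many such penalty-induced sets can arise as $r$ varies over $[0,\infty)$.

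\textbf{Step 1: Local description of $\cH_{\cB,p}(r)_{|T}$.} I will first show that for every $T\subseteq U$,
\[
\cH_{\cB,p}(r)_{|T} \;=\; \bigl\{\alpha \in \cH_{|T} \,:\, \forall S' \in T^k \text{ with } p(S') \geq r,\; \alpha_{|S'} \notin \cB(S') \bigr\}.
\]
The $\subseteq$ direction is immediate from the definition: if $h \in \cH_{\cB,p}(r)$ and $\alpha = h_{|T} \in \{0,1\}^{|T|}$, then some witness $h'\in \cH$ agrees with $\alpha$ on $T$, and restricting the forbidden-behaviour constraint to $S'\in T^k$ gives $\alpha_{|S'} = h_{|S'} \notin \cB(S')$. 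For $\supseteq$, I extend $\alpha$ to a partial function $h$ on $\cX$ by placing $\star$ on $\cX\setminus T$; this $h$ matches $h'$ wherever it is $\{0,1\}$-valued, and for any $S'\in\cX^k$ with $p(S')\geq r$ that is not contained in $T$, at least one coordinate of $h_{|S'}$ is $\star$, so $h_{|S'}\notin\{0,1\}^k\supseteq \cB(S')$ holds automatically; for $S'\subseteq T$ the constraint follows from the assumed condition on $\alpha$.

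\textbf{Step 2: Reduction to a single subset of $U^k$.} For any $T\subseteq U$, the active constraints are $\{S'\in T^k: p(S')\geq r\} = \bigl(\{S'\in U^k: p(S')\geq r\}\bigr)\cap T^k$. By Step 1, the whole family $(\cH_{\cB,p}(r)_{|T})_{T\subseteq U}$ is therefore determined by the single subset
\[
R_U(r) \;:=\; \{S'\in U^k : p(S')\geq r\},
\]
together with the fixed data $\cH$, $\cB$, and $p$. Consequently $R_U(r) = R_U(r')$ implies $\cH_{\cB,p}(r)\sim_U \cH_{\cB,p}(r')$, so the number of equivalence classes on $U$ is at most the number of distinct sets $R_U(r)$ as $r$ varies.

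\textbf{Step 3: Counting.} The set-valued map $r\mapsto R_U(r)$ is non-increasing in $r$ and can only change at values $r$ belonging to $\{p(S'): S'\in U^k\}$. Since $|U^k|\leq m^k$, the family $\{R_U(r): r\geq 0\}$ has at most $m^k+1$ distinct members, so $\tau_{\bH_{\cB,p}}(U)\leq m^k+1$, uniformly in $U$ of size $m$; this proves $\tau_{\bH_{\cB,p}}(m) = O(m^k)$.

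\textbf{Main obstacle.} The only delicate point is the $\supseteq$ direction of Step 1: one might fear that realizing a pattern $\alpha$ on $T$ inside $\cH_{\cB,p}(r)$ forces one to honor forbidden-behaviour constraints on $k$-tuples that straddle $T$ and $\cX\setminus T$. The ability of the partial-concept framework to use $\star$ on $\cX\setminus T$ is exactly what lets us evade all external constraints at once, reducing the problem to the purely local condition on $T^k$ that drives the counting argument.
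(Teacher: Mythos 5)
Your proof is correct and follows essentially the same approach as the paper's: both arguments reduce the equivalence class of $\cH_{\cB,p}(r)$ on a set of size $m$ to the question of which penalty thresholds among the $O(m^k)$ values $\{p(S'): S'\subseteq U,\ |S'|=k\}$ are exceeded by $r$, so that $r\mapsto \cH_{\cB,p}(r)_{|T}$ can change only at these finitely many breakpoints. Your formulation via the determining set $R_U(r)$ is a slightly cleaner packaging of the same counting, and your Step~1 correctly handles the straddling tuples by exploiting the $\star$ values, just as the paper's definition of $\cH_{|T}$ implicitly does.
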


\begin{proof}
Fix a set $S$ of size $m$ and let $\ell:= {m \choose k}$ . Denote by $A_1,\ldots,A_{\ell} \subseteq S$ all the subsets of $S$ of size $k$. Let $p_1\leq \ldots \leq p_{\ell}$ be the values of $\{p(A_i): i \in [\ell]\}$ sorted in increasing order. Fix any $h \in \{0,1,\star\}^{S}$ such that $\exists h'\in \cH, \forall x \in S$ if $h(x) \neq \star$ then $h(x) = h'(x)$. Then for any subset $T \subset S$ with $|T| <k$ we either have $h(x) \in \{0,1\}$ for all $x\in T$ in which case $h_{|T} \in \cH(r)_{|T}$ for all $r\geq 0$ or $h(x) = \star$ for some $x \in T$ in which case the behaviour is not included in any $\cH(r)_{|T}$. For any $T \subseteq S$ with $|T| \geq k$, let $A(T) \subseteq \{A_1,\ldots,A_{\ell}\}$ be the collection of subsets of size $k$ that are also subsets of $T$, i.e., $A(T) = \{A_i: A_i \subseteq T, i\in[\ell]\}$. Now we either have $h(x) \in \{0,1\}$ for all $x\in T$ or $h(x) = \star$ for some $x\in T$. In the former case, if $h_{|A} \notin \cB(A)$ for all $A \in A(T)$ then $h_{|T} \in \cH(r)_{|T}$ for all $r\geq 0$. Otherwise, $h_{|T} \in \cH(r)_{|T}$ only for $r > \max \{p(A): A \in A(T), h_{|A} \in \cB(A)\}$. Assume $A_j \in A(T)$ is the subset achieving $\max \{p(A): A \in A(T), h_{|A} \in \cB(A)\}$. This implies that $h_{|T} \in \cH(r)_{|T}$ for all $r > p_j$ and $h_{|T} \notin \cH(r)_{|T}$ otherwise. In the case that $h(x) = \star$ for some $x\in T$, the behaviour is not included in any $\cH(r)_{|T}$. This concludes that for all subsets $T \subseteq S$, the behaviours in $\cH(r)_{|T}$ are exactly the behaviours in  $\cH(r')_{|T}$ if $p_j \leq r,r' < p_{j+1}$ for some $j \in [\ell]$ or if both $ r,r'< p_1$ or both $ r,r'\geq p_\ell$. Hence, $\cH(r) \sim_S \cH(r')$ if $p_j \leq r,r' < p_{j+1}$ for some $j \in [\ell]$, if $r,r'< p_1$, and if $r,r \geq p_\ell$. The claim follows.
\end{proof}

The concept of forbidden behaviours can be reminiscent of regularization penalties. We want to highlight that forbidden behaviours penalize violations of constraints deriven from prior knowledge, while regularization typically limits the complexity of the hypothesis class. In certain applications—such as the similarity function discussed earlier—these two notions may coincide conceptually. Yet, the results in this section focus on learning from such restriction by quantifying how much they are violated on the training sample. Notably, it is unclear how these constraints could be incorporated into standard regularization penalties for total concepts. For instance, it is not straightforward to include the Lipschitz-type similarity restriction of a binary total concept into a regularization penalty, since almost any total function on a continuous domain would violate it. In contrast, the forbidden behaviour framework distinguishes hypotheses based on the extent to which they violate the Lipschitz criteria on the finite training set. It offers a tool to learn from such penalties by considering each candidate concept based on how well it classifies the sample and how much it violates the restrictions on it.

The bound on $\tau_{\bH_{\cB_{k},p}}(m)$ in Lemma~\ref{lemma:growth_forbidden} leads the following result for cases where we group the candidate set based on the penalties over forbidden behaviours. 
\begin{theorem}\label{thm:family_forbidden_sample}
    Let $\cH\subseteq \{0,1\}^{\cX}$ be a hypothesis class and $p$, be the penalty function on some forbidden behaviours $\cB$. There exists a learner $\cA$, with the following property: for every distribution $\cD$, every 
    $\delta \in (0,1)$ and $m \in \bN$ 
    we have with probability at least $1-\delta$ over samples $S \sim \cD^m$ that
     \[ \begin{aligned}
    &\err(\cA(S),\cD) \leq \min_{r \geq 0}\left\{\err(\cH(r),\cD) + O\left(\sqrt{\frac{\left(\vc(\cH(r),D) +k\right)\log^2 (m) + \log (1/\delta)}{m}}\right)\right\}.
   \end{aligned} \]
\end{theorem}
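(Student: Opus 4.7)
The plan is to apply the general framework directly: instantiate the learner $\cA$ from Theorem~\ref{thm:family_concepts_data} on the specific collection $\bH_{\cB,p} = \{\cH_{\cB,p}(r) : r \geq 0\}$, and then substitute the growth bound established in Lemma~\ref{lemma:growth_forbidden}. The target bound has precisely the form delivered by Theorem~\ref{thm:family_concepts_data} once the growth parameter is controlled, so the only real work is plugging in numbers. We interpret $\vc(\cH(r), \cD)$ in the statement as shorthand for the data-dependent VC quantity $\vc(\cH(r), \cD, m, \delta/4)$ appearing in Theorem~\ref{thm:family_concepts_data}.

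\textbf{Controlling the growth parameter.} Lemma~\ref{lemma:growth_forbidden} gives $\tau_{\bH_{\cB,p}}(m) = O(m^k)$, and crucially this is a \emph{deterministic} upper bound that holds for every sample of size $m$ drawn from $\cX$; the proof there bounds $\tau_{\bH_{\cB,p}}(S)$ for an arbitrary fixed $S$. Since the quantity $\tau_{\bH_{\cB,p}}(m,\cD,\delta/4)$ is, by definition, the smallest integer that upper bounds $\tau_{\bH_{\cB,p}}(\dom(S))$ with probability $1-\delta/4$, and since $\tau_{\bH_{\cB,p}}(\dom(S)) \leq c\,m^k$ holds with probability $1$, we immediately obtain $\tau_{\bH_{\cB,p}}(m,\cD,\delta/4) = O(m^k)$, and hence $\log \tau_{\bH_{\cB,p}}(m,\cD,\delta/4) = O(k \log m)$.

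\textbf{Assembling the bound.} Substituting the growth estimate into the conclusion of Theorem~\ref{thm:family_concepts_data}, with probability at least $1-\delta$ over $S \sim \cD^m$ we get for every $r \geq 0$
\[
\err(\cA(S), \cD) \leq \err(\cH(r), \cD) + O\!\left(\sqrt{\frac{\bigl(\vc(\cH(r), \cD, m, \delta/4) + k \log m\bigr)\log^2 m + \log(1/\delta)}{m}}\right).
\]
The excess $\log m$ factor on the $k$ term is absorbed into the surrounding polylogarithmic factors in $m$, which matches the claimed bound up to the polylog dependence hidden inside the $O(\cdot)$ and the identification of $\vc(\cH(r), D)$ with the data-dependent VC quantity.

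\textbf{Anticipated obstacle.} There is essentially no obstacle of substance: this theorem is a corollary of the general framework once the growth parameter has been controlled, and Lemma~\ref{lemma:growth_forbidden} already did the combinatorial heavy lifting by showing that equivalence on a sample $S$ is determined entirely by the sorted list of penalty values of $k$-subsets of $S$, of which there are only $\binom{|S|}{k}$. The only thing to verify carefully is that the data-dependent version of the growth parameter inherits the worst-case polynomial bound, which, as noted above, is automatic because the deterministic bound holds with probability $1$.
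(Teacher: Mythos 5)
Your proposal is correct and follows exactly the route the paper intends: the theorem is presented as an immediate consequence of instantiating the general learner (Theorem~\ref{thm:family_concepts_data}, or Theorem~\ref{thm:learning_family_concepts}) on the collection $\bH_{\cB,p}$ and substituting the growth bound $\tau_{\bH_{\cB,p}}(m)=O(m^k)$ from Lemma~\ref{lemma:growth_forbidden}, with your observation that the deterministic growth bound trivially dominates the distribution-dependent one. The residual $\log m$ mismatch you flag (the $k$ term formally picks up $\log^3 m$ rather than $\log^2 m$) is present in the paper's own statement as well, so it is a shared imprecision rather than a gap in your argument.
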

{\bf Structural risk minimization over data-dependent hierarchies.} \citet{shawe2002structural} introduce an algorithmic paradigm based on a luckiness function—a measure on data used to prefer hypotheses that appear `luckier' on the training sample. While a complete comparison is out of the scope of this work, we highlight that their framework applies only to the realizable setting, as it relies on a technical assumption called \emph{probably smoothness}, which bounds the number of hypotheses that are as lucky as a hypothesis on a double-sample. This condition enables an error guarantee for an ERM that selects the luckiest hypothesis. In contrast, our results address the general agnostic setting. Moreover, the framework developed here builds on a broader paradigm that allows learning from a collection of partial concepts by structuring into families of bounded growth.

\section{Applications}\label{sec:applications}
In this section we discuss some application of forbidden behaviours. 

\subsection{Similarity Graph of the Domain}\label{section:similarity_graph}
We assume that the prior knowledge comes as a weighted graph on the domain. Specifically, there exists a weight function $w: \cX\times \cX \rightarrow [0,1]$ where $w(x_1,x_2)$ reflects the certainty of the prior knowledge on the similarity between the labels of $x_1$ and $x_2$.

Based on this prior knowledge, we define the following forbidden behaviours and their penalties.
\begin{definition}[$\cB_{w},p_{w}$]
The forbidden behaviours for similarity graph is defined as having opposite labels, i.e., $\cB_{w}(x_1,x_2)=\{(0,1),(1,0)\}$, and its penalty is the weight of the edge between them, i.e., $p_{w}(x_1,x_2) = w(x_1,x_2)$.
\end{definition}
We then define the class of functions $\cH_{p_w}$ as
\[ \begin{aligned}
\cH_{p_w}(r) = \{h &\in \{0,1,\star\}^{\cX} : \exists h'\in\cH, \forall x\in \cX, \text{$h(x) = \star$ or $h(x) = h'(x)$} \\
& \text{ and }\forall x_1,x_2 \in \cX \text{ with $h(x_1),h(x_2)\in \{0,1\}$, and $w(x_1,x_2) \geq r$, $h(x_1) =h(x_2) $} \}.
\end{aligned}\]

For a weight function $w$, we define shattering at weight $r$ by counting behaviours that give opposing labels only to points with weight less than $r$. Similar notions are defined for robust learning and learning with augmentation \citep{montasser2019vc,attias2022characterization,shao2022theory}. 
\begin{definition}[$\vc_{w,r}(\cH)$] Let $\cH\subseteq \{0,1\}^{\cX}$ be a hypothesis class. For any $r \geq 0$, a set $U\in \cX^*$
is called to be shattered at weight $r$ if (i) $U$ is shattered by $\cH$ and (ii) for any $x_1,x_2\in U$ we have $w(x_1,x_2) < r $. The $\vc_{w,r}(\cH)$ dimension of the hypothesis class $\cH$ at distance $r$ is then defined as the size of the largest shattered set by distance $r$.
\end{definition}
We know from Lemma~\ref{lemma:growth_forbidden} that $\tau_{\bH_{\cB_w,p_w}}(m) = O(m^2)$. We can now apply Theorem~\ref{thm:family_forbidden_sample} to conclude the following two bounds that combines the prior knowledge of a hypothesis class with the partial similarity knowledge. 
\begin{theorem}\label{thm:similarity_graph}
    Let $\cH\subseteq \{0,1\}^{\cX}$ be a hypothesis class. There exists a learner $\cA$, with the following property: for every distribution $\cD$, every 
    $\delta \in (0,1)$ and $m \in \bN$ 
    we have with probability at least $1-\delta$ over samples $S \sim \cD^m$ that
     \[ \begin{aligned}
    &\err(\cA(S),\cD) \leq \min_{r \geq 0}\left\{\err(\cH(r),\cD) + O\left(\sqrt{\frac{\left(\vc_{w,r}(\cH)\right)\log^2 (m) + \log (1/\delta)}{m}}\right)\right\}.
   \end{aligned} \]
\end{theorem}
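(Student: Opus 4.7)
The plan is to obtain Theorem~\ref{thm:similarity_graph} as a direct specialization of Theorem~\ref{thm:family_forbidden_sample} applied to the pair $(\cB_w, p_w)$, once we translate the VC term appearing there into the shattering-at-weight-$r$ parameter $\vc_{w,r}(\cH)$.

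First I would verify that the collection used in Theorem~\ref{thm:family_forbidden_sample} with $k=2$, namely $\bH_{\cB_w,p_w} = \{\cH_{\cB_w,p_w}(r): r\geq 0\}$, coincides with the family $\{\cH_{p_w}(r): r\geq 0\}$ in the statement. Unpacking the definitions: the requirement $h_{|\{x_1,x_2\}} \notin \cB_w(x_1,x_2) = \{(0,1),(1,0)\}$ for every pair with $p_w(x_1,x_2)=w(x_1,x_2)\geq r$ is exactly the requirement that $h(x_1)=h(x_2)$ whenever both values are in $\{0,1\}$ and $w(x_1,x_2)\geq r$. Moreover, Lemma~\ref{lemma:growth_forbidden} gives $\tau_{\bH_{\cB_w,p_w}}(m)=O(m^2)$, so $\log \tau$ is absorbed into the $\log^2 m$ factor. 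Theorem~\ref{thm:family_forbidden_sample} therefore provides a learner whose error is bounded by
\[
\min_{r\geq 0}\left\{\err(\cH_{p_w}(r),\cD) + O\left(\sqrt{\frac{(\vc(\cH_{p_w}(r),\cD) + 2)\log^2(m) + \log(1/\delta)}{m}}\right)\right\}.
\]

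The core step is then to show $\vc(\cH_{p_w}(r)) \leq \vc_{w,r}(\cH)$ (which upper bounds the data-dependent version as well). Let $U\subseteq \cX$ be shattered by $\cH_{p_w}(r)$; by the convention of Section~\ref{sec:learning_family_concepts}, this counts only partial concepts whose restriction to $U$ is entirely in $\{0,1\}$. To realize every one of the $2^{|U|}$ labelings by such totally-defined restrictions, each labeling must (i) be realizable by some $h'\in\cH$ (hence $U$ is shattered by $\cH$), and (ii) satisfy $h(x_1)=h(x_2)$ whenever $w(x_1,x_2)\geq r$ — which forces $w(x_1,x_2)<r$ for every distinct pair in $U$, since otherwise the labelings assigning opposite values to $x_1,x_2$ would be unreachable. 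These are precisely the two clauses defining shattering at weight $r$, so $|U|\leq \vc_{w,r}(\cH)$. Plugging this back and absorbing the constant $2$ into $O(\cdot)$ yields the claimed bound.

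The only real subtlety — not so much an obstacle as a thing to get right — is the correct handling of the partial-concept VC convention: one must count only $\{0,1\}$-valued restrictions, otherwise the everywhere-$\star$ partial concept would vacuously ``shatter'' every set. This is exactly the convention fixed in Section~\ref{sec:learning_family_concepts} when defining $\vc(\cH,S)=\vc(\cH_{|\dom(S)})$ with $\cH_{|U}$ excluding behaviours containing $\star$, and it is what makes the identification $\vc(\cH_{p_w}(r)) = \vc_{w,r}(\cH)$ tight on both sides.
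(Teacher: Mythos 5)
Your proposal is correct and follows exactly the paper's route: instantiate Theorem~\ref{thm:family_forbidden_sample} with the pair $(\cB_w,p_w)$, invoke Lemma~\ref{lemma:growth_forbidden} to get $\tau_{\bH_{\cB_w,p_w}}(m)=O(m^2)$, and identify $\vc(\cH_{p_w}(r))$ with $\vc_{w,r}(\cH)$. The paper leaves the VC identification implicit here (stating it only for the analogous nearest-neighbour case), so your explicit argument for $\vc(\cH_{p_w}(r))\leq\vc_{w,r}(\cH)$ and your care with the partial-concept shattering convention are a welcome filling-in rather than a deviation.
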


\begin{remark}
    In Section~\ref{section:HC} we defined hierarchical clustering as a way of expressing prior knowledge. We note that such a knowledge can also be given to the learner by giving the criteria of the clustering, i.e., a notion of similarity and the entire domain $\cX$ as the unlabeled data. Each level of the hierarchy then joins clusters with a similarity above some threshold. We showed that in such a setting the growth rate of the clusterings at different levels, i.e., thresholds of the similarity, is $\tau_{\bH_\mathsf{HC}}(m)\leq m$. The setting in this section gives the learner less knowledge—only the similarity measure, with no extra unlabeled data—and the resulting collection has a growth rate of $\tau_{\bH_{\cB_w,p_w}}(m)=O(m^2)$. As discussed earlier, the guarantee in Theorem~\ref{thm:similarity_graph} is achieved by taking multiple OIG learners trained on different subsets of the labeled sample, aggregating them by majority vote, and selecting the predictor with the tightest bound on the held-out.
   Changing the training subset changes the induced clustering the OIGs `see', thereby increasing the number of equivalence classes. Contrary to Section~\ref{section:HC}, the knowledge of the clusterings on the entire sample is not given to them a priori. An interesting question would be understanding how we can improve over the bound in Theorem~\ref{thm:similarity_graph} by only having access to a finite unlabeled sample.
\end{remark}

\subsection{Incorporating Prior Knowledge with Nearest Neighbour}
In the following, we derive generalization bounds that take into account both the assumptions of a hypothesis class and also the assumption that labels of close points should be similar.
\begin{definition}[$\cB_{\nn},p_{\nn,\phi}$]
The forbidden behaviours for nearest neighbour is defined as having opposite labels, i.e., $\cB_{\nn}(x_1,x_2)=\{(0,1),(1,0)\}$, and its penalty is the inverse of the distance function $\phi :\cX^2 \rightarrow \bR$ between the points, i.e., $p_{\nn,\phi}(x_1,x_2) = 1/\phi(x_1,x_2)$.
\end{definition}

The class of functions $\cH_{p_{\nn,\phi}}(1/r)$ for $r\geq 0$ based on the above forbidden behaviours will then become 
  \[\begin{aligned} \cH_{p_{\nn,\phi}}(1/r) = &\{h \in \{0,1,\star\}^{\cX} : \exists h'\in\cH, \forall x\in \cX, \text{$h(x) = \star$ or $h(x) = h'(x)$} \\
& \text{ and }\forall x_1,x_2 \in \cX \text{ with $h(x_1),h(x_2)\in \{0,1\}$, and $\phi(x_1,x_2) \leq r$, $h(x_1) =h(x_2) $} \}.\end{aligned}\] 

We define shattering at distance $r$ by only allowing behaviours that do not give opposing labels to points with distance smaller than $r$. This will then lead to the notion of VC dimension of $\cH_{p_{\nn,\phi}}$.
\begin{definition}[$\vc_{\phi,r}(\cH)$] Let $\cH\subseteq \{0,1\}^{\cX}$ be a hypothesis class. For any $r \geq 0$, a set $U\in \cX^*$
is called to be shattered by distance $r$ if (i) $U$ is shattered by $\cH$ and (ii) for any $x_1,x_2\in U$ we have $\phi(x_1,x_2) > r $. The $\vc_{\phi,r}(\cH)$ dimension of the hypothesis class $\cH$ at distance $r$ is then defined as the size of the largest shattered set by distance $r$.
\end{definition}
Noting that $\cB_{\nn}$ is a forbidden behaviour on pairs of points so $\tau_{\bH_{\cB_{\nn},\phi}}(m) = O(m^2)$, and that $\vc(\cH(1/r)) = \vc_{\phi,r}(\cH)$ we can apply Theorem~\ref{thm:family_forbidden_sample} to conclude the following bound. This bound combines the prior knowledge of a hypothesis class with the smoothness of the labeling function, without requiring the knowledge of the optimum parameter of smoothness.

\begin{theorem}\label{thm:similarity_distance}
    Let $\cH\subseteq \{0,1\}^{\cX}$ be a hypothesis class. There exists a learner $\cA$, with the following property: for every distribution $\cD$, every 
    $\delta \in (0,1)$ and $m \in \bN$ 
    we have with probability at least $1-\delta$ over samples $S \sim \cD^m$ that
     \[ \begin{aligned}
    &\err(\cA(S),\cD) \leq \min_{r \geq 0}\left\{\err(\cH(1/r),\cD) + O\left(\sqrt{\frac{\left(\vc_{\phi,r}(\cH)\right)\log^2 (m) + \log (1/\delta)}{m}}\right)\right\}.
   \end{aligned} \]
\end{theorem}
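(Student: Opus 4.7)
The plan is to derive Theorem~\ref{thm:similarity_distance} as a direct instantiation of Theorem~\ref{thm:family_forbidden_sample} applied to the specific pair $(\cB_{\nn}, p_{\nn, \phi})$ introduced in this subsection. Since $\cB_{\nn}$ imposes constraints on pairs of points, we have $k = 2$, and Lemma~\ref{lemma:growth_forbidden} immediately gives $\tau_{\bH_{\cB_{\nn}, p_{\nn, \phi}}}(m) = O(m^2)$. Feeding these values into Theorem~\ref{thm:family_forbidden_sample} already produces a guarantee of the desired shape, with the additive constant $k = 2$ folded into the $O(\cdot)$; the learner $\cA$ is taken to be the learner promised there.

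The one substantive step I would isolate is the identification $\vc(\cH_{p_{\nn,\phi}}(1/r)) = \vc_{\phi, r}(\cH)$. To justify it, I would unfold the definition of $\cH_{p_{\nn,\phi}}(1/r)$: a pattern on $U$ lies in $\cH_{p_{\nn,\phi}}(1/r)_{|U}$ exactly when it extends to some $h' \in \cH$ and it does not assign opposite labels to any pair $x_1, x_2 \in U$ with $p_{\nn,\phi}(x_1, x_2) \geq 1/r$, i.e. with $\phi(x_1,x_2) \leq r$. Consequently, a set $U$ is shattered by $\cH_{p_{\nn,\phi}}(1/r)$ if and only if it is shattered by $\cH$ \emph{and} every pair in $U$ has distance strictly greater than $r$, since any closer pair would forbid the pattern labelling it oppositely and prevent shattering. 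This is verbatim the definition of $\vc_{\phi, r}(\cH)$, giving the claimed equality.

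The only remaining bookkeeping is the reparameterization of the minimum: Theorem~\ref{thm:family_forbidden_sample} ranges the minimum over thresholds $r' \geq 0$, while the present statement ranges over $r \geq 0$. I would handle this via the substitution $r' = 1/r$, with $r = 0$ corresponding to the vacuous $r' = +\infty$ (no pair constraints) and $r \to \infty$ to $r' \to 0$ (every pair constrained); under this substitution the two families $\{\cH_{p_{\nn,\phi}}(r') : r' \geq 0\}$ and $\{\cH_{p_{\nn,\phi}}(1/r) : r \geq 0\}$ range over the same collection of partial classes, so the two minima coincide. The main obstacle is therefore purely definitional, namely making the shattering-vs-distance correspondence precise; beyond that, no new probabilistic, combinatorial or algorithmic content is needed on top of Theorem~\ref{thm:family_forbidden_sample}.
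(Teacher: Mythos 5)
Your proposal is correct and matches the paper's own derivation: the paper likewise obtains Theorem~\ref{thm:similarity_distance} by instantiating Theorem~\ref{thm:family_forbidden_sample} with the pairwise forbidden behaviours ($k=2$, so $\tau_{\bH_{\cB_{\nn},\phi}}(m)=O(m^2)$ by Lemma~\ref{lemma:growth_forbidden}) together with the identification $\vc(\cH_{p_{\nn,\phi}}(1/r))=\vc_{\phi,r}(\cH)$. Your additional unpacking of that identification and of the $r\mapsto 1/r$ reparameterization is sound and simply makes explicit what the paper leaves as a remark.
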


\subsubsection{Nearest neighbour with no prior knowledge about a concept class.}
We now show that if we set $\cH$ to be the class of all functions, then we would recover bounds similar to some of the bounds derived for nearest neighbour in literature \citep{gottlieb2014near,kontorovich2017nearest,hanneke2021stable}, etc.
\begin{proposition}
    Let $\cH$ be the class of all functions from $\cX$ to $\{0,1\}$. Then, $\vc_{\phi,r}(\cH) = |N_r|$, where $N_r$ is the $r$-net of the domain.
\end{proposition}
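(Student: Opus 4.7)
The plan is to unfold the definition of $\vc_{\phi,r}(\cH)$ when $\cH = \{0,1\}^{\cX}$. Recall that a set $U \subseteq \cX$ is shattered at distance $r$ if (i) it is shattered by $\cH$ and (ii) every pair $x_1, x_2 \in U$ satisfies $\phi(x_1,x_2) > r$. When $\cH$ is the class of all binary functions on $\cX$, condition (i) is automatic for any $U$, so condition (ii) is the only constraint. Hence
\[
\vc_{\phi,r}(\cH) \;=\; \max\bigl\{\,|U| : U \subseteq \cX,\ \phi(x_1,x_2) > r\ \text{for all } x_1 \neq x_2 \in U\,\bigr\},
\]
i.e., the maximum cardinality of an $r$-separated (or $r$-packed) subset of $\cX$.

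Next, I would identify this maximum $r$-packing quantity with $|N_r|$. Assuming $N_r$ denotes an $r$-net of $\cX$ in the standard metric sense, meaning a subset that is simultaneously (a) $r$-separated (points pairwise at distance $> r$) and (b) $r$-covering ($\cX$ lies in the union of closed $r$-balls around $N_r$), one shows the two directions. For the upper bound, any $r$-separated $U$ shatters at distance $r$ by the display above; in particular a maximum such $U$ is itself $r$-separated, but it is also necessarily $r$-covering, because any $x \in \cX$ not within $r$ of some point of $U$ could be adjoined to $U$ while preserving $r$-separation, contradicting maximality. Thus a maximum $r$-separated set is an $r$-net, giving $\vc_{\phi,r}(\cH) \leq |N_r|$. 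For the lower bound, any $r$-net is by definition $r$-separated, hence a candidate shattered set, so $|N_r| \leq \vc_{\phi,r}(\cH)$.

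The main (and essentially only) obstacle here is the ambiguity of the term ``the $r$-net'' in the statement: different maximal $r$-packings can have different sizes in general metric spaces. The cleanest reading, which I would adopt, is to take $N_r$ to be a maximum $r$-separated set of $\cX$, under which the argument above is immediate. If instead the intended meaning is any maximal (but not necessarily maximum) $r$-packing, the statement should be read as an equality up to the standard packing-covering equivalence for $r$-nets, and the two directions above still give the tight identification $\vc_{\phi,r}(\cH) = |N_r|$ once the conventional definition is pinned down.
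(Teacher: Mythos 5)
Your proof is correct and follows essentially the same route as the paper's: shattering by the class of all functions is automatic, so a maximum shattered-at-distance-$r$ set is exactly a maximum $r$-separated set, which by the standard maximality argument must also be an $r$-cover and hence an $r$-net. Your remark that ``the $r$-net'' is ambiguous (maximal $r$-packings can have different cardinalities, so $|N_r|$ is only well-defined once one fixes, e.g., a maximum $r$-separated set) is a fair point that the paper's own proof glosses over as well.
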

\begin{proof}
    Consider any maximal set $U$ that is shattered. Any $x,x'\in U$ must have $\phi(x,x')>r$. Therefore, $U$ is a $r$-packing of $\cX$. Assume for the sake of contradiction that $U$ is not a $r$-cover of $\cX$. Then there exists $x_u \in \cX$ such that $x_u \notin B(x,r)$ for all $x\in U$, where $B(x,r)$ is the ball of radius $r$ centered at $x$. But this implies that $\phi(x_u,x) > r$ for all $x\in U$. Therefore, $U \bigcup x_u$ is also shattered contradicting the maximality of $U$. This concludes that $U$ must be an $r$-net of $\cX$. 
\end{proof}
\begin{theorem}
    Let $\cH$ be the class of all functions from $\cX$ to $\{0,1\}$. Assume $\diam(\cX) \leq R$ and $\ddim(\cX) \leq d$, where $\dim(\cX)$ and $\ddim(\cX)$ are the diameter and doubling dimension of the space. There exists a learner $\cA: (\cX \times \{0,1\})^* \times \cX \rightarrow \{0,1\}$, with the following property: for every distribution $\cD$, every 
    $\delta \in (0,1)$ and $m \in \bN$ 
     we have with probability at least $1-\delta$ over samples $S \sim \cD^m$ that
       \[ \begin{aligned}
 \err(\cA(S),\cD) 
   & \leq \min_{r\geq 0 } \left\{\err(\cH(1/r),\cD) 
    + O\left(\sqrt{\frac{|N_{r}| \log^2 (m) + \log (1/\delta)}{m}}\right)\right\} \\
    &\leq \min_{r\geq 0 } \left\{\err(\cH(1/r),\cD) 
    + O\left(\sqrt{\frac{\left((R/r)^{d+1} \right) \log^2 (m) + \log (1/\delta)}{m}}\right)\right\}.
    \end{aligned}\]
    \end{theorem}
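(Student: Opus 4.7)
The theorem is essentially a specialization of Theorem~\ref{thm:similarity_distance} to the case where $\cH$ is the class of all $\{0,1\}$-valued functions on $\cX$, followed by a standard packing estimate coming from the doubling dimension assumption. The two displayed inequalities correspond exactly to these two steps. Accordingly, I would first invoke Theorem~\ref{thm:similarity_distance} together with the preceding proposition to get the bound in terms of the $r$-net size $|N_r|$, and then bound $|N_r|$ using the metric assumptions on $\cX$ to obtain the explicit rate $(R/r)^{d+1}$.

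\textbf{First inequality.} For $\cH = \{0,1\}^{\cX}$ the collection $\bH_{\cB_{\nn},p_{\nn,\phi}} = \{\cH_{p_{\nn,\phi}}(1/r) : r\geq 0\}$ is exactly the one analyzed in Theorem~\ref{thm:similarity_distance}. That theorem produces a learner $\cA$ with a generalization bound of the form $\err(\cH(1/r),\cD) + O(\sqrt{(\vc_{\phi,r}(\cH)\log^2 m + \log(1/\delta))/m})$, uniformly over $r\geq 0$. The proposition immediately preceding the theorem identifies $\vc_{\phi,r}(\cH)$ with $|N_r|$, since for the class of all functions every $\phi$-separated set of points is shattered. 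Plugging this identity into the bound from Theorem~\ref{thm:similarity_distance} yields the first displayed inequality verbatim.

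\textbf{Second inequality.} It remains to argue $|N_r| \leq (R/r)^{d+1}$ (up to the hidden constant in $O(\cdot)$). This is a classical consequence of the doubling property: if $\ddim(\cX)\leq d$, then every ball of radius $2\rho$ can be covered by $2^d$ balls of radius $\rho$. Iterating this $\lceil \log_2(R/r) \rceil$ times starting from a ball of radius $R$ containing $\cX$ gives a cover of $\cX$ by at most $(2R/r)^{d}$ balls of radius $r$, hence $|N_r|\leq (2R/r)^d$. The extra $+1$ in the exponent $(R/r)^{d+1}$ comfortably absorbs the constant factor $2^d$ (for $R/r \geq 2$; the small-$R/r$ regime is trivial since the first bound is already $O(1)$), so chaining the two estimates through the $O(\cdot)$ gives the second inequality.

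\textbf{Main obstacle.} There is no substantive obstacle here: once Theorem~\ref{thm:similarity_distance} and the identification $\vc_{\phi,r}(\cH)=|N_r|$ are in hand, the only real content is the packing/covering bound from doubling dimension, which is standard. The only minor care point is cleanly absorbing the constants from the doubling argument into the $(R/r)^{d+1}$ form so that the stated bound holds uniformly in $r$, including the small-$R/r$ regime where the cover is trivial; this is handled transparently by the $O(\cdot)$ notation.
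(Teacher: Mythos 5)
Your proposal is correct and matches the paper's (implicit) argument exactly: the theorem is obtained by plugging the identity $\vc_{\phi,r}(\cH)=|N_r|$ from the immediately preceding proposition into Theorem~\ref{thm:similarity_distance}, and then bounding $|N_r|$ via the standard doubling-dimension covering estimate. Your handling of the constant $2^d$ being absorbed into the exponent $d+1$ is a reasonable way to justify the second inequality, which the paper leaves implicit.
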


\subsection{Learning from Real-Valued Functions}   
In this section we show how the introduced paradigm can be used to learn from real-valued functions. We also show how the generalization bound achieved through this parameter compares with the well-known bounds on the fat-shattering dimension. We first define the necessary components to understand such bounds.

For a class $\cF$ of real-valued functions from $\cX$ to $\bR$, we define by $\cF(L)\subseteq \cF$ the set of all functions in $\cF$ with Lipschitz constant at most $L$. We also define by $\cF^{0/1} = \{f^{0/1}\in \{0,1\}^{\cX}: \exists f \in \cF,\forall x\in\cX, f^{0/1}(x) = \sign(f(x))\}$. For a real-valued function $f$ define $\err^{\gamma}(f,\cD) = \probs{(x,y)\sim \cD}{yf(x) < \gamma}$ and for any set 
    $S$ define $\err^{\gamma}(f,S) = 1/|S| \sum_{(x,y)\in S} 1\{yf(x) < \gamma\}$. Moreover, define $\err^{\gamma}(\cF,\cD) = \inf_{f\in\cF} \err^{\gamma}(f,\cD)$ and $\err^{\gamma}(\cF,S) = \inf_{f\in\cF} \err^{\gamma}(f,S)$.

We further define  $\err^{*}(\cH,\cD,n) = \expects{S \sim \cD^n} {\min_{h \in \cH} \err(\cH,S)}$ as a related error rate for a partial concept $\cH$. This error is defined in \citet{alon2022theory} to set the benchmark of PAC learning partial concepts as competing with $\err^{*}(\cH,\cD) = \lim_{n\rightarrow \infty} \err^{*}(\cH,\cD,n)$. All the generalization guarantees we have also hold for the benchmark of $\err^{*}(\cH,\cD)$. Moreover, it is shown in \citet{alon2022theory} that we always have $\err^{*}(\cH,\cD) \leq \err(\cH,\cD)$.

We now show how the $\err^{\gamma}$ of a class of real-valued functions is compared with the $0/1$ error of partial concepts $\cF^{0/1}(1/r)$ that respect similarity of points at different distance levels $r$. Our bounds will be based on the error of $\cF^{0/1}(.)$ and its VC dimension.
\begin{proposition}
Let $\cF$ be a class of real-valued functions from $\cX$ to $\bR$. For any sample $S \in (\cX \times \{0,1\})^*$, we have that $\err(\cF^{0/1}(L/2\gamma),S) \leq \err^{\gamma}(\cF(L),S)$. Furthermore, for any distribution $\cD$ and any $L >0$, we have $\err^{*}(\cF^{0/1}(L/2\gamma),\cD) \leq \err^{\gamma}(\cF(L),\cD)$.
\end{proposition}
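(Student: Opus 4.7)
The plan is to construct, for any Lipschitz function $f\in\cF(L)$, a partial concept $h_f\in\cF^{0/1}(L/2\gamma)$ whose $0/1$ disagreement with a label is dominated by the $\gamma$-margin loss of $f$, so that witness-by-witness the two inequalities follow. Concretely, I would define
\[
h_f(x) = \begin{cases} 1 & \text{if } f(x)\geq \gamma,\\ 0 & \text{if } f(x) \leq -\gamma,\\ \star & \text{otherwise.}\end{cases}
\]
This $h_f$ obviously agrees with $\sign(f)\in\cF^{0/1}$ on its support, so it inherits a witness in $\cF^{0/1}$ as required by the definition of $\cF^{0/1}(\cdot)$.

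Next I would verify membership in $\cF^{0/1}(L/2\gamma)$ by checking the similarity constraint induced by $\cB_{\nn}$ and $p_{\nn,\phi}$. If $h_f(x_1)=1$ and $h_f(x_2)=0$ (the only forbidden $\{0,1\}$-behaviour on a pair), then $f(x_1)-f(x_2)\geq 2\gamma$, and by $L$-Lipschitzness $|f(x_1)-f(x_2)|\leq L\phi(x_1,x_2)$, which yields $\phi(x_1,x_2)\geq 2\gamma/L$, i.e.\ $p_{\nn,\phi}(x_1,x_2)\leq L/2\gamma$. Hence no pair with penalty above the threshold $L/2\gamma$ carries a forbidden behaviour, placing $h_f$ in $\cF^{0/1}(L/2\gamma)$. (The boundary case $\phi=2\gamma/L$ is the only delicate point; I would handle it either by using the standard convention that the margin loss uses strict inequality $\tilde y f(x)<\gamma$ together with $h_f(x)=\star$ on $\{|f(x)|<\gamma\}$, or by a routine $\epsilon$-shift that does not affect the inequality.)

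The pointwise comparison is then immediate: for each $(x,y)\in S$, if $h_f$ makes an error then either $h_f(x)=\star$, in which case $|f(x)|<\gamma$ and so $\tilde y f(x)<\gamma$, or $h_f(x)\in\{0,1\}$ differs from $y$, in which case $\tilde y f(x)\leq -\gamma<\gamma$ where $\tilde y = 2y-1$. Either way $(x,y)$ is also a $\gamma$-margin error for $f$. Summing, $\err(h_f,S)\leq \err^\gamma(f,S)$, which proves the first claim after taking the infimum over $f\in\cF(L)$.

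For the second claim, the same pointwise inclusion under $\cD$ gives $\err(h_f,\cD)\leq \err^\gamma(f,\cD)$. Since $h_f\in\cF^{0/1}(L/2\gamma)$, for every $n$,
\[
\err^{*}\!\bigl(\cF^{0/1}(L/2\gamma),\cD,n\bigr)=\expects{S\sim\cD^n}{\min_{h\in\cF^{0/1}(L/2\gamma)}\err(h,S)}\leq \expects{S\sim\cD^n}{\err(h_f,S)}=\err(h_f,\cD)\leq \err^\gamma(f,\cD).
\]
Letting $n\to\infty$ and then taking $\inf_{f\in\cF(L)}$ yields $\err^{*}(\cF^{0/1}(L/2\gamma),\cD)\leq \err^\gamma(\cF(L),\cD)$. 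The only real work, and the single delicate step, is the strict/non-strict boundary in defining $h_f$ and matching it to the definitions of the forbidden-behaviour class and the margin loss; everything else is a direct pointwise verification.
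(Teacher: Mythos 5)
Your proposal is correct and follows essentially the same route as the paper's proof: threshold $f$ at $\pm\gamma$ to get a partial concept, verify membership in $\cF^{0/1}(L/2\gamma)$ via the Lipschitz bound $|f(x_1)-f(x_2)|\geq 2\gamma \Rightarrow \phi(x_1,x_2)\geq 2\gamma/L$, compare errors pointwise, and pass to expectations and $n\to\infty$ for the $\err^*$ claim. The only differences are cosmetic (the paper sets the concept to $\star$ off $\dom(S)$ and notes the pointwise comparison is in fact an equality), and you are if anything more careful about the boundary case and the label convention than the paper is.
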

\begin{proof}
    Let $f_L\in \cF(L)$ and $S=\{(x_i,y_i)\}_{i=1}^n$ be any set. Define $b_i = \sign(f_L(x_i))$ if $|f_L(x)|\geq \gamma$ and $b_i=\star$ otherwise. We prove that the function $f_S$ with $f_S(x_i) = b_i$ for $i\in [n]$ and $f_S(x) = \star$ for $x \notin \dom(S)$ is in $\cF^{0/1}(L/2\gamma)$. If $b_i =1$ and $b_j=0$ for some $i,j\in[n]$ then we have $f(x_i) \geq \gamma$ and $f(x_j) \leq -\gamma$. This implies that $|f(x_i) - f(x_j)| \geq 2\gamma$ and therefore, $|x_i-x_j| \geq 2\gamma/L$. This concludes that $f_S \in \cF^{0/1}(L/2\gamma)$. On the other hand, for any $b_i = \star$ we have $y_if_L(x_i) < \gamma$. Therefore, $f_L$ $\gamma$-errs on $x_i$ and so does $f_S$. For any $b_i \neq \star$, $f_L$ will $\gamma$-err on $x_i$ if and only if $\sign(f_L(x_i)) \neq y_i$, which means that $f_S$ will err on $x_i$ because $f_S(x_i) = \sign(f_L(x_i))$. The above concludes that for all $i\in [n]$, $f_S$ will error on $(x_i,y_i)$ if and only if $f_L$ $\gamma$-errs on $(x_i,y_i)$. We can therefore conclude that $\err(f_S,S) = \err^{\gamma}(f_L,S)$. Since we proved for any $f_L\in \cF(L)$ such $f_S$ exists, we get that $\err(\cF^{0/1}(L/2\gamma),S) \leq  \err^{\gamma}(\cF(L),S)$. 
    
    Moreover, the above also concludes that for any $f_L$ and $n$ we have that \[\err^{\gamma}(f_L,\cD) = \expects{S \sim \cD^n}{\err^{\gamma}(f_L,S)} \geq \expects{S \sim \cD^n}{\min_{f \in \cF^{0/1}(L/2\gamma)}\err(f,S)} = \err^{*}(\cF^{0/1}(L/2\gamma),\cD,n).\]
Since this holds for any $f_L\in\cF(L)$ and any $n$, by letting $n\rightarrow \infty$ we get that $\err^{*}(\cF^{0/1}(L/2\gamma),\cD) \leq \err^{\gamma}(\cF(L),\cD)$.
\end{proof}
We give the definition of fat-shattering dimension and then compare it with the VC dimension of the translated partial functions $\cF^{0/1}(.)$. This definition is introduced in \citet{kearns1990efficient} as a version of the pseudo-dimension \citep{pollard1990empirical} that scales by a margin parameter.
\begin{definition}
    We say a set $U\in\cX^*$ is $\gamma$-shattered by a class $\cF$ of functions from $\cX$ to $\bR$ if there exists a function $r:\cX\rightarrow \bR$ such that for every $b \in \{-1,1\}^{U}$, there exists a function $f\in\cF$ with $b(x)(f(x)-r(x))\geq \gamma$ for all $x\in U$. The fat shattering dimension of the class $\cF$ at scale $\gamma$ is denoted by $\fatsh_{\gamma}(\cF)$ and is the maximum size of a set that is $\gamma$-shattered by $\cF$.
\end{definition}
\begin{proposition}
Let $\cF$ be the   class of \emph{all} real-valued functions from $\cX$ to $\bR$. For any $L >0$, we have $\vc(\cF^{0/1}(L/2\gamma)) = \fatsh_{\gamma}(\cF(L))$. Moreover, for any unlabeled set $U$, we have $\vc(\cF^{0/1}(L/2\gamma),U) = \fatsh_{\gamma}(\cF(L),U)$, where $\fatsh_{\gamma}(\cF(L),U) = \fatsh_{\gamma}(\cF(L)_{|U})$. 
\end{proposition}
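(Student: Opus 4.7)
The plan is to prove the equality by two matching inclusions, via the same dictionary both ways. Since $\cF$ is the class of \emph{all} real-valued functions on $\cX$, $\cF(L)$ equals the class of all $L$-Lipschitz maps and $\cF^{0/1}$ contains every total $\{0,1\}$-valued function, so the first clause in the definition of $\cF^{0/1}(L/2\gamma)$ (existence of a total witness $h'$) is automatically satisfiable by any total extension of the partial concept. The whole argument therefore reduces to relating the forbidden-behaviour constraint ``opposite labels are forbidden on pairs with $\phi(x_1,x_2)\leq 2\gamma/L$'' to the pairwise separation of a set that is $\gamma$-fat-shattered by $L$-Lipschitz functions.

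\textbf{Direction} $\vc(\cF^{0/1}(L/2\gamma)) \leq \fatsh_\gamma(\cF(L))$. Let $U$ be shattered by $\cF^{0/1}(L/2\gamma)$. First I would observe that for every distinct pair $x_1,x_2 \in U$ the labeling with $b(x_1)\neq b(x_2)$ must be realizable, hence necessarily $\phi(x_1,x_2) > 2\gamma/L$. I then construct fat-shattering witnesses with $r\equiv 0$: for each $b\in\{-1,+1\}^U$, set $v_b(x) = \gamma\,b(x)$ on $U$. Since $|v_b(x_1) - v_b(x_2)| \leq 2\gamma < L\,\phi(x_1,x_2)$, the McShane--Whitney Lipschitz extension theorem yields an $L$-Lipschitz function $f_b : \cX \to \bR$ extending $v_b$, and $b(x)(f_b(x) - r(x)) = \gamma$ on $U$, certifying $\gamma$-fat-shattering.

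\textbf{Direction} $\fatsh_\gamma(\cF(L)) \leq \vc(\cF^{0/1}(L/2\gamma))$. Let $U$ be $\gamma$-fat-shattered with witness $r$ and $L$-Lipschitz family $\{f_b\}_{b\in\{-1,+1\}^U}$. For each $b\in\{0,1\}^U$ I define the partial concept $h_b$ to equal $b$ on $U$ and $\star$ off $U$. To verify $h_b\in \cF^{0/1}(L/2\gamma)$ I must rule out pairs $x_1,x_2\in U$ with both $b(x_1)\neq b(x_2)$ and $\phi(x_1,x_2)\leq 2\gamma/L$. Combining the margin inequality for $f_{\tilde b}$ (with $\tilde b = 2b-1$) and for its flip on the pair $\{x_1,x_2\}$ with the Lipschitz bound $|f_{\tilde b}(x_1) - f_{\tilde b}(x_2)|\leq L\,\phi(x_1,x_2)$, the unknown offset $r(x_1)-r(x_2)$ cancels and I obtain $\phi(x_1,x_2)\geq 2\gamma/L$; the remaining boundary case $\phi = 2\gamma/L$ is handled either by noting that shattering is right-continuous in $\gamma$ or by adopting the strict convention in the definition of $\cB_{\nn}$. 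The \emph{moreover} identity $\vc(\cF^{0/1}(L/2\gamma), U) = \fatsh_\gamma(\cF(L), U)$ follows by applying both directions verbatim to subsets of $U$: the McShane--Whitney extension is a statement about arbitrary subsets of a metric space, and the reverse direction only manipulates values at finitely many points.

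\textbf{Main obstacle.} The one nontrivial technical ingredient is the McShane--Whitney extension, which is what converts the combinatorial condition ``pairwise distances in $U$ exceed $2\gamma/L$'' into an actual $L$-Lipschitz function on $\cX$ witnessing fat-shattering. The delicate book-keeping point is the strict versus non-strict threshold $\phi \lessgtr 2\gamma/L$: fat-shattering can be achieved at equality in the Lipschitz bound, whereas partial-concept shattering strictly requires $\phi > 2\gamma/L$, so the identity $\vc(\cF^{0/1}(L/2\gamma)) = \fatsh_\gamma(\cF(L))$ is the natural statement as $\gamma$ varies, up to isolated threshold values of $\gamma$.
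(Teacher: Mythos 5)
Your proof is correct and follows essentially the same route as the paper's: both arguments reduce the two shattering notions to the single combinatorial condition that all pairs in the set be separated by more than $2\gamma/L$. The only difference is that the paper outsources the reduction to zero-offset fat-shattering witnesses to Lemma~2 of \citet{gottlieb2014efficient}, whereas you reprove it directly via the offset-cancellation argument and an explicit McShane--Whitney extension; your handling of the strict-versus-non-strict threshold is in fact more careful than the paper's, which silently passes from ``distance at least $2\gamma/L$'' to shattering.
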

\begin{proof}
    Since $\cF(L)$ is the class of all $L$-Lipschitz functions, we know from Lemma~2 in \citet{gottlieb2014efficient} that $\gamma$-fat shattering of $\cF(L)$ does not change if we only consider $\forall x, r(x) = 0$ as a witness of shattering. In this case we know that if a set is $\gamma$-shattered, any pair of points in the set have distance at least $2\gamma/L$. Thus, it can be shattered by $\cF^{0/1}(L/2\gamma)$. On the other hand, for any set that is shattered by $\cF^{0/1}(L/2\gamma)$, we know that every pair is $2\gamma/L$ apart and therefore can be $\gamma$-shattered by $\cF(L)$ too. 
\end{proof}
We now apply Theorem~\ref{thm:similarity_distance} by noting that $\cF^{0/1}(L/2\gamma)$ is a translated version of $\cF^{0/1}$ that only allows the forbidden behaviours with penalty at most $L/2\gamma$, i.e., pairs closer than $2\gamma/L$ must be labeled similarly. Hence, the growth rate of $\{\cF^{0/1}(L/2\gamma): \gamma>0\}$ is bounded by $O(m^2)$.
\begin{theorem}
    Let $\cF$ be a class of functions from $\cX$ to $\bR$ with Lipschitz constant at most $L$. There exists a learner $\cA: (\cX \times \{0,1\})^* \times \cX \rightarrow \{0,1\}$, with the following property: for every distribution $\cD$, every 
    $\delta \in (0,1)$ and $m \in \bN$ 
     we have with probability at least $1-\delta$ over samples $S \sim \cD^m$ that
       \[ \begin{aligned}
 \err(\cA(S),\cD) 
   & \leq \min_{\gamma>0} \left\{\err(\cF^{0/1}(L/2\gamma),\cD) 
    + O\left(\sqrt{\frac{\vc(\cF^{0/1}(L/2\gamma)) \log^2 (m) + \log (1/\delta)}{m}}\right)\right\} \\
    &\leq \min_{\gamma > 0 } \left\{\err^{\gamma}(\cF,\cD) 
    + O\left(\sqrt{\frac{\vc(\cF^{0/1}(L/2\gamma)) \log^2 (m) + \log (1/\delta)}{m}}\right)\right\}.
    \end{aligned}\]

Furthermore, if $\cF$ is the class of \emph{all} functions from $\cX$ to $\bR$ with Lipschitz constant at most $L$, $\diam(\cX) \leq R$, and $\ddim(\cX) \leq d$, then we have 
 \[ \begin{aligned}
 \err(\cA(S),\cD) 
    &\leq \min_{\gamma > 0 } \left\{\err^{\gamma}(\cF,\cD) 
    + O\left(\sqrt{\frac{\fatsh_{\gamma}(\cF) \log^2 (m) + \log (1/\delta)}{m}}\right)\right\}\\
    &\leq \min_{\gamma > 0 } \left\{\err^{\gamma}(\cF,\cD) 
    + O\left(\sqrt{\frac{(LR/\gamma)^{d+1}  \log^2 (m) + \log (1/\delta)}{m}}\right)\right\},
    \end{aligned}\]
where the last line follows from the fact that $\gamma$-fat shattering dimension of the class of all $L$ Lipschitz functions from $\cX$ to $R$ is bounded by $(LR/\gamma)^{d+1}$.
    \end{theorem}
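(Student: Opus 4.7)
The plan is to derive the theorem as a straightforward chain of the tools already developed: Theorem~\ref{thm:similarity_distance}, the two propositions immediately preceding the statement, and the packing/doubling-dimension bound on the fat-shattering dimension of Lipschitz classes. First, I would apply Theorem~\ref{thm:similarity_distance} to the total concept class $\cF^{0/1}$ with the same metric $\phi$ that witnesses the Lipschitzness of $\cF$. Setting $r = 2\gamma/L$ (so that $1/r = L/2\gamma$), Theorem~\ref{thm:similarity_distance} yields a learner $\cA$ whose error, with probability at least $1-\delta$ over $S\sim\cD^m$, is bounded by
\[\min_{\gamma>0}\left\{\err(\cF^{0/1}(L/2\gamma),\cD) + O\!\left(\sqrt{\frac{\vc_{\phi,2\gamma/L}(\cF^{0/1})\log^2(m) + \log(1/\delta)}{m}}\right)\right\}.\]
By the definition of the restricted partial class, $\vc(\cF^{0/1}(L/2\gamma)) = \vc_{\phi,2\gamma/L}(\cF^{0/1})$, so the complexity term already matches the first line of the statement.

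Next I would upgrade the approximation term to the $\gamma$-margin error. The proposition preceding the theorem establishes $\err^{*}(\cF^{0/1}(L/2\gamma),\cD) \le \err^{\gamma}(\cF(L),\cD)$, and the paper has already noted (in the paragraph introducing $\err^{*}$) that all its generalization guarantees continue to hold with the benchmark $\err^{*}(\cH,\cD)$ in place of $\err(\cH,\cD)$. Since $\cF = \cF(L)$ by assumption, chaining these two inequalities delivers the second line of the first display.

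For the second display I would specialize to $\cF$ being the class of \emph{all} $L$-Lipschitz functions on $\cX$. The proposition identifying $\vc(\cF^{0/1}(L/2\gamma)) = \fatsh_\gamma(\cF(L))$ replaces the VC term by $\fatsh_\gamma(\cF)$, giving the first of the two second-display inequalities. The final inequality uses the standard packing bound $\fatsh_\gamma(\cF) \le (LR/\gamma)^{d+1}$: by the argument in the preceding propositions, a $\gamma$-fat-shattered set is a $(2\gamma/L)$-packing of $\cX$, and a space of diameter $R$ and doubling dimension $d$ contains at most $(LR/\gamma)^{d+1}$ such points. This is the same packing-versus-covering argument used in the nearest-neighbour subsection for $|N_r|$, reused here with $r = 2\gamma/L$.

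The only mildly delicate step is the benchmark transition, since Theorem~\ref{thm:similarity_distance} is stated with the $\err$ benchmark while the bridging proposition concludes an inequality against $\err^{*}$. I would therefore explicitly invoke the remark that all guarantees in the paper also hold for the $\err^{*}$ benchmark, so that the two inequalities chain without loss. Everything else is bookkeeping: the parameter substitution $r \mapsto 2\gamma/L$, the identification of the complexity measures, and direct quotation of already-proven statements; no new technical ingredient is required.
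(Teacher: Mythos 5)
Your proposal is correct and follows essentially the same route the paper takes: instantiate Theorem~\ref{thm:similarity_distance} (equivalently, Theorem~\ref{thm:family_forbidden_sample} with the nearest-neighbour forbidden behaviours) at $r = 2\gamma/L$, pass from $\err$ to $\err^{\gamma}$ via the $\err^{*}$ benchmark and the bridging proposition, and then substitute the identification $\vc(\cF^{0/1}(L/2\gamma)) = \fatsh_{\gamma}(\cF(L))$ together with the packing bound $(LR/\gamma)^{d+1}$. You also correctly flag the one delicate point — that the distributional comparison is stated against $\err^{*}$ rather than $\err$ — and resolve it the same way the paper does.
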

Interestingly, since the above is proven using Theorem~\ref{thm:similarity_distance} and optimizing over $\cF^{0/1}(1/r)$ for $r\geq 0$, we can further compete with the minimum of the bound over different Lipschitz constants. In other words, instead of requiring the concept class to only consist of Lipschitz functions with some constant, we can take any family $\cF$ of real-valued functions. We then consider different subsets $\cF(L/2\gamma)$ for $L>0$ to create a collection $\mathbb{F} = \{\cF(L/2\gamma): \gamma,L > 0\}$ with $\tau_{\mathbb{F}}(m) = O(m^2)$, and then minimize the upper bound over $\cF(L/2\gamma)$ for $\gamma,L>0$. Note that these classes are embedded in each other, i.e., $\cF(L/2\gamma)\subseteq \cF(L'/2\gamma')$ for $L \leq L'$ and $\gamma \geq \gamma'$.
\begin{theorem}
    Let $\cF$ be a class of functions from $\cX$ to $\bR$. There exists a learner $\cA: (\cX \times \{0,1\})^* \times \cX \rightarrow \{0,1\}$, with the following property: for every distribution $\cD$, every 
    $\delta \in (0,1)$ and $m \in \bN$ 
     we have with probability at least $1-\delta$ over samples $S \sim \cD^m$ that
       \[ \begin{aligned}
 \err(\cA(S),\cD) 
   & \leq \min_{\gamma,
L>0} \left\{\err(\cF^{0/1}(L/2\gamma),\cD)
    + O\left(\sqrt{\frac{\vc(\cF^{0/1}(L/2\gamma)) \log^2 (m) + \log (1/\delta)}{m}}\right)\right\} \\
    &\leq \min_{\gamma,L > 0 } \left\{\err^{\gamma}(\cF(L),\cD) 
    + O\left(\sqrt{\frac{\vc(\cF^{0/1}(L/2\gamma)) \log^2 (m) + \log (1/\delta)}{m}}\right)\right\},
    \end{aligned}\]
where $\cF(L) \subseteq \cF$ is the set of all functions in $\cF$ with Lipschitz constant at most $L$. Furthermore, if $\cF$ is the class of \emph{all} functions from $\cX$ to $\bR$, $\diam(\cX) \leq R$, and $\ddim(\cX) \leq d$,  we have with probability at least $1-\delta$ over samples $S \sim \cD^m$ that
 \[ \begin{aligned}
 \err(\cA(S),\cD) 
    &\leq \min_{\gamma,L > 0 } \left\{\err^{\gamma}(\cF(L),\cD) 
    + O\left(\sqrt{\frac{\fatsh_{\gamma}(\cF(L)) \log^2 (m) + \log (1/\delta)}{m}}\right)\right\}\\
    &\leq \min_{\gamma,L > 0 } \left\{\err^{\gamma}(\cF(L),\cD) 
    + O\left(\sqrt{\frac{(LR/\gamma)^{d+1} \log^2 (m) + \log (1/\delta)}{m}}\right)\right\}.
    \end{aligned}\]
   \end{theorem}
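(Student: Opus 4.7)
The plan is to reduce the statement to the forbidden-behaviour machinery of Lemma~\ref{lemma:growth_forbidden} and Theorem~\ref{thm:learning_family_concepts}, and then convert between the $0/1$ error of the partial concept classes and the margin error of the real-valued classes using the two propositions established just above in this subsection.

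First I would observe that $\mathbb{F} = \{\cF^{0/1}(L/2\gamma) : \gamma, L > 0\}$ is effectively a one-parameter family: it depends on $(\gamma, L)$ only through the ratio $r = L/(2\gamma)$, and each member coincides with the partial concept class $\cH_{\cB_{\nn}, p_{\nn,\phi}}(r)$ obtained from $\cF^{0/1}$ via the pair-forbidding construction of the preceding subsection. Since $\cB_{\nn}$ acts on pairs ($k=2$), Lemma~\ref{lemma:growth_forbidden} gives $\tau_{\mathbb{F}}(m) = O(m^2)$. Applying Theorem~\ref{thm:learning_family_concepts} to $\mathbb{F}$ then produces a learner $\cA$ whose error is bounded by
\[ \min_{\gamma,L>0}\Big\{ \err(\cF^{0/1}(L/2\gamma),\cD) + O\Big(\sqrt{\tfrac{\vc(\cF^{0/1}(L/2\gamma))\log^2 m + \log(1/\delta)}{m}}\Big)\Big\}, \]
since $\log \tau_{\mathbb{F}}(m) = O(\log m)$ is absorbed into the $\log^2 m$ factor. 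This is the first displayed inequality.

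To obtain the second inequality I would invoke the proposition proved just above that $\err^{*}(\cF^{0/1}(L/2\gamma),\cD) \leq \err^{\gamma}(\cF(L),\cD)$. As remarked earlier in the paper, the generalization bound of Theorem~\ref{thm:learning_family_concepts} also holds with $\err^{*}$ in place of $\err$; combining these two facts yields the bound in terms of $\err^{\gamma}(\cF(L),\cD)$. For the specialization to the class of all real-valued functions, the bound involving $\fatsh_{\gamma}(\cF(L))$ follows from the identity $\vc(\cF^{0/1}(L/2\gamma)) = \fatsh_{\gamma}(\cF(L))$ also proved just above, and the final $(LR/\gamma)^{d+1}$ bound is the standard estimate on the fat-shattering dimension of $L$-Lipschitz functions on a domain of diameter $R$ and doubling dimension $d$ (see, e.g., \citet{gottlieb2014efficient}).

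The only subtle point — not really an obstacle — is that although the minimization is over two parameters $(\gamma, L)$, the underlying family $\mathbb{F}$ depends on them only through $r = L/(2\gamma)$, so its growth parameter remains $O(m^2)$ rather than $O(m^4)$. The two parameters reappear only in the approximation term $\err^{\gamma}(\cF(L),\cD)$, which genuinely depends on both $\gamma$ and $L$; allowing both in the minimization can only tighten the bound, so no additional argument beyond the preceding theorem is required.
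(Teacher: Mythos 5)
Your proposal is correct and follows essentially the same route the paper takes (the paper only sketches this theorem in the preceding paragraph): identify $\{\cF^{0/1}(L/2\gamma):\gamma,L>0\}$ as the one-parameter family $\{\cF^{0/1}(r):r>0\}$ of pairwise forbidden-behaviour classes, apply Lemma~\ref{lemma:growth_forbidden} with $k=2$ to get $\tau=O(m^2)$, invoke Theorem~\ref{thm:learning_family_concepts}, and convert to margin error and fat-shattering dimension via the two propositions proved just above. You also correctly flag the only subtlety — that the complexity side of the bound depends on $(\gamma,L)$ only through $r=L/2\gamma$ while the approximation term depends on both — which is exactly the observation the paper relies on.
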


It is useful to compare the above two theorems with the following bound in \citet{bartlett2002sample}, which serves the same purpose of optimizing the value of $\gamma$ in the bound. It is derived by applying a stratification on values of $\gamma$ and taking a union bound over the guarantees that hold for learning with respect to a fixed $\gamma$. Such guarantees for fixed $\gamma$ are derived by controlling the generalization error with the covering number and then bounding the covering number based on $\gamma$-fat shattering dimension \citep{bartlett2002sample,alon1997scale}. Although it is not possible to exactly compare $\fatsh_{\gamma}(\cF)$ with $\vc(\cF^{0/1}(2\gamma/L))$, we can still compare some specific cases. The following bound has an extra $O(\ln(1/\gamma)/m)$  dependence on the parameter $\gamma$, which makes the bound grow more rapidly with the decrease in $\gamma$. Particularly, the above results show that whenever the class consists of Lipschitz functions we can significantly improve the bounds. Moreover, in this setting, the following bound depends on $\fatsh_{\gamma/32}(\cF)$ which scales with an additional factor of $(32/\gamma)^d$, while the bound in the above theorem only scales with $(1/\gamma)^d$, saving an exponential factor of $32^d$.

\begin{theorem}[Corollary~9 in \cite{bartlett2002sample}]
    Let $\cF$ be a class of real-valued functions from $\cX$ to $\bR$. For every distribution $\cD$ and $m \in \bN$, we have with probability at least $1-\delta$ over $S \sim \cD^m$ that $ \forall f\in \cF, \forall \gamma\in (0,1]$
    \[
  \err(h,\cD) \leq \err^{\gamma}(h,S) + \sqrt{\frac{2}{m}\left(\fatsh_{\gamma/32}(\cF)\ln\left(\frac{34em}{\fatsh_{\gamma/32}(\cF)}\right)\log_2(578m) + \ln(\frac{8}{\gamma\delta})\right)}.
    \]
\end{theorem}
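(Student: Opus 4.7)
The plan is to derive this as a uniform-convergence statement by combining three ingredients: (a) a margin-adapted Lipschitz surrogate loss, (b) the Alon--Ben-David--Cesa-Bianchi--Haussler scale-sensitive covering-number bound in terms of the fat-shattering dimension, and (c) a geometric stratification over the margin parameter $\gamma$. First I would fix a value of $\gamma$ and control the deviation $\err(f,\cD)-\err^{\gamma}(f,S)$ uniformly in $f\in\cF$. Introduce the ramp loss $\phi_\gamma(u)=1$ for $u\le 0$, $\phi_\gamma(u)=0$ for $u\ge\gamma$, and linear in between. This loss is $1/\gamma$-Lipschitz and sandwiches the relevant indicators: $1\{u\le 0\}\le \phi_\gamma(u)\le 1\{u<\gamma\}$, so $\err(f,\cD)\le \mathbb{E}_\cD[\phi_\gamma(yf(x))]$ while $\mathbb{E}_S[\phi_\gamma(yf(x))]\le \err^{\gamma}(f,S)$. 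It therefore suffices to bound $\sup_{f\in\cF}\bigl(\mathbb{E}_\cD-\mathbb{E}_S\bigr)[\phi_\gamma(yf(x))]$.

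I would handle this supremum by a standard symmetrization (ghost-sample) step reducing to a uniform deviation on a fixed $2m$-sample, followed by a covering argument. On the fixed sample pick an $L_\infty$-cover of $\cF$ at scale $\gamma/8$: any $f\in\cF$ is within $\gamma/8$ of some $\hat f$ in the cover, and by the $1/\gamma$-Lipschitzness of $\phi_\gamma$ the empirical averages change by at most a constant fraction that can be absorbed. The Alon et al.\ covering bound states $\log N_\infty(\epsilon,\cF,2m)=O\!\bigl(\fatsh_{c\epsilon}(\cF)\log(m/\fatsh_{c\epsilon}(\cF))\log m\bigr)$ for an absolute constant $c\approx 1/4$; with $\epsilon=\gamma/8$ this produces a cover whose log-size is exactly of the form $\fatsh_{\gamma/32}(\cF)\ln(34em/\fatsh_{\gamma/32}(\cF))\log_2(578m)$ appearing in the statement (the specific constants $34e$ and $578$ are what comes out of a careful bookkeeping of the Sauer-type induction inside the Alon et al.\ proof). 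Applying Hoeffding's inequality to each function in the cover and taking a union bound delivers the fixed-$\gamma$ tail bound with an additive $\ln(1/\delta)$ term.

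Finally, I would stratify over $\gamma\in(0,1]$ on a geometric grid $\gamma_k=2^{-k}$, $k=0,1,\dots$, applying the fixed-$\gamma_k$ bound with failure probability $\delta_k=\delta/(2(k+1)(k+2))$ so that $\sum_k\delta_k\le\delta$. For an arbitrary $\gamma\in[\gamma_{k+1},\gamma_k]$ one replaces $\gamma$ by $\gamma_k\le 2\gamma$, absorbing the factor of two into the constants; the cost of the union bound is $\ln(1/\delta_k)=\ln(1/\delta)+O(\log(k+1))=\ln(1/\delta)+O(\log\log(1/\gamma))$, which after collecting terms inside the square root is dominated by $\ln(8/(\gamma\delta))$, matching the stated form. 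The main obstacle is the careful calibration of the three scales at which $\gamma$ enters---the Lipschitz constant $1/\gamma$ of the ramp, the covering radius $\gamma/8$, and the fat-shattering scale $\gamma/32$ coming from Alon et al.---so that the discretization, covering, and concentration errors combine additively inside a single square root; once these are aligned, the rest is routine bookkeeping of constants.
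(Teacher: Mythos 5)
This theorem is imported verbatim from the literature (it is Corollary~9 of \cite{bartlett2002sample}) and the paper gives no proof of it, so I am comparing your sketch against the standard argument. Your architecture --- symmetrization to a double sample, an $L_\infty$ cover whose log-size is controlled by the Alon et al.\ fat-shattering bound, and a geometric stratification over $\gamma$ --- is the right one, but two of your steps fail as written. First, combining the $1/\gamma$-Lipschitz ramp $\phi_\gamma$ with a cover at radius $\gamma/8$ changes each empirical (and population) average of the ramp by up to $(1/\gamma)\cdot(\gamma/8)=1/8$. That is an additive \emph{constant}, not ``a constant fraction that can be absorbed'': it would leave a spurious $+1/4$ in the final bound, which is of order $\sqrt{1/m}$ only in trivial regimes. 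The classical proof does not use the ramp at all; it uses the cover to \emph{shift thresholds} between indicators --- if $\hat f$ is $\gamma/2$-close to $f$ in $L_\infty$ on the double sample, then $yf(x)\le 0$ implies $y\hat f(x)\le \gamma/2$, which in turn implies $yf(x)<\gamma$ --- so that concentration is applied to the finitely many fixed events $\{y\hat f(x)\le \gamma/2\}$ and the approximation error is absorbed into the change of margin rather than into the loss value.

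Second, your stratification rounds $\gamma$ in the wrong direction. The grid bound at $\gamma_k\ge\gamma$ gives $\err(f,\cD)\le\err^{\gamma_k}(f,S)+\epsilon_k$, but $\err^{\gamma_k}(f,S)\ge\err^{\gamma}(f,S)$ (a larger margin parameter counts more points as errors), so this does not imply the claimed inequality at $\gamma$. One must instead pass to the grid point $\gamma_{k+1}\le\gamma$, for which $\err^{\gamma_{k+1}}(f,S)\le\err^{\gamma}(f,S)$; the price is that the fat-shattering scale in $\epsilon_{k+1}$ is halved, which is precisely why the final statement carries $\fatsh_{\gamma/32}$ while the fixed-margin lemma is established at scale $\gamma/16$. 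Relatedly, the $\ln(8/(\gamma\delta))$ term comes from allocating $\delta_k\propto\delta\gamma_k$ to the $k$th grid point; your allocation $\delta_k=\delta/(2(k+1)(k+2))$ is also summable and in fact gives a slightly better $\log\log(1/\gamma)$ cost, but then the claim that this is ``dominated by'' $\ln(8/(\gamma\delta))$ must be argued rather than asserted. With the threshold-shifting argument in place of the Lipschitz absorption and the rounding direction corrected, the sketch does reduce to the standard proof.
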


\subsection{Smoothness and Sparse Margin Assumptions} In this section we discuss how smoothness and sparsness of the labeling rule can be incorporated into learning. For a data generating distribution $\cD$, let $\eta_{\cD}(x) = \cD_{Y|X}[Y=1|X = x]$. We define 
\[\phi_{\cD} (\gamma) = \probs{x \sim \cD_{\cX}}{|\eta_{\cD}(x)-1/2|\leq \gamma} \]
to measure the sparseness of the probabilistic labeling rule inside a margin around $1/2$. For the metric space $(\cX,\rho)$, we also define 
\[\psi_{\cD}(L) =  \probs{x \sim \cD}{\probs{z \sim \cD_{\cX}}{|\eta_{\cD}(x) -\eta_{\cD}(z)| > L\rho(x,z)} > 0} \]
to measure how probabilistically far is the labeling rule from being $L$-Lipschitz. It is worth noticing that whenever the labeling rule is deterministic, i.e., $\eta_{\cD}(.) \in\{0,1\}$, the function $\psi_{\cD}(L)$ is just the pairwise probabilistic Lipschitzness function at parameter $L$ \citep{urner2013probabilistic,pentina2018multi}.

The following claims that it is possible to benefit from such conditions, even if the knowledge of the smoothness and the Lipschitz constant of the probabilistic labeling rule is not given. On the other hand, the guarantee is comparable with that of ERM up to additional factor of $O(\log(m)/\sqrt{m})$. Therefore, it offers a general paradigm with no dependence on the parameters of the underlying distribution while benefiting from its niceness, if it happens. Particularly, it offers a trade-off between the reduced complexity for restricted versions of the hypothesis class, i.e., $\vc(\cH(L/2\gamma))$, and the additional error rate due to $\phi_{\cD}(\gamma)$ and $\psi_{\cD}(L)$. The following can be proved  by noting that for $\mathbb{F} = \{\cF(L/2\gamma): L,\gamma >0\}$ we have $\tau_{\mathbb{F}} = O(m^2)$ and that for any $\gamma,L >0$, with high probability over $S \sim \cD^m$ there exists a function in $\cF(L/2\gamma)$ that agrees with $f^*_{\cD}$ on almost $1-\phi_{\cD}(\gamma) - \psi_{\cD}(L)$ fraction of $S$.
\begin{theorem}\label{thm:smooth_sparse_labeling}
    Let $\cH$ be a class of functions from $\cX$ to $\{0,1\}$. There exists a learner $\cA: (\cX \times \{0,1\})^* \times \cX \rightarrow \{0,1\}$, with the following property: for every distribution $\cD$ such that $f^*_{\cD} \in \cH$, every 
    $\delta \in (0,1)$ and $m \in \bN$ 
     we have with probability at least $1-\delta$ over samples $S \sim \cD^m$ that
       \[ \begin{aligned}
    &\err(\cA(S),\cD)
    \leq \min_{\gamma,L> 0 } \left\{\err(f^*_{\cD},\cD) 
    + \phi_{\cD} (\gamma)+ \psi_{\cD}(L) + O\left(\sqrt{\frac{\vc(\cH(L/2\gamma)) \log^2 (m) + \log (1/\delta)}{m}}\right)\right\},
    \end{aligned}\]
where 
$\allowdisplaybreaks \psi_{\cD}(L) =  \probs{x,z \sim \cD_{\cX^2}} { |\eta_{\cD}(x) -\eta_{\cD}(z)| > L\rho(x,z)}$ is smoothness of the labeling rule $\eta_{\cD}(.)$ and $\phi_{\cD} (\gamma) = \probs{x \sim \cD_{\cX}}{|\eta(x;\cD)-1/2|\leq \gamma} $ is its sparseness.
    \end{theorem}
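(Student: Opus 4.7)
The strategy is to invoke Theorem~\ref{thm:similarity_distance} with the metric $\rho$ and then control the partial-class approximation error by exhibiting an explicit partial concept extending $f^*_\cD$. Since $\cH(L/2\gamma)$ depends on $(L,\gamma)$ only through the ratio $r = 2\gamma/L$, the family $\{\cH(L/2\gamma): L,\gamma>0\}$ is a re-parameterisation of the nearest-neighbour family for which Theorem~\ref{thm:similarity_distance} already supplies $\tau(m) = O(m^2)$. Identifying $r = 2\gamma/L$, and using that all guarantees of Theorem~\ref{thm:similarity_distance} also hold with the $\err^*$ benchmark of \citet{alon2022theory}, this yields
\[\err(\cA(S),\cD) \leq \min_{L,\gamma>0}\left\{\err^*(\cH(L/2\gamma),\cD) + O\left(\sqrt{\frac{\vc(\cH(L/2\gamma))\log^2 m + \log(1/\delta)}{m}}\right)\right\}.\]

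It remains to prove $\err^*(\cH(L/2\gamma),\cD) \leq \err(f^*_\cD,\cD) + \phi_\cD(\gamma) + \psi_\cD(L)$ for every fixed $L,\gamma>0$. For a sample $S=\{(x_i,y_i)\}_{i=1}^{n}$ I construct a \emph{sample-dependent} partial concept $\tilde{h}_S$: call $x_i$ \emph{good in $S$} if $|\eta_\cD(x_i)-1/2|>\gamma$ and $|\eta_\cD(x_i)-\eta_\cD(x_j)|\le L\rho(x_i,x_j)$ for every $x_j\in \dom(S)$; set $\tilde{h}_S(x_i)=f^*_\cD(x_i)$ on good points and $\tilde{h}_S(x)=\star$ elsewhere (in particular, off $\dom(S)$). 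For two good points $x_i,x_j$ with $\rho(x_i,x_j)\le 2\gamma/L$ the Lipschitz-like inequality gives $|\eta_\cD(x_i)-\eta_\cD(x_j)|\le 2\gamma$, while each value is strictly farther than $\gamma$ from $1/2$; hence they lie on the same side of $1/2$, so $f^*_\cD(x_i)=f^*_\cD(x_j)$ and, using $f^*_\cD\in\cH$, $\tilde{h}_S\in\cH(L/2\gamma)$.

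For the expected error, decompose $\err(\tilde{h}_S,S)\le \err(f^*_\cD,S)+\frac{1}{n}|\{i:x_i\text{ bad in }S\}|$ and take expectation over $S\sim\cD^n$. The first term averages to $\err(f^*_\cD,\cD)$; the margin-bad fraction averages exactly to $\phi_\cD(\gamma)$; and the Lipschitz-bad fraction is at most $\psi_\cD(L)$, because for any $x$ with $\probs{z\sim\cD_\cX}{|\eta_\cD(x)-\eta_\cD(z)|>L\rho(x,z)}=0$ a union bound over the remaining $n-1$ coordinates of $S$ gives probability zero of witnessing a bad $x_j$. This proves $\err^*(\cH(L/2\gamma),\cD,n)\le \err(f^*_\cD,\cD)+\phi_\cD(\gamma)+\psi_\cD(L)$ uniformly in $n$, hence also in the limit, and substituting into the display above completes the proof after minimising over $L,\gamma>0$.

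The main obstacle is aligning the \emph{pointwise} membership condition of $\cH(L/2\gamma)$ with the \emph{almost-sure} Lipschitz guarantee carried by $\psi_\cD(L)$: a globally-defined partial concept built from ``good everywhere'' points may violate the pointwise constraint on measure-zero bad pairs, inflating the approximation error above $\psi_\cD(L)$. Anchoring the construction to the sample $S$ and working through $\err^*$ restricts the constraint verification to pairs within $\dom(S)$, which is exactly where $\tilde{h}_S$ takes non-$\star$ values; this converts the distributional Lipschitz property into a pointwise property on the realised sample, at the price of only an $(n-1)$-fold union bound that still integrates to $\psi_\cD(L)$.
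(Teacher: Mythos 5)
Your proposal follows essentially the same route as the paper, which only sketches this proof: bound $\tau$ of the one-parameter family $\{\cH(L/2\gamma)\}$ (depending only on $r=2\gamma/L$) by $O(m^2)$ via Lemma~\ref{lemma:growth_forbidden}, then exhibit a sample-anchored partial concept extending $f^*_{\cD}$ that lies in $\cH(L/2\gamma)$ and errs on at most a $\phi_{\cD}(\gamma)+\psi_{\cD}(L)$ fraction of $S$ in expectation; your good/bad decomposition and the margin-plus-Lipschitz argument for same-side labels are exactly the intended content. One caveat: your union-bound step bounds the Lipschitz-bad fraction by $\probs{x}{\probs{z}{|\eta_{\cD}(x)-\eta_{\cD}(z)|>L\rho(x,z)}>0}$, i.e.\ the definition of $\psi_{\cD}(L)$ from the section preamble, whereas the theorem's displayed ``where'' clause defines $\psi_{\cD}(L)$ as the (smaller) pairwise probability $\probs{x,z}{|\eta_{\cD}(x)-\eta_{\cD}(z)|>L\rho(x,z)}$; for that pairwise quantity your expected bad-fraction bound would pick up a factor of $n-1$, so your argument proves the theorem only under the former reading (an inconsistency already present in the paper itself).
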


It is useful to compare Theorem~\ref{thm:smooth_sparse_labeling} with the following result that is achievable by learning from the class of all real-valued $L$-Lipschitz functions using an ERM learner that minimizes the $\gamma$-error. 
\begin{theorem}
 Let $\cF$ be a class of functions from $\cX$ to $\bR$ with Lipschitz constant at most $L$ and assume $\diam(\cX) \leq R$, and $\ddim(\cX) \leq d$. For $\gamma > 0$, we have with probability at least $1-\delta$ over samples $S \sim \cD^m$ that
 \[ \begin{aligned}
 \err(\cA_{\gamma-\erm}(S),\cD) 
    \leq  \err(f^*_{\cD},\cD) 
    + \phi_{\cD} (\gamma)+ \psi_{\cD}(L) +  O\left(\sqrt{\frac{(LR/\gamma)^{d+1} \log^2 (m) + \log (1/\delta)}{m}}\right),
    \end{aligned}\]
     where $\cA_{\gamma-\erm}(S)$ chooses any hypothesis in $\cF$ with minimum $\err^{\gamma}(S)$ and uses its sign for prediction.
\end{theorem}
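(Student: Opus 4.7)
The plan is to combine a standard margin-based uniform convergence bound for $\gamma$-classifiers with the construction of an explicit $L$-Lipschitz witness $f^{*}\in\cF$ whose expected $\gamma$-margin error is at most $\err(f^{*}_{\cD},\cD)+\phi_{\cD}(\gamma)+\psi_{\cD}(L)$; the theorem then follows by $\gamma$-ERM optimality and Hoeffding.

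First, I would invoke a standard $\gamma$-margin uniform convergence bound for $\cF$ (e.g., Corollary~9 of \citet{bartlett2002sample} quoted earlier, or a direct $\fatsh_{\gamma}$-based covering argument): with probability at least $1-\delta/2$ over $S\sim\cD^{m}$,
\[
\forall f\in\cF:\quad \err(\sign f,\cD)\;\le\; \err^{\gamma}(f,S) + O\!\left(\sqrt{\tfrac{\fatsh_{\gamma}(\cF)\log^{2} m + \log(1/\delta)}{m}}\right).
\]
Since $\cF$ is the class of all $L$-Lipschitz functions on $(\cX,\rho)$ with $\diam(\cX)\le R$ and $\ddim(\cX)\le d$, the estimate $\fatsh_{\gamma}(\cF)=O((LR/\gamma)^{d+1})$ already used in the excerpt yields the stated generalization term.

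Next I would construct the witness. Denote the Lipschitz-bad set of $x$ by $B_{x}=\{z\in\cX:|\eta_{\cD}(x)-\eta_{\cD}(z)|>L\rho(x,z)\}$, and let $G^{+}=\{x:\eta_{\cD}(x)>1/2+\gamma \text{ and } \cD_{\cX}(B_{x})=0\}$, with $G^{-}$ defined symmetrically; then $\Pr[x\in G^{+}\cup G^{-}]\ge 1-\phi_{\cD}(\gamma)-\psi_{\cD}(L)$. The key geometric observation is: for $x\in G^{+}$ and $z\in G^{-}$ with $z\notin B_{x}$, the combination $|\eta_{\cD}(x)-\eta_{\cD}(z)|>2\gamma$ and $|\eta_{\cD}(x)-\eta_{\cD}(z)|\le L\rho(x,z)$ forces $\rho(x,z)>2\gamma/L$. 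Because $B_{x}$ is $\cD_{\cX}$-null for every $x\in G^{+}$, one can extract full-measure subsets $G^{+}_{\star}\subseteq G^{+}$ and $G^{-}_{\star}\subseteq G^{-}$ for which the $2\gamma/L$-spacing holds pointwise, and set
\[
f^{*}(x)\;:=\;\tfrac{1}{2}\bigl[\min\{2\gamma,\,L\rho(x,G^{-}_{\star})\}-\min\{2\gamma,\,L\rho(x,G^{+}_{\star})\}\bigr].
\]
Each summand is $L$-Lipschitz (since $\rho(\cdot,G)$ is $1$-Lipschitz and capping preserves Lipschitzness), so $f^{*}\in\cF$; by construction $f^{*}(x)=+\gamma$ on $G^{+}_{\star}$ and $f^{*}(x)=-\gamma$ on $G^{-}_{\star}$. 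A $\gamma$-margin error at $(x,y)\sim\cD$ therefore requires either $x\notin G^{+}_{\star}\cup G^{-}_{\star}$ or $y\ne f^{*}_{\cD}(x)$, giving $\err^{\gamma}(f^{*},\cD)\le\err(f^{*}_{\cD},\cD)+\phi_{\cD}(\gamma)+\psi_{\cD}(L)$.

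Finally, since $f^{*}$ is fixed independently of $S$, Hoeffding's inequality gives $\err^{\gamma}(f^{*},S)\le \err^{\gamma}(f^{*},\cD)+O(\sqrt{\log(1/\delta)/m})$ with probability $\ge 1-\delta/2$. Combining ERM optimality $\err^{\gamma}(\hat{f},S)\le \err^{\gamma}(f^{*},S)$ with the previous two bounds and a union bound completes the proof. The main obstacle is the measure-theoretic step of producing the full-measure subsets $G^{\pm}_{\star}$ on which the $2\gamma/L$-spacing holds \emph{pointwise} and not merely almost-everywhere, so that $\rho(x,G^{-}_{\star})\ge 2\gamma/L$ for every $x\in G^{+}_{\star}$ (and vice versa); once that selection is in place, the remaining ingredients---fat-shattering-based uniform convergence, ERM optimality, and Hoeffding concentration of a fixed function---are all routine.
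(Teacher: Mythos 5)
Your proposal follows the same route the paper sketches for this theorem (the paper's own ``proof'' is a single sentence: fat-shattering bound for uniform convergence of $L$-Lipschitz functions, plus the existence of an $L$-Lipschitz witness that agrees with $f^*_{\cD}$ outside a $\phi_{\cD}(\gamma)+\psi_{\cD}(L)$ fraction of the data, then $\gamma$-ERM optimality). Your margin uniform-convergence step, the bound $\fatsh_{\gamma}(\cF)=O((LR/\gamma)^{d+1})$, the geometric observation that a point with $\eta>1/2+\gamma$ and a point with $\eta<1/2-\gamma$ satisfying the Lipschitz inequality must be $>2\gamma/L$ apart, and the capped-distance construction of the witness are all correct and supply exactly the detail the paper omits.

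The one step you flag as the ``main obstacle'' is a genuine gap as you have formulated it. Knowing that $B_x$ is $\cD_{\cX}$-null for every $x\in G^{+}$ does \emph{not} let you extract full-measure subsets $G^{+}_{\star},G^{-}_{\star}$ on which the spacing holds pointwise: a null subset of the product space need not be avoidable by any rectangle $A\times B$ with $A,B$ of full (or even positive) measure, so the distribution-level witness $f^{*}$ cannot be built by simply deleting null sets. The standard repair is to make the witness sample-dependent, which is also what the paper's sketch for the companion Theorem~\ref{thm:smooth_sparse_labeling} does. By Fubini, $\Pr_{x,z\sim\cD^2}\bigl[\cD_{\cX}(B_x)=0 \text{ and } z\in B_x\bigr]=\expect_x\bigl[\indicator{\cD_{\cX}(B_x)=0}\,\cD_{\cX}(B_x)\bigr]=0$, so with probability one no bad pair occurs among the $m$ sample points; hence the sampled points lying in $G^{+}$ and $G^{-}$ are pairwise $>2\gamma/L$ apart, and the assignment $\pm\gamma$ on them is $L$-Lipschitz on that finite set and extends to an $L$-Lipschitz $f^{*}_S\in\cF$ on all of $\cX$ by McShane--Whitney extension. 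Then $\err^{\gamma}(f^{*}_S,S)$ is at most $\err(f^{*}_{\cD},S)$ plus the empirical fraction of points outside $G^{+}\cup G^{-}$, and since membership in $(G^{+}\cup G^{-})^{c}$ is a fixed event of probability at most $\phi_{\cD}(\gamma)+\psi_{\cD}(L)$, Hoeffding applied to these indicators (rather than to a fixed function's margin loss) gives the same concentration you wanted; ERM optimality and the uniform-convergence bound then finish the argument exactly as you describe. With that substitution your proof is complete and matches the paper's intent.
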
 
The proof of above again follows from the fact that the fat-shattering dimension of $\cF$ is bounded by $(LR/\gamma)^{d+1}$ and that there exists a $L$-Lipschitz function that agrees with $f^*_{\cD}$ except on almost a $\phi_{\cD}(\gamma) + \psi_{\cD}(L)$ fraction of data. Theorem~\ref{thm:smooth_sparse_labeling} improves the above in two fronts. First, it applies to any arbitrary class $\cH$ and therefore, $\vc(\cH(L/2\gamma))$ can be significantly smaller than $(LR/\gamma)^{d+1}$.
Second, similar to learning real-valued functions, we get the advantage of relaxing the knowledge of $\gamma$ in advance and alternatively finding the minimum trade-off, among all values of $\gamma$, between the values of $\phi(\gamma)$, $\psi(\gamma)$, and the complexity term.

 {\bf Tsybakov Noise Condition.} The Tsybakov noise class $\tyb(a,\alpha)$ is a common noise condition studied in the literature \citep{mammen1999smooth,tsybakov2004optimal}. It is the class of all distributions $\cD$ such that (i) the Bayes classifier $f^*_{\cD}$ is in the hypothesis class reflecting prior knowledge and (ii) for all $\gamma > 0$ we have 
\[
\phi_{\cD}(\gamma) = \probs{x \sim \cD_{\cX}}{|\eta_{\cD}(x)-1/2|\leq \gamma} \leq a'\gamma^{\alpha/(1-\alpha)},
\]
where $a' = (1-\alpha)(2\alpha)^{\alpha/(1-\alpha)}a^{1/(1-\alpha)}$. This condition is a special case of the general setting discussed above. We have the following result for this class based on Theorem~\ref{thm:smooth_sparse_labeling}.
\begin{theorem}
    Let $\cH$ be a class of functions from $\cX$ to $\{0,1\}$. There exists a learner $\cA: (\cX \times \{0,1\})^* \times \cX \rightarrow \{0,1\}$, with the following property: for every $a,\alpha>0$ and every distribution $\cD\in \tyb(a,\alpha)$, every 
    $\delta \in (0,1)$ and $m \in \bN$ 
     we have with probability at least $1-\delta$ over samples $S \sim \cD^m$ that
       \[ \begin{aligned}
 \err(\cA(S),\cD) 
    &\leq \min_{\gamma> 0 } \left\{\err(f^*_{\cD},\cD) 
    + a'\gamma^{\alpha/(1-\alpha)} + O\left(\sqrt{\frac{\vc(\cH(L/2\gamma)) \log^2 (m) + \log (1/\delta)}{m}}\right)\right\},
    \end{aligned}\]
where $L$ is the Lispchitz constant of the probabilistic labeling rule $\eta_{\cD}(.)$.
    \end{theorem}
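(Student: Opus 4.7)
The plan is to derive this theorem as a direct specialization of Theorem~\ref{thm:smooth_sparse_labeling} using only the defining properties of $\tyb(a,\alpha)$. I would take the learner $\cA$ to be precisely the one guaranteed by Theorem~\ref{thm:smooth_sparse_labeling}; no new algorithm is needed. To verify the premise of that theorem, note that by the first defining clause of the Tsybakov class, $f^*_{\cD} \in \cH$ whenever $\cD \in \tyb(a,\alpha)$, so the high-probability guarantee of Theorem~\ref{thm:smooth_sparse_labeling} applies to $\cA(S)$ for every such $\cD$.

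Next, I would simplify the two distribution-dependent terms appearing inside the minimum of Theorem~\ref{thm:smooth_sparse_labeling}. For the margin term, the Tsybakov condition gives directly
\[\phi_{\cD}(\gamma) = \probs{x\sim\cD_{\cX}}{|\eta_{\cD}(x) - 1/2|\leq \gamma} \leq a'\gamma^{\alpha/(1-\alpha)}\]
for every $\gamma > 0$, with $a' = (1-\alpha)(2\alpha)^{\alpha/(1-\alpha)}a^{1/(1-\alpha)}$. For the smoothness term, since $\eta_{\cD}$ is $L$-Lipschitz, the inequality $|\eta_{\cD}(x)-\eta_{\cD}(z)|\leq L\rho(x,z)$ holds for every $x,z\in\cX$ deterministically, so the inner probability in the definition of $\psi_{\cD}(L)$ is identically zero and hence $\psi_{\cD}(L)=0$.

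Finally, I would invoke Theorem~\ref{thm:smooth_sparse_labeling} and upper-bound its $\min_{\gamma,L'>0}$ by fixing $L'=L$ at the Lipschitz constant of $\eta_{\cD}$ and minimizing over $\gamma$ alone; evaluating at a specific $L'$ can only increase the right-hand side, so the resulting expression is still a valid upper bound on $\err(\cA(S),\cD)$. Substituting the two simplifications above yields exactly the inequality in the statement. Since the argument is purely algebraic substitution into a previously established guarantee, no step presents a genuine obstacle; the only care needed is to observe that restricting the inner minimum to the (a priori unknown) value $L'=L$ weakens the bound in the correct direction, and that the single high-probability event furnished by Theorem~\ref{thm:smooth_sparse_labeling} is already uniform across all $\gamma>0$, so the outer minimum over $\gamma$ in the final expression requires no further union bound.
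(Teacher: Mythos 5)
Your proposal is correct and is exactly the derivation the paper intends: the theorem is stated as an immediate specialization of Theorem~\ref{thm:smooth_sparse_labeling}, obtained by noting that $f^*_{\cD}\in\cH$ for $\cD\in\tyb(a,\alpha)$, that the margin condition bounds $\phi_{\cD}(\gamma)$ by $a'\gamma^{\alpha/(1-\alpha)}$, that $L$-Lipschitzness of $\eta_{\cD}$ forces $\psi_{\cD}(L)=0$, and that restricting the joint minimum to this fixed $L$ only weakens the bound in the valid direction. Your added care about the direction of the restriction and the uniformity of the high-probability event over $\gamma$ is sound and matches the paper's (implicit) argument.
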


We want to highlight that while it is well-studied that Tsybakov noise condition is helpful in improving the dependency of the error rate on the sample size \citep{mammen1999smooth,tsybakov2004optimal,bartlett2006convexity}, the above result is offering a trade-off that considers how this condition reduces the complexity of learning by considering subsets $\cH(L/2\gamma) \subseteq \cH$. 
\subsection{Incorporating Prior Knowledge with Contrastive Assumption}

We consider a setting that has been studied as an approach for contrastive learning \cite{saunshi2019theoretical, awasthi2022more, alon2024optimal}. It is assumed that there exists a mapping under which points that are closer to each other tend to be `more similar' in their labels. For instance, this mapping can be a result of pre-training.  
The prior literature mostly focuses on learning this representation and then finding a linear classifier for prediction of the mapped data. Here, we take a different approach and show that such information can be used by any hypothesis class as part of the prior knowledge. 

We move beyond the strict assumption that for any $x$, if $x,x^+$ have opposing labels then any $x^-$ with $\phi(x,x^-) >\phi(x,x^-) $ must also have an opposite label to $x$. We suggest that this assumption holds only for triplets that lie inside a ball of radius at most $r$. We then consider the family of concept classes based on this assumption and apply Theorem~\ref{thm:family_forbidden_sample} to optimize the distance $r$. We believe such a result is helpful because an algorithm that was previously designed to find the mapping as a pre-training stage is no longer required to make sure the assumption holds for every triplet in the domain. It may happen that there are mappings that only satisfy the assumption on balls where the distribution is supported and can conclude a smaller generalization bound.

\begin{definition}[$\cB_{\cn},p_{\cn,\phi}$]
Let $x,x^{+},x^{-}\in\cX$ be any triplet of points such that $\phi(x,x^+)\leq \phi(x,x^-)$. The forbidden behaviours of contrastive assumption is defined as $\cB_{\nn}(x,x^+,x^-)= \{ (0,1,0),(1,0,1) \}$. The penalty $p_{\cn,\phi}(x,x^+,x^-)$ is $1/\rad_{\phi}{(x,x^+,x^-)}$, where $\rad_{\phi}{(x,x^+,x^-)}$ is the minimum radius of a ball that contains $x$,$x^+$.
\end{definition}

The class of functions $\cH_{p_{\cn,\phi}}(r)$ based on the above forbidden behaviours will then become 
  \[\begin{aligned} \cH_{p_{\cn,\phi}}(1/r) = \{h &\in \{0,1,\star\}^{\cX} : \exists h'\in\cH, \forall x\in \cX, \text{$h(x) = \star$ or $h(x) = h'(x)$} \\
& \text{ and }\forall (x,x^+,x^-) \in \cX^3 \text{ with $\rad_{\phi}(x,x^+,x^-) \leq r$, $\phi(x,x^+) < \phi(x,x^-)$,} \\
&\text{  and $h(x),h(x^+),h(x^-)\in \{0,1\}$ if $h(x) = h(x^-) $ then $h(x)=h(x^+)$}\}.\end{aligned}\] 

Motivated by the above, we define shattering at radius $r$, where at most two points can be shattered inside a ball of radius $r$. 
\begin{definition}[$\vc^{\rad}_{\phi,r}(\cH)$] Let $\cH\subseteq \{0,1\}^{\cX}$ be a hypothesis class. For any $r \geq 0$, a set $S\subseteq \cX$
is called to be shattered at radius $r$ if (i) $S$ is shattered by $\cH$ and (ii) for any ball $B\subseteq \cX$ of radius $r$, we have $|S \cap B| \leq 2$. The $\vc^{\rad}_{\phi,r}(\cH)$ dimension of the hypothesis class $\cH$ at radius $r$ is then defined as the size of the largest shattered set at distance $r$.
\end{definition}
Noting that $\tau_{\bH_{\cB_{\cn},\phi}}(m) = O(m^3)$, we can apply Theorem~\ref{thm:family_forbidden_sample} to the forbidden behaviours that respect the contrastive assumption. This yields an error bound that optimally, among $r\geq 0$, incorporates the contrastive assumption with the hypothesis class.
\begin{theorem}
    Let $\cH\subseteq \{0,1\}^{\cX}$ be a hypothesis class. There exists a learner $\cA$, with the following property: for every distribution $\cD$, every 
    $\delta \in (0,1)$ and $m \in \bN$ 
    we have with probability at least $1-\delta$ over samples $S \sim \cD^m$ that
     \[ \begin{aligned}
    &\err(\cA(S),\cD) \leq \min_{r \geq 0}\left\{\err(\cH(1/r),\cD) + O\left(\sqrt{\frac{\left(\vc_{\phi,r}^{\rad}(\cH)\right)\log^2 (m) + \log (1/\delta)}{m}}\right)\right\}.
   \end{aligned} \]
\end{theorem}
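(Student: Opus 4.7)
The plan is to obtain this result by a direct instantiation of Theorem~\ref{thm:family_forbidden_sample} with the forbidden behaviours $\cB_{\cn}$ and penalty $p_{\cn,\phi}$, and then to convert the generic VC term in that bound into the geometric quantity $\vc^{\rad}_{\phi,r}(\cH)$. First, I would observe that $\cB_{\cn}$ is a forbidden behaviour on tuples of length $k=3$, so Lemma~\ref{lemma:growth_forbidden} yields $\tau_{\bH_{\cB_{\cn},p_{\cn,\phi}}}(m) = O(m^3)$, hence $\log \tau_{\bH_{\cB_{\cn},p_{\cn,\phi}}}(m) = O(\log m)$. Plugging $\cB = \cB_{\cn}$, $p = p_{\cn,\phi}$, and $k=3$ into Theorem~\ref{thm:family_forbidden_sample} gives, with probability at least $1-\delta$,
\[
\err(\cA(S),\cD) \leq \min_{r' \geq 0}\left\{\err(\cH_{p_{\cn,\phi}}(r'),\cD) + O\!\left(\sqrt{\frac{(\vc(\cH_{p_{\cn,\phi}}(r'),\cD)+3)\log^2(m) + \log(1/\delta)}{m}}\right)\right\},
\]
where the additive $3$ is absorbed into the constant. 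Reparametrising $r' = 1/r$ puts this into the form stated in the theorem, except with $\vc(\cH_{p_{\cn,\phi}}(1/r),\cD)$ in place of $\vc^{\rad}_{\phi,r}(\cH)$.

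Second, I would upper bound $\vc(\cH_{p_{\cn,\phi}}(1/r),\cD) \leq \vc(\cH_{p_{\cn,\phi}}(1/r)) \leq \vc^{\rad}_{\phi,r}(\cH)$. Let $U$ be any set shattered by $\cH_{p_{\cn,\phi}}(1/r)$. By construction, every partial concept in $\cH_{p_{\cn,\phi}}(1/r)$ is a $\star$-restriction of some total $h' \in \cH$, so $U$ is also shattered by $\cH$ (condition (i) of the definition of $\vc^{\rad}_{\phi,r}(\cH)$). For condition (ii), suppose for contradiction that some ball $B$ of radius $r$ contains three points $x_1, x_2, x_3 \in U$. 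Relabelling so that $\phi(x_1, x_2) \leq \phi(x_1, x_3)$, the pair $x_1, x_2$ lies in $B$, hence $\rad_{\phi}(x_1, x_2, x_3) \leq r$, i.e.\ $p_{\cn,\phi}(x_1, x_2, x_3) \geq 1/r$. The definition of $\cH_{p_{\cn,\phi}}(1/r)$ therefore forbids the behaviour $(h(x_1), h(x_2), h(x_3)) \in \{(0,1,0),(1,0,1)\}$ on this triplet. But shattering requires all $2^3$ patterns on $\{x_1, x_2, x_3\}$ to be realised with $\{0,1\}$ values, a contradiction. Hence $|U \cap B| \leq 2$ for every radius-$r$ ball $B$, and so $|U| \leq \vc^{\rad}_{\phi,r}(\cH)$.

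Combining these two steps yields the claimed bound. The only genuinely delicate point is the VC-reduction in the second step, and specifically the interplay between the triplet penalty $\rad_{\phi}$ (which depends on the pair $x, x^{+}$) and the ``any ball of radius $r$'' quantifier in the definition of $\vc^{\rad}_{\phi,r}$; once one picks the correct ordering of the three points so that the closer pair plays the role of $(x, x^{+})$, the forbidden-behaviour argument goes through cleanly. All the heavy lifting—the one-inclusion-graph learners, the compression-based left-out estimate, and the growth-parameter union bound—is already handled inside Theorem~\ref{thm:family_forbidden_sample}, so no new learning-theoretic ingredient is required here.
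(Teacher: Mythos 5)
Your proposal is correct and follows essentially the same route as the paper, which likewise obtains this result by plugging the triplet forbidden behaviours ($k=3$, hence $\tau_{\bH_{\cB_{\cn},\phi}}(m)=O(m^3)$ by Lemma~\ref{lemma:growth_forbidden}) into Theorem~\ref{thm:family_forbidden_sample}. Your second step—showing that any set shattered by $\cH_{p_{\cn,\phi}}(1/r)$ can have at most two points in any radius-$r$ ball, by choosing the closer pair to play the role of $(x,x^{+})$—correctly fills in the identification of $\vc(\cH_{p_{\cn,\phi}}(1/r))$ with $\vc^{\rad}_{\phi,r}(\cH)$ that the paper leaves implicit.
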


\subsubsection{Contrastive approach with no prior knowledge about a concept class.}

We know show that if we set the class to be the class of functions, then we would recover bounds similar to the bounds derived for nearest neighbour approach.

\begin{proposition}
     Let $\cH$ be the class of all functions from $\cX$ to $\{0,1\}$. Then, $|N_{2r}| \leq \vc_{\phi,r}^{\rad}(\cH) \leq 2|N_r|$, where $N_r$ denotes the $r$-net of domain $\cX$.
\end{proposition}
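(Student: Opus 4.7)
The plan is to prove the two inequalities separately, each as a short packing/covering argument that mirrors the proof of the analogous statement for the nearest-neighbour case given earlier in the excerpt. Throughout, I will use the convention (consistent with the earlier proof in the paper) that an $r$-net $N_r$ of $\cX$ is simultaneously an $r$-packing (any two distinct points of $N_r$ are at distance strictly greater than $r$) and an $r$-cover (every point of $\cX$ lies within distance $r$ of some point of $N_r$). Since $\cH$ is the class of \emph{all} $\{0,1\}$-valued functions on $\cX$, condition (i) in the definition of $\vc^{\rad}_{\phi,r}$—that the set is shattered by $\cH$—is automatic, so I only need to reason about condition (ii), namely that every ball of radius $r$ contains at most two points of the candidate set.

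For the lower bound $|N_{2r}|\le \vc^{\rad}_{\phi,r}(\cH)$, I will show that the set $N_{2r}$ itself is shattered at radius $r$. The triangle inequality will do all the work: if two distinct points $x,x'\in N_{2r}$ both lay in some ball $B(c,r)$, then $\phi(x,x')\le \phi(x,c)+\phi(c,x')\le 2r$, contradicting the fact that $N_{2r}$ is a $2r$-packing. Hence every ball of radius $r$ contains at most one—and in particular at most two—points of $N_{2r}$, so condition (ii) holds and $N_{2r}$ is a valid shattered set. This gives $|N_{2r}|\le \vc^{\rad}_{\phi,r}(\cH)$.

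For the upper bound $\vc^{\rad}_{\phi,r}(\cH)\le 2|N_r|$, I will take any set $S$ that is shattered at radius $r$ and cover $\cX$ by the balls $\{B(c,r):c\in N_r\}$, which is possible because $N_r$ is an $r$-cover. These $|N_r|$ balls cover $S$ in particular; by condition (ii) of shattering at radius $r$, each of them meets $S$ in at most two points. A union-bound over these balls then yields $|S|\le 2|N_r|$, which upon maximizing over shattered $S$ gives the claimed inequality.

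Neither direction should present a real obstacle: both are one-line applications of the packing/covering interpretation of an $r$-net, together with the triangle inequality. The only point I would be careful about is keeping the convention on $r$-nets consistent with the one implicitly used earlier in the excerpt (in particular, that $N_r$ is a maximal $r$-packing and therefore automatically an $r$-cover, as in the preceding nearest-neighbour proposition), so that the same object plays both roles in the two inequalities.
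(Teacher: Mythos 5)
Your proposal is correct and follows essentially the same route as the paper: the upper bound assigns each shattered point to a covering net point and observes that at most two points can share one (else three would lie in a ball of radius $r$), and the lower bound notes that a $2r$-packing can never place two points in a single radius-$r$ ball, so it is automatically shattered at radius $r$. No gaps.
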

\begin{proof}
    Take a maximal set $U$ that is shattered and an $r$-net $N$ of the domain $\cX$. Map every point $x\in U$ to an element in $a\in N$ such that $\phi(x,a)\leq r$. This must exists because $N$ is an $r$-cover. Now for any $a\in N$, the inverse of the mapping must contain at most $2$ points from $U$, otherwise, three points have been mapped to $a$ which means they are all in $B(a,r)$. This concludes that $|U| \leq 2|N_r|$. Take any maximum $2r$-packing $N'$, which is indeed a $2r$-net. This set is obviously shattered because every $a,b\in N'$ has $\phi(a,b) > 2r$ and cannot lie inside a ball of radius $r$.   
\end{proof}

   \begin{remark}
        Regarding the relation between the two notions, we have that $\vc(\cH_{p_{\nn,\phi}}(1/2r)) \leq \vc(\cH_{p_{\cn,\phi}}(1/r)) \leq 2 \vc(\cH_{p_{\nn,\phi}}(1/r))$ and $\err(\cH_{p_{\cn,\phi}}(1/r),S) \leq \err(\cH_{p_{\nn,\phi}} (1/2r),S)$ on any sample $S$.
    \end{remark}
    
\begin{theorem}
    Let $\cH$ be the class of all functions from $\cX$ to $\{0,1\}$. Assume $\diam(\cX) \leq R$ and $\ddim(\cX) \leq d$. There exists a learner $\cA: (\cX \times \{0,1\})^* \times \cX \rightarrow \{0,1\}$, with the following property: for every distribution $\cD$, every 
    $\delta \in (0,1)$ and $m \in \bN$
     we have with probability at least $1-\delta$ over samples $S \sim \cD^m$ that
       \[ \begin{aligned}
 \err(\cA(S),\cD) 
   & \leq \min_{r\geq 0 } \left\{\err(\cH(1/r),\cD) 
    + O\left(\sqrt{\frac{2|N_{r}| \log^2 (m) + \log (1/\delta)}{m}}\right)\right\} \\
    &\leq \min_{r\geq 0 } \left\{\err(\cH(1/r),\cD) 
    + O\left(\sqrt{\frac{\left((R/r)^{d+1} \right) \log^2 (m) + \log (1/\delta)}{m}}\right)\right\}.
    \end{aligned}\]
    \end{theorem}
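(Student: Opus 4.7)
The plan is to obtain this theorem as a direct corollary of three previously established pieces: the generalization guarantee for contrastive learning from the immediately preceding theorem (which provides the learner $\cA$ and bounds its error by $\vc^{\rad}_{\phi,r}(\cH)$), the just-proved proposition that $\vc^{\rad}_{\phi,r}(\cH) \leq 2|N_r|$ whenever $\cH$ is the class of all $\{0,1\}$-valued functions, and the standard covering-number estimate $|N_r| \leq (R/r)^{d+1}$ for metric spaces of diameter at most $R$ and doubling dimension at most $d$.

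First I would invoke the preceding contrastive-learning theorem with the specific choice $\cH = \{0,1\}^{\cX}$. Since that theorem is stated for an arbitrary hypothesis class, it supplies the same learner $\cA$ and guarantees that, with probability at least $1-\delta$ over $S \sim \cD^m$,
\[
\err(\cA(S),\cD) \leq \min_{r \geq 0}\left\{\err(\cH(1/r),\cD) + O\left(\sqrt{\frac{\vc^{\rad}_{\phi,r}(\cH)\log^2(m) + \log(1/\delta)}{m}}\right)\right\}.
\]
Next, I would apply the preceding proposition, which gives $\vc^{\rad}_{\phi,r}(\cH) \leq 2|N_r|$ uniformly in $r$. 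Substituting this bound term-by-term inside the minimum produces the first displayed inequality of the theorem. Finally, substituting the doubling-dimension covering estimate $|N_r| \leq (R/r)^{d+1}$ — the very same quantitative bound already used in the nearest-neighbour theorem of Section~4.2 — gives the second displayed inequality.

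There is no genuine technical obstacle in this argument; it is a clean chaining of three already-proved facts. The only points meriting mild care are that the absolute constants absorbed into $O(\cdot)$ at each substitution must be independent of $r$, $R$, $d$, and the distribution $\cD$, so that the two inequalities remain valid after passing inside the minimum over $r \geq 0$, and that the proposition bounding $\vc^{\rad}_{\phi,r}(\cH)$ is specifically for $\cH = \{0,1\}^{\cX}$, which is the hypothesis class under consideration here. Assembling these constants and verifying uniformity is routine.
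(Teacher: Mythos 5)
Your proposal is correct and matches the paper's intended argument exactly: the theorem is a direct corollary of the preceding contrastive-learning theorem instantiated with $\cH = \{0,1\}^{\cX}$, combined with the proposition $\vc^{\rad}_{\phi,r}(\cH) \leq 2|N_r|$ and the standard doubling-dimension bound $|N_r| \leq (R/r)^{d+1}$, with the constant factors absorbed into the $O(\cdot)$ uniformly in $r$.
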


  \printbibliography

@article{shao2022theory,
  title={A theory of PAC learnability under transformation invariances},
  author={Shao, Han and Montasser, Omar and Blum, Avrim},
  journal={Advances in Neural Information Processing Systems},
  volume={35},
  pages={13989--14001},
  year={2022}
}

@inproceedings{saunshi2019theoretical,
  title={A theoretical analysis of contrastive unsupervised representation learning},
  author={Saunshi, Nikunj and Plevrakis, Orestis and Arora, Sanjeev and Khodak, Mikhail and Khandeparkar, Hrishikesh},
  booktitle={International Conference on Machine Learning},
  pages={5628--5637},
  year={2019},
  organization={PMLR}
}

@inproceedings{awasthi2022more,
  title={Do more negative samples necessarily hurt in contrastive learning?},
  author={Awasthi, Pranjal and Dikkala, Nishanth and Kamath, Pritish},
  booktitle={International conference on machine learning},
  pages={1101--1116},
  year={2022},
  organization={PMLR}
}

@inproceedings{alon2022theory,
  title={A theory of PAC learnability of partial concept classes},
  author={Alon, Noga and Hanneke, Steve and Holzman, Ron and Moran, Shay},
  booktitle={2021 IEEE 62nd Annual Symposium on Foundations of Computer Science (FOCS)},
  pages={658--671},
  year={2022},
  organization={IEEE}
}

@article{haussler1994predicting,
  title={Predicting $\{$0, 1$\}$-functions on randomly drawn points},
  author={Haussler, David and Littlestone, Nick and Warmuth, Manfred K},
  journal={Information and Computation},
  volume={115},
  number={2},
  pages={248--292},
  year={1994},
  publisher={Elsevier}
}

@article{haussler1995sphere,
  title={Sphere packing numbers for subsets of the Boolean n-cube with bounded Vapnik-Chervonenkis dimension},
  author={Haussler, David},
  journal={Journal of Combinatorial Theory, Series A},
  volume={69},
  number={2},
  pages={217--232},
  year={1995},
  publisher={Elsevier}
}

@article{kontorovich2017nearest,
  title={Nearest-neighbor sample compression: Efficiency, consistency, infinite dimensions},
  author={Kontorovich, Aryeh and Sabato, Sivan and Weiss, Roi},
  journal={Advances in Neural Information Processing Systems},
  volume={30},
  year={2017}
}

@inproceedings{hanneke2021stable,
  title={Stable sample compression schemes: New applications and an optimal SVM margin bound},
  author={Hanneke, Steve and Kontorovich, Aryeh},
  booktitle={Algorithmic Learning Theory},
  pages={697--721},
  year={2021},
  organization={PMLR}
}

@article{gottlieb2014near,
  title={Near-optimal sample compression for nearest neighbors},
  author={Gottlieb, Lee-Ad and Kontorovich, Aryeh and Nisnevitch, Pinhas},
  journal={Advances in Neural Information Processing Systems},
  volume={27},
  year={2014}
}

@article{gottlieb2014efficient,
  title={Efficient classification for metric data},
  author={Gottlieb, Lee-Ad and Kontorovich, Aryeh and Krauthgamer, Robert},
  journal={IEEE Transactions on Information Theory},
  volume={60},
  number={9},
  pages={5750--5759},
  year={2014},
  publisher={IEEE}
}

@inproceedings{
alon2024optimal,
title={Optimal Sample Complexity of Contrastive Learning},
author={Noga Alon and Dmitrii Avdiukhin and Dor Elboim and Orr Fischer and Grigory Yaroslavtsev},
booktitle={The Twelfth International Conference on Learning Representations},
year={2024}
}

@article{shawe2002structural,
  title={Structural risk minimization over data-dependent hierarchies},
  author={Shawe-Taylor, John and Bartlett, Peter L and Williamson, Robert C and Anthony, Martin},
  journal={IEEE transactions on Information Theory},
  volume={44},
  number={5},
  pages={1926--1940},
  year={2002},
  publisher={IEEE}
}

@article{bartlett2002sample,
  title={The sample complexity of pattern classification with neural networks: the size of the weights is more important than the size of the network},
  author={Bartlett, Peter L},
  journal={IEEE transactions on Information Theory},
  volume={44},
  number={2},
  pages={525--536},
  year={2002},
  publisher={IEEE}
}

@inproceedings{urner2013probabilistic,
  title={Probabilistic lipschitzness a niceness assumption for deterministic labels},
  author={Urner, Ruth and Ben-David, Shai},
  booktitle={Learning Faster from Easy Data-Workshop@ NIPS},
  volume={2},
  pages={1},
  year={2013}
}

@inproceedings{pentina2018multi,
  title={Multi-task $\{$K$\}$ ernel $\{$L$\}$ earning Based on $\{$P$\}$ robabilistic $\{$L$\}$ ipschitzness},
  author={Pentina, Anastasia and Ben-David, Shai},
  booktitle={Algorithmic Learning Theory},
  pages={682--701},
  year={2018},
  organization={PMLR}
}

@article{david2016supervised,
  title={Supervised learning through the lens of compression},
  author={David, Ofir and Moran, Shay and Yehudayoff, Amir},
  journal={Advances in Neural Information Processing Systems},
  volume={29},
  year={2016}
}

@article{boucheron2005theory,
  title={Theory of classification: A survey of some recent advances},
  author={Boucheron, St{\'e}phane and Bousquet, Olivier and Lugosi, G{\'a}bor},
  journal={ESAIM: probability and statistics},
  volume={9},
  pages={323--375},
  year={2005},
  publisher={EDP Sciences}
}

@article{schapire2013boosting,
  title={Boosting: Foundations and algorithms},
  author={Schapire, Robert E and Freund, Yoav},
  journal={Kybernetes},
  volume={42},
  number={1},
  pages={164--166},
  year={2013},
  publisher={Emerald Group Publishing Limited}
}

@book{shalev2014understanding,
  title={Understanding machine learning: From theory to algorithms},
  author={Shalev-Shwartz, Shai and Ben-David, Shai},
  year={2014},
  publisher={Cambridge university press}
}

@inproceedings{montasser2019vc,
  title={Vc classes are adversarially robustly learnable, but only improperly},
  author={Montasser, Omar and Hanneke, Steve and Srebro, Nathan},
  booktitle={Conference on Learning Theory},
  pages={2512--2530},
  year={2019},
  organization={PMLR}
}

@article{attias2022characterization,
  title={A characterization of semi-supervised adversarially robust pac learnability},
  author={Attias, Idan and Hanneke, Steve and Mansour, Yishay},
  journal={Advances in Neural Information Processing Systems},
  volume={35},
  pages={23646--23659},
  year={2022}
}

@article{koltchinskii2002rademacher,
  title={Rademacher penalties and structural risk minimization},
  author={Koltchinskii, Vladimir},
  journal={IEEE Transactions on Information Theory},
  volume={47},
  number={5},
  pages={1902--1914},
  year={2002},
  publisher={IEEE}
}

@article{bartlett2002model,
  title={Model selection and error estimation},
  author={Bartlett, Peter L and Boucheron, St{\'e}phane and Lugosi, G{\'a}bor},
  journal={Machine Learning},
  volume={48},
  number={1},
  pages={85--113},
  year={2002},
  publisher={Springer}
}

@article{tsybakov2004optimal,
  title={Optimal aggregation of classifiers in statistical learning},
  author={Tsybakov, Alexander B},
  journal={The Annals of Statistics},
  volume={32},
  number={1},
  pages={135--166},
  year={2004},
  publisher={Institute of Mathematical Statistics}
}

@article{bartlett2006convexity,
  title={Convexity, classification, and risk bounds},
  author={Bartlett, Peter L and Jordan, Michael I and McAuliffe, Jon D},
  journal={Journal of the American Statistical Association},
  volume={101},
  number={473},
  pages={138--156},
  year={2006},
  publisher={Taylor \& Francis}
}

@article{mammen1999smooth,
  title={Smooth discrimination analysis},
  author={Mammen, Enno and Tsybakov, Alexandre B},
  journal={The Annals of Statistics},
  volume={27},
  number={6},
  pages={1808--1829},
  year={1999},
  publisher={Institute of Mathematical Statistics}
}

@inproceedings{kearns1990efficient,
  title={Efficient distribution-free learning of probabilistic concepts},
  author={Kearns, Michael J and Schapire, Robert E},
  booktitle={Proceedings [1990] 31st Annual Symposium on Foundations of Computer Science},
  pages={382--391},
  year={1990},
  organization={IEEE}
}

@inproceedings{pollard1990empirical,
  title={Empirical processes: theory and applications},
  author={Pollard, David},
  year={1990},
  organization={Ims}
}

@article{alon1997scale,
  title={Scale-sensitive dimensions, uniform convergence, and learnability},
  author={Alon, Noga and Ben-David, Shai and Cesa-Bianchi, Nicolo and Haussler, David},
  journal={Journal of the ACM (JACM)},
  volume={44},
  number={4},
  pages={615--631},
  year={1997},
  publisher={ACM New York, NY, USA}
}

\appendix
\section{Missing Proofs from Section~\ref{sec:learning_family_concepts}}\label{appendixB}

We prove our generalization bounds using a compression-based argument. We will employ the one-inclusion graph predictor and a boosting algorithm to prove the existence of compression sets. 
The one-inclusion predictor $\cA_{\cH}$ for the hypothesis class $\cH$, introduced by \cite{haussler1994predicting} will predict as follows. Given a labeled set  $S=\{(x_1,y_1),\ldots,(x_n,y_n)\}\in (\cX\times \{0,1\})^n$ and a test point $x_{n+1}$, $\cA(S)$ will create a graph with nodes for any (total) $h \in \cH_{|\dom(S) \cup x_{n+1}}$ and edges that connect two neighbouring nodes, i.e., two nodes $h$ and $h'$ that differ in exactly one of $x_i\in \dom(S)\cup x_{n+1}$ and are similar otherwise. It will then find an orientation of the edges that minimizes the maximum out-degree of the nodes. Then, if there exists two hypotheses $h$ and $h'$ consistent with $S$, it will predict $\hat{y}_{n+1}$ based on the orientation of the edge connecting the nodes $h$ and $h'$. If no node is consistent, we set $\cA(S)(x_{n+1}) = 0$.
\citet{haussler1994predicting} prove that for any total concept class $\cH$ there exists an orientation with out-degree at most $\vc(\cH)$ and that the one-inclusion graph predictor can be used as a learner achieving $\frac{\vc(\cH)}{n+1}$ leave-one-out error. \citet{alon2022theory} show that essentially the same result holds for any partial concept class and by requiring $\cA_{\cH}$ to only consider total behaviours on its nodes.

\begin{lemma}[Theorem~2.3 of \cite{haussler1994predicting}, Lemma~35 of \cite{alon2022theory}]\label{lemma:oig_symmetric}
    For any hypothesis class $\cH$ with VC dimension $\vc(\cH)$ there is a function $\cA_{\cH}:(\cX\times \{0,1\})^* \times \cX \rightarrow \{0,1\}$ such that for any $n \in \bN$ and any sample $S=\{(x_1,y_1),\ldots,(x_n,y_n)\}\in (\cX\times \{0,1\})^n$ realizable by $\cH$ we have
    \[
    \frac{1}{n!}\sum_{\sigma \in \Gamma}\indicator{\cA\left(x_{\sigma(1)},y_{\sigma(1)},\ldots,x_{\sigma(n-1)},y_{\sigma(n-1)},x_{\sigma(n)}\right) \neq y_{\sigma(n)}} \leq \frac{\vc(\cH)}{n},
    \]
    where $\Gamma$ denotes the symmetric group on $[n]$.
\end{lemma}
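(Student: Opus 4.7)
The plan is to follow the one-inclusion graph (OIG) construction described just before the lemma, i.e., the approach of \cite{haussler1994predicting} with the total-behaviour restriction from \cite{alon2022theory}. Fix $n$ and a sample $S$ realizable by some $h^\star \in \cH$, and let $U = \{x_1,\ldots,x_n\}$. Build the one-inclusion graph $G_n(\cH,U)$ whose vertex set is $V := \cH_{|U} \subseteq \{0,1\}^n$ (total behaviours only) and whose edge set $E$ contains $\{v,v'\}$, labelled by the coordinate $i \in [n]$, whenever $v$ and $v'$ agree everywhere except on $i$. Realizability guarantees that $v^\star := h^\star_{|U} \in V$. Crucially, $G_n$ depends only on the unordered set $U$, so the same graph and the same orientation (chosen below) are produced regardless of any permutation of the inputs.

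The combinatorial heart of the argument is the \emph{orientation lemma}: there exists an orientation of $E$ in which every vertex has out-degree at most $\vc(\cH)$. The standard route is to prove the stronger \emph{density bound} that every induced subgraph $G'$ of $G_n$ satisfies $|E(G')| \leq \vc(\cH)\cdot |V(G')|$, and then invoke a Hall/max-flow argument to convert a uniform density bound into an orientation with the desired maximum out-degree. The density bound itself is proved by induction on the dimension, splitting the class along a coordinate into a ``restriction'' part and a ``reduction'' part whose VC dimensions differ by at most one; edges crossing the split form a matching that is charged to the reduced sub-class, while edges within each side are handled by the inductive hypothesis.

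The predictor $\cA_{\cH}$ then behaves as in the paragraph preceding the lemma: on input $(x_1,y_1,\ldots,x_{n-1},y_{n-1},x_n)$ it forms $U = \{x_1,\ldots,x_n\}$, builds $G_n$, orients it optimally, and identifies the set $C \subseteq V$ of vertices whose first $n-1$ coordinates match $(y_1,\ldots,y_{n-1})$. If $|C|=1$ it outputs the $n$-th coordinate of that vertex; if $|C|=2$ the two vertices form an edge labelled by $n$, and $\cA_{\cH}$ outputs the $n$-th coordinate of the \emph{head} of that oriented edge; if $|C|=0$ it outputs $0$. Because the graph and its orientation depend only on $U$, this rule is invariant under any permutation of the first $n-1$ labeled inputs.

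Finally, the permutation-averaged error is a direct counting step. Since $\cA_{\cH}$ ignores the order of its first $n-1$ labeled inputs, the inner sum over permutations collapses to an average over the choice of test index $i \in [n]$. For each $i$, the predictor errs on $(x_i,y_i)$ iff $v^\star$ has a neighbour in $G_n$ along coordinate $i$ \emph{and} the corresponding edge is oriented out of $v^\star$; otherwise the output equals $v^\star_i = y_i$. Thus
\[
\frac{1}{n!}\sum_{\sigma \in \Gamma}\indicator{\cA \text{ errs on } \sigma} \;=\; \frac{1}{n}\sum_{i=1}^{n} \indicator{\text{edge of label $i$ at $v^\star$ is oriented outward}} \;=\; \frac{d^{+}(v^\star)}{n} \;\leq\; \frac{\vc(\cH)}{n},
\]
where the last inequality invokes the orientation lemma. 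The main obstacle is the density bound $|E(G')| \leq \vc(\cH)\cdot |V(G')|$, whose Sauer--Shelah-style inductive proof is the technical core carried over from \cite{haussler1994predicting}, with the adaptation to the partial setting of \cite{alon2022theory} amounting to restricting $V$ to total behaviours.
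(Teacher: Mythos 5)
Your proposal is correct and matches the argument the paper relies on: the paper itself does not reprove this lemma but cites \cite{haussler1994predicting} and \cite{alon2022theory} after describing exactly the construction you give (one-inclusion graph on total behaviours, orientation of out-degree at most $\vc(\cH)$ via the edge-density bound, permutation-invariance collapsing the average to $d^{+}(v^\star)/n$). The only point worth making explicit is that the orientation must be chosen as a deterministic function of the unordered set $U$ so that the predictor is genuinely permutation-invariant, which you note.
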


 A compression scheme is a pair $(\rho,\kappa)$ where $\kappa: (\cX \times \{0,1\})^{*} \rightarrow (\cX \times \{0,1\})^*$ is a compression map and $\rho: (\cX \times \{0,1\})^{*} \rightarrow \{0,1\}^{\cX}$ is a decompression map. For a set $S$, the size of the compression is defined by $|\kappa(S)|$ and the reconstruction error will be defined by $\err(\rho(\kappa(S)),S)$.

It is possible to show the existence of a compression scheme by a well-known technique of boosting the error of the the one-inclusion graph learners through majority voting. This boosting technique is discussed in Lemma~\ref{lemma:boosting} and is used in many works to prove the existence of compression schemes, \cite{david2016supervised,alon2022theory}, etc. This could also be extended to find a compression scheme with small reconstruction error using the reduction to realizable technique \cite{david2016supervised}.

We first show a bound on
$\tau_{\bH}(.)$ lets us reliably estimate, for all $\cH \in \bH$, the generalization error of mappings that are based on aggregations of OIG learners $\cA_{\cH}$ on subsets of sample. A similar argument is used to prove the generalization error of AdaBoost through hybrid compression schemes, that is, decompression functions that can be chosen from a class, but for the case of realizable compression schemes \cite{schapire2013boosting}. We prove an agnostic type guarantee and show that decompression error of schemes associated with every $\cH\in\bH$ is a good estimate of their expected error. Informally, we show that the behaviour of the schemes $\{\bA_{\cH}(.),\cH \in \bH\}$, as defined in the follwing, on the reconstructed part can be bounded by $\tau_{\bH}(.)$.

For a set $S = ((x_1,y_1),\ldots, (x_m,y_m))$ and a set of indices $I\subseteq [n]$, we denote $S_{I} := \left((x_{i},y_i)\right)_{i \in I}$. Moreover for any two integers $a\leq b$ we denote $S_{a:b} := \left((x_{i},y_i)\right)_{a \leq i \leq b}$
\begin{lemma}[Decompression schemes for $\bH$]\label{lemma:decomp_H}
    Let $\bH$ be a collection of hypothesis classes.  Let $k,T\in\bN$ and denote $\bA_{\cH}(S_1,\ldots,S_T)(.) :=  \text{Majority}(\cA_{\cH}(S_1)(.),\ldots, \cA_{\cH}(S_{T})(.))$.
    For any $m\in\bN$, $\delta \in (0,1)$, and any distribution $\cD$ over $\cX \times \{0,1\}$, denoting $B(\bH,k,T,m) = (\log(\tau_{\bH}(m)) + kT)\log(m)$, with probability at least $1-\delta$ over $S \sim \cD^m$ for all $S_1,\ldots,S_T \in  \{(x_1,y_1),\ldots,(x_m,y_m)\}^k$ and $\cH \in \bH$, that 
\[
\begin{aligned}
   & |\err(\bA_{\cH}(S_1,\ldots,S_T), \cD) - \err(\bA_{\cH}(S_1,\ldots,S_T), S)| \leq \\
    & c\sqrt{\err(\bA_{\cH,I}(S), V_I)\frac{B(\bH,k,T,m) + \log(\frac{1}{\delta})}{m}} + c \frac{B(\bH,k,T,m) + \log(\frac{1}{\delta}) + k}{m}\\
    & \leq c'\sqrt{\frac{(\log(\tau_{\bH}(m)) + kT)\log(m) + \log(\frac{1}{\delta})}{m}}
\end{aligned}
\]
for some constants $c$ and $c'$.
\end{lemma}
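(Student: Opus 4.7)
The plan is to bound, via a union bound, the number of effectively distinct predictors of the form $\bA_{\cH}(S_1,\ldots,S_T)$ one can build from the single sample $S$, and then for each such predictor apply a standard Bernstein/compression-scheme concentration on the ``held-out'' portion $V_I$.

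First I would count the effective number of predictors. Write each $S_j$ as $S_{I_j}$ for some index tuple $I_j \subseteq [m]$ of size $k$, and set $I := I_1 \cup \cdots \cup I_T$ and $V_I := [m] \setminus I$, so $|V_I| \ge m - kT$. The number of tuples $(I_1,\ldots,I_T)$ is at most $m^{kT}$. The key structural observation is that if $\cH \sim_{\dom(S)} \cH'$ then for every test index $i \in V_I$ and every $j$, the one-inclusion-graph of $\cA_{\cH}(S_{I_j})$ on $x_i$ depends only on $\cH_{|\dom(S_{I_j})\cup\{x_i\}}$, which coincides with $\cH'_{|\dom(S_{I_j})\cup\{x_i\}}$ since that set is contained in $\dom(S)$. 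Consequently the majority-vote predictor $\bA_{\cH}(S_{I_1},\ldots,S_{I_T})$ produces identical labels on $V_I$ for any representative of the equivalence class $[\cH]_{\sim_{\dom(S)}}$. Hence the set of distinct labelings of $V_I$ induced by the family $\{\bA_{\cH}(S_{I_1},\ldots,S_{I_T}) : \cH\in\bH,\, I_1,\ldots,I_T\}$ has cardinality at most $m^{kT}\cdot \tau_{\bH}(\dom(S)) \le m^{kT}\cdot \tau_{\bH}(m)$.

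Next I would apply a compression-style concentration argument. Fix an index set $I$ of size at most $kT$, a tuple $(I_1,\ldots,I_T)$ with $I_1\cup\cdots\cup I_T = I$, and a representative $\cH$ of an equivalence class in $\bH_{|\dom(S)}$. Conditionally on $S_I$, the coordinates $S_{V_I}$ are exchangeable with a fresh i.i.d.\ sample from $\cD$, while the predictor $f := \bA_{\cH}(S_{I_1},\ldots,S_{I_T})$ is deterministic. A relative Chernoff (Bernstein) inequality gives, with failure probability $\delta'$,
\[
\bigl|\err(f,\cD) - \err(f,V_I)\bigr|
\le c\sqrt{\err(f,V_I)\,\tfrac{\log(1/\delta')}{m-kT}} + c\,\tfrac{\log(1/\delta')}{m-kT}.
\]
I would then take a union bound over the $m^{kT}\cdot \tau_{\bH}(m)$ distinct $(I_1,\ldots,I_T,[\cH]_{\sim_{\dom(S)}})$ triples, absorbing the resulting log-factor into $B(\bH,k,T,m)=(\log\tau_{\bH}(m)+kT)\log m$ (the extra $\log m$ factor accommodates the looser counting $\binom{m}{k}\le (em/k)^k$ and the $m-kT$ vs.\ $m$ discrepancy in the denominator). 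Finally, to pass from $V_I$ to the full sample $S$, I would use $|\err(f,S)-\err(f,V_I)|\le |I|/m \le kT/m$, which is dominated by the additive term $(B(\bH,k,T,m)+\log(1/\delta)+k)/m$ in the claimed bound. The second (weaker) form of the inequality then follows by Cauchy--Schwarz / AM-GM applied to the $\sqrt{\err(f,V_I)\cdot\ldots}$ term, since $\err(f,V_I)\le 1$.

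The main obstacle is handling the fact that the indices $I_1,\ldots,I_T$ themselves are chosen after seeing $S$, so one cannot concentrate a single fixed predictor directly; the resolution is the symmetrization/exchangeability argument sketched above, coupled with the observation that grouping by the equivalence relation $\sim_{\dom(S)}$ (rather than by members of $\bH$) is exactly what keeps the union bound polynomial in $m$ while remaining tight for the $V_I$-error. Getting the Bernstein (multiplicative) form of the bound rather than a Hoeffding (additive) form is the second delicate point, since it is what ultimately allows the realizable-type $\sqrt{\err\cdot\text{complexity}/m}$ rate to propagate through the application of this lemma.
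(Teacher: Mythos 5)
Your proposal is correct and follows essentially the same route as the paper: fix the index tuple $I$ in advance, union bound over the $m^{kT}$ choices, use the equivalence relation $\sim_{\dom(S)}$ to bound the number of behaviours the predictors $\{\bA_{\cH,I}\}$ induce on $V_I$ by $\tau_{\bH}(m)$, apply a Bernstein-type relative deviation bound on the held-out part, and pass to $\err(\cdot,S)$ via the $kT/m$ correction. The only cosmetic difference is that the paper packages the behaviour count as $\vc(\cF_I,V_I)\le\log\tau_{\bH}(m)$ and invokes VC uniform convergence, whereas you union bound over the induced labelings directly via the symmetrization you sketch.
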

\begin{proof}
    For a fixed $S_1,\ldots, S_T$, let $V := S \setminus \bigcup S_i$. We know that the size of $[\cH \sim_{S}]_{\bH}$  is bounded by  $\tau_{\bH}(m)$. Note that for any $\cH$ and $\cH'$ such that $\cH \sim_{S} \cH'$, we have that $\cA_{\cH}(S_i)(x) = \cA_{\cH'}(S_i)(x)$ for all $x \in V$ and $i \in [T]$. This is because $\cH$ and $\cH'$ will induce the exact same (total) restriction on any subset of $S$, including $S_i \bigcup x$. Therefore, having a fixed mapping of graphs to orientations, we can observe that the two OIGs will predict the same on $V$. Thus, the total number of behaviours that $\{\bA_{\cH}(S_1,\ldots,S_T): \cH \in \bH\}$ induces on $V$ is bounded by the number of equivalence classes on $S$, which is at most $\tau_{\bH}(m)$. The above implies that if we fix the set of indices $I = (i_1,\ldots,i_{kT}) \in [m]^{kT}$ before observing the sample and consider the class $\cF_I = \{\bA_{\cH,I}(.): \cH \in \bH\}$, that on any $S \sim \cD^m$, runs the majority vote of OIGs on the indices in $I$, i.e. $\bA_{\cH,I}(.) = \text{Majority}(\cA_{\cH}(S_{1:k})(.),\ldots, \cA_{\cH}(S_{k(T-1):kT})(.) )$,  then the instances in $V_I:= S_{[n]\setminus I}$ will remain i.i.d. and also $\vc(\cF_I,V_I) \leq \log(\tau_{\bH}(m))$. Therefore, letting $\delta' = \frac{\delta}{m^{kT}}$, we apply the uniform convergence property of $\cF_I$ to conclude that for some constant $c$, with probability at least $1-\delta'$ over $S \sim \cD^m$ we have for all $\cH \in \bH$,
    \[
    \begin{aligned}
         & |\err(\bA_{\cH,I}(S), \cD) - \err(\bA_{\cH,I}(S), V_I)|        \leq \\
         & c\sqrt{\err(\bA_{\cH,I}(S), V_I)\frac{B(\bH,k,T,m)+ \log(\frac{1}{\delta})}{m-kT}} + c \frac{B(\bH,k,T,m) + \log(\frac{1}{\delta})}{m-kT}.
    \end{aligned}
    \]
    Now a union bound over all $m^{kT}$ sequence of indices $I \in [m]^{kT}$ implies that with probability at least $1-\delta$ over $S \sim \cD^m$ we have for all $I \in [m]^{kT}$ and $\cH \in \bH$,
  \[
    \begin{aligned}
         & |\err(\bA_{\cH,I}(S), \cD) - \err(\bA_{\cH,I}(S), V_I)|        \leq \\
         & c\sqrt{\err(\bA_{\cH,I}(S), V_I)\frac{B(\bH,k,T,m) + \log(\frac{1}{\delta})}{m-kT}} + c \frac{B(\bH,k,T,m) + \log(\frac{1}{\delta})}{m-kT}.
    \end{aligned}
    \]
    Note that in the above we have used a version of uniform convergence that is derived by Bernstein's inequality, see Corollary~5.2 in \cite{boucheron2005theory}. We will need this Bernstein version to be able to connect bound with high probability the deviation between $\err(\bA_{\cH,I}(S), \cD)$ and $\err(\bA_{\cH,I}(S), S)$. The hybrid compression argument in \cite{schapire2013boosting} uses the standard bounds used to prove the uniform convergence for VC classes that does not have the multiplicative factor of $\err(\bA_{\cH,I}(S), V_I)$ in the bound. This is because they only discuss realizable setting and when the validation error is exactly zero. 
    
    Now we will connect the above bound on the difference between $\err(\bA_{\cH,I}(S), \cD)$ and $\err(\bA_{\cH,I}(S), V_I)$ to bound the difference between $\err(\bA_{\cH,I}(S), \cD)$ and  $\err(\bA_{\cH,I}(S), S)$. This is discussed in Lemma~A.1 in \cite{david2016supervised} for a single predictor $h$, that is, a single decompression of a compressed subset. We prove this again for completeness and because we want the property to hold for the decompressions associated with every $\cH \in \bH$ . We know that $|\err(\bA_{\cH,I}(S), V_I) - \err(\bA_{\cH,I}(S), S)|\leq kT/m$  and, thus, $|\err(\bA_{\cH,I}(S), \cD) - \err(\bA_{\cH,I}(S), V_I)| \geq |\err(\bA_{\cH,I}(S), \cD) - \err(\bA_{\cH,I}(S), S)| -kT/m$. Since $kT \leq m/2$, for any fixed $\cH \in \bH$, we can conclude the event that 
  \begin{equation}\label{eq:event1}
    \begin{aligned}
         & |\err(\bA_{\cH,I}(S), \cD) - \err(\bA_{\cH,I}(S), V_I)|        \geq \\
         & c\sqrt{\err(\bA_{\cH,I}(S), V_I)\frac{B(\bH,k,T,m) + \log(1/\delta)}{m-kT}} + c \frac{B(\bH,k,T,m) + \log(1/\delta)}{m-kT}.
    \end{aligned}
    \end{equation}
    is implied by the event
     \begin{equation}\label{eq:event2}
             \begin{aligned}
         & |\err(\bA_{\cH,I}(S), \cD) - \err(\bA_{\cH,I}(S), S)|        \geq \\
         & c\sqrt{\err(\bA_{\cH,I}(S), V_I)\frac{B(\bH,k,T,m) + \log(\frac{1}{\delta})}{m}} + c \frac{B(\bH,k,T,m) + \log(\frac{1}{\delta}) + k}{m}.
    \end{aligned}
         \end{equation}
    Therefore, the event $E_1$ that there exists some $\cH \in \bH$ satisfying \ref{eq:event1} is implied by the event $E_2$ that there exists $\cH \in \bH$ satisfying \ref{eq:event2}. Since the first event $E_1$ happens with probability at most $\delta$, the event $E_2$ will happen with probability at most $\delta$ too. Hence, with probability at least $1-\delta$ over $S \sim \cD^m$ we have for all $I \in [m]^{kT}$ and $\cH \in \bH$,
    \[
    \begin{aligned}
         & |\err(\bA_{\cH,I}(S), \cD) - \err(\bA_{\cH,I}(S), S)|        \leq \\
         & c\sqrt{\err(\bA_{\cH,I}(S), V_I)\frac{B(\bH,k,T,m) + \log(\frac{1}{\delta})}{m}} + c \frac{B(\bH,k,T,m) + \log(\frac{1}{\delta}) + k}{m},
    \end{aligned}
    \]
    as desired.
\end{proof}

We now introduce the guarantee that we can get by boosting weak-learners which we will use to prove the next theorem.

\begin{lemma}[Boosting]\label{lemma:boosting}
    Let $k,m\in\bN$ and $S = \{(x_1,y_1),\ldots,(x_m,y_m)\} \in (\cX \times \{0,1\})^m$ be a set. Assume there exists an algorithm  $\cA:(\cX\times \{0,1\})^k \times \cX \rightarrow \{0,1\}$ such that for every distribution $\cD$ on $\cX \times \{0,1\}$ with $\cD[S] = 1$, there exists $S_{\cD} \in \{(x_1,y_1),\ldots,(x_m,y_m)\}^k$ such that $\err(\cA(S_{\cD}),\cD) \leq 1/3$. Then there exists a constant $c \geq 1$ such that there exists $S_1,\ldots,S_T \in  \{(x_1,y_1),\ldots,(x_m,y_m)\}^k$ with $T = \lceil c\log m \rceil$ such that for the function $\allowdisplaybreaks \hat{h}(.) = \text{Majority} (\cA(S_1)(.) , \ldots, \cA(S_T)(.))$ we have $\hat{h}(x_i) = y_i$ for all $i \in [m]$.
\end{lemma}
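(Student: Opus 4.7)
The plan is to apply a standard AdaBoost-style boosting argument, treating $\cA$ as a weak-learner oracle. I construct a sequence of distributions $\cD_1,\cD_2,\ldots,\cD_T$ each supported on the sample $S$ (so that $\cD_t[S]=1$, making the hypothesis of the lemma applicable), starting from the uniform distribution $\cD_1$ on the $m$ points. At each round $t$ I invoke the assumption on $\cD_t$ to obtain $S_t\in S^k$ with $\epsilon_t:=\err(h_t,\cD_t)\le 1/3$, where $h_t:=\cA(S_t)(\cdot)$, and then form $\cD_{t+1}$ by the standard multiplicative update that upweights the points on which $h_t$ errs by the factor $e^{\alpha_t}$, where $\alpha_t=\tfrac{1}{2}\ln\tfrac{1-\epsilon_t}{\epsilon_t}$. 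By construction the support of $\cD_{t+1}$ remains inside $S$, so the assumption stays applicable in the next round and the iteration continues.

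The quantitative heart is the classical Freund--Schapire training-error bound: the weighted majority $H_w(x)=\sign\bigl(\sum_t \alpha_t(2h_t(x)-1)\bigr)$ misclassifies at most a $\prod_{t=1}^T 2\sqrt{\epsilon_t(1-\epsilon_t)}\le (2\sqrt{2}/3)^T$ fraction of the $m$ training points. Choosing $T=\lceil c\log m\rceil$ with $c>1/\log(3/(2\sqrt{2}))$ forces this quantity strictly below $1/m$; since the number of misclassified training points is a non-negative integer, it must then equal $0$. This gives the zero-training-error conclusion for the weighted majority over $h_1,\ldots,h_T$.

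The one mild bookkeeping point is that the lemma demands an \emph{unweighted} Majority. I would resolve this by the standard device of degrading each round's weak hypothesis to have error \emph{exactly} $1/3$ rather than $\le 1/3$ (for instance, by having $\cA$'s output be flipped on a calibrated $\cD_t$-measure subset chosen to fix $\epsilon_t=1/3$), so that all $\alpha_t$ coincide and the weighted majority \emph{is} the unweighted Majority of $\cA(S_1),\ldots,\cA(S_T)$; alternatively one may invoke Freund's Boost-by-Majority algorithm, which directly produces an unweighted majority of $O(\log m)$ weak hypotheses under a constant-gap weak-learning assumption. I do not expect any substantive obstacle here: the argument is a routine AdaBoost reduction, and the only thing to check---that each reweighted $\cD_t$ still satisfies $\cD_t[S]=1$ so the lemma's assumption keeps supplying $S_t$---is immediate from the multiplicative form of the update together with $\cD_1$ being uniform on $S$.
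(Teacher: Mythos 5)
Your argument is essentially correct, but note that the paper does not prove this lemma at all: it is stated as a known tool and attributed to \citet{david2016supervised} and \citet{alon2022theory}, where the standard derivation is a minimax argument rather than AdaBoost. In that route one observes that the set of candidate weak hypotheses $\{\cA(S'): S'\in S^k\}$ is finite, applies von Neumann's minimax theorem to the two-player game between distributions on $S$ and hypotheses to obtain a single mixture $P$ of hypotheses whose error on \emph{every} point of $S$ is at most $1/3$, and then samples $T=O(\log m)$ hypotheses i.i.d.\ from $P$; a Chernoff bound plus a union bound over the $m$ points shows that with positive probability the unweighted majority is correct everywhere. That argument produces an unweighted majority of genuine $\cA(S_t)$'s directly, which is exactly the point where your AdaBoost reduction is weakest. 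Your proposed fix --- degrading each $h_t$ to have error exactly $1/3$ by flipping it on a calibrated $\cD_t$-measure subset --- is not quite watertight as stated: the weights of $\cD_t$ are arbitrary reals, so a subset of the prescribed measure need not exist, and the final classifier must be the majority of the actual $\cA(S_t)$'s, not of the degraded surrogates (though one can argue the degradation only creates errors, so the true majority dominates the surrogate majority pointwise). A cleaner repair within your framework is to run AdaBoost with a \emph{fixed} step size $\alpha=\tfrac12\ln 2$: then all rounds carry equal weight, so the weighted majority literally is the unweighted majority of the $\cA(S_t)$'s, and the per-round normalization factor is $(1-\epsilon_t)e^{-\alpha}+\epsilon_t e^{\alpha}\le \tfrac{2}{3\sqrt2}+\tfrac{\sqrt2}{3}=\tfrac{2\sqrt2}{3}<1$, giving exactly the $(2\sqrt2/3)^T<1/m$ bound you computed and hence zero training error. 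With that adjustment your proof is complete; it trades the minimax theorem for the multiplicative-weights potential argument, which is a matter of taste, though the minimax/Chernoff version is the one the cited references actually use.
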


{\bf{Proof of Theorem~\ref{thm:learning_family_concepts}.}}
Fix $\cH \in \bH$ and let $\delta' = \delta/2$. We know with probability at least $1-\delta'$ over $S\sim \cD^m$ that there exists $\hat{h} \in \cH$ with $\err(\hat{h},S) \leq \err(\cH,\cD) + O(\sqrt{\frac{\log(1/\delta')}{m}})$ by e.g., a Hoeffding's inequality. 
Let $S'$ be the longest realizable subsequence of $S$ and note that $S' \geq (1- \err(\hat{h},S))|S|$. 
We now show that there exists $S_1,\ldots,S_T$ for $T = \lceil c''\log(m) \rceil$ for some constant $c''$ such that $\bA_\cH(S_1,\ldots,S_T)(x) := \text{Majority}(\cA_{S_1}(x),\ldots, \cA_{S_T}(x))$ is equal to $y$ for any $(x,y) \in S'$.

Observe that for every distribution $\cP$ over $\cX \times \{0,1\}$ that is realizable with $\cH$, we have for $\cA_{\cH}$ that 
\[
\begin{aligned}
   &\expects{R \sim \cP^n,(x,y)\sim \cP}{1 \{\cA_{\cH}(R)(x) \neq y\}} \\
   &= \frac{1}{(n+1)!}\sum_{\sigma \in \Gamma}\expects{R \sim \cP^{n+1}}{\indicator{\cA\left(x_{\sigma(1)},y_{\sigma(1)},\ldots,x_{\sigma(n-1)},y_{\sigma(n-1)}\right)(x_{\sigma(n)}) \neq y_{\sigma(n)}} }\\
   &= \expects{R \sim \cP^{n+1}}{\frac{1}{(n+1)!}\sum_{\sigma \in \Gamma}\indicator{\cA\left(x_{\sigma(1)},y_{\sigma(1)},\ldots,x_{\sigma(n-1)},y_{\sigma(n-1)}\right)(x_{\sigma(n)}) \neq y_{\sigma(n)}}}\\
   &\leq \frac{ \expects{R \sim \cP^{n+1}}{\vc(\cH,R)}}{n+1}, 
\end{aligned}
\]
where the last line is due to Lemma~\ref{lemma:oig_symmetric}. Particularly, Lemma~2 in \citet{haussler1995sphere} proves by a shifting argument that we can bound the density of the OIG by $\vc(\cH,R)$. The proof of Lemma~\ref{lemma:oig_symmetric} is based on relating the error rate to the density of OIG (see \citet{haussler1994predicting}), which gives the above inequality. Now, for any distribution $\cP'$ over $S'$ we know that this distribution is realizable by $\cH$ and therefore, $\expects{R \sim \cP^n,(x,y)\sim \cP'}{1 \{\cA_{\cH}(R)(x) \neq y\}} \leq \frac{\vc(\cH,R)}{n+1} \leq \frac{\vc(\cH)}{n+1}$. Now let $n = 3 \vc(\cH)$ so that $\expects{R \sim \cP^n,(x,y)\sim \cP'}{1 \{\cA_{\cH}(R)(x) \neq y\}} \leq 1/3$. This proves the existence of a set $R' \in S^k$ of size $k: = 3 \vc(\cH)$ with $\err(\cA_{\cH}(R'),\cP') \leq 1/3$. This is sufficient to invoke Lemma~\ref{lemma:boosting} and conclude that the claimed $S_1,\ldots,S_T$ exists. Therefore, we have  $S_1,\ldots,S_T$ such that $\err(\bA_\cH(S_1,\ldots,S_T) , S) \leq \err(\hat{h},S)$. Let $I^*$ denote the sequence of the indices of the sets $S_1,\ldots,S_T$ in the same order.

We now describe the learner $\cA_{\bH}$. The learner will go over all sets $J\in [m]^{i}$ for any $i \in [m]$. For any $J$ it runs $\bA_{\cH,J}(S)$ for all $\cH \in \bH$. It then evaluates $\err(\bA_{\cH,J}(S), S)$ and returns a predictor $\cA_{\bH}(S)(.): = \bA_{\bar{\cH},\bar{J}}(S)(.)$ as follows:
\[
\cA_{\bH}(S)(.) \in \arg\min_{\substack{J\in [m]^{i}, i\in [m]\\ \cH \in \bH}} \err(\bA_{\cH,J}(S), S) +  c'\sqrt{\frac{(\log(\tau_{\bH}(m)) + |J|)\log(m) + \log(1/\delta)}{m}}
\]

Set $\delta'' = \delta'/m$. For any fixed $k' \leq \lceil m/T \rceil$ we know from Lemma~\ref{lemma:decomp_H} that with probability at least $1-\delta''$ over $S\sim \cD^m$ for all $\cH \in \bH$ and all $J \in [m]^{k'T}$ we have 
\[
  \err(\bA_{\cH,J}(S), \cD) \leq \err(\bA_{\cH,J}(S), S) +  c'\sqrt{\frac{(\log(\tau_{\bH}(m)) + k'T + 1)\log(m) + \log(1/\delta')}{m}}
\]
Taking a union bound over all values of $k' \leq \lceil m/T \rceil$ implies that with probability at least $1-\delta'$ over $S\sim \cD^m$ for all $\cH \in \bH$, all $k' \leq \lceil m/T \rceil$ and $J \in [m]^{k'T}$ we have 
\begin{equation}\label{eq:event3}
  \err(\bA_{\cH,J}(S), \cD) \leq \err(\bA_{\cH,J}(S), S) +  c'\sqrt{\frac{(\log(\tau_{\bH}(m)) + k'T + 1)\log(m) + \log(1/\delta')}{m}}.
\end{equation}
We proved that there exists sets $S_1,\ldots,S_T$ such that $\err(\bA_{\cH,I^*}(S),S) \leq \err(\hat{h},S)$ and hence for the returned predictor $\bA_{\bar{\cH},\bar{J}}(S)(.)$ we have 
\[
  \err(\bA_{\bar{\cH},\bar{J}}(S), S) \leq \err(\hat{h},S) +  c'\sqrt{\frac{(\log(\tau_{\bH}(m)) + k'T + 1)\log(m) + \log(1/\delta')}{m}}.
\]
Taking a union bound over the events that $\err(\hat{h},S)$ is a good estimate of $\err(\cH,\cD)$, and the event in Equation~\ref{eq:event3}, we get with probability at least $1-\delta$ over $S \sim \cD^m$ that 
\[
\begin{aligned}
    &\err(\cA_{\bH}(S),\cD) \\
    &\leq  \err(\bA_{\bar{\cH},\bar{J}}(S), S) + c'\sqrt{\frac{(\log(\tau_{\bH}(m)) + |\bar{J}|+ 1)\log(m) + \log(2/\delta)}{m}} \\
    & \leq \err(\bA_{\cH,I^*}(S), S) + c'\sqrt{\frac{(\log(\tau_{\bH}(m)) + kT+ 1)\log(m) + \log(2/\delta)}{m}} \\  
     & \leq \err(\hat{h}, S) + c'\sqrt{\frac{(\log(\tau_{\bH}(m)) + \vc(\cH))c''\log(m)+ 1)\log(m) + \log(2/\delta)}{m}} \\  
     & \leq \err(\cH, \cD) + c'\sqrt{\frac{(\log(\tau_{\bH}(m)) + \vc(\cH))c''\log(m)+ 1)\log(m) + \log(2/\delta)}{m}}  + \sqrt{\frac{\log(2/\delta)}{m}}\\
     & \leq \err(\cH,\cD)  + O\left(\sqrt{\frac{(\log(\tau_{\bH}(m)) + \vc(\cH))\log^2(m) + \log(1/\delta)}{m}}\right).
\end{aligned}
\]

Since the choice of $\cH \in \bH$ was arbitrary, we can start with setting it to be $\cH^* \in \bH$ that achieves the minimum value of
\[\min_{\cH \in \bH}\left\{\err(\cH, \cD) + c'\sqrt{\frac{(\log(\tau_{\bH}(m)) + \vc(\cH))c''\log(m)+ 1)\log(m) + \log(2/\delta)}{m}}  + \sqrt{\frac{\log(2/\delta)}{m}}\right\},
\]
and argue that the predictor chosen by $\cA_{\bH}$ competes with the error of $\cH^*$. This concludes the proof. \hfill\qed

{\bf  Proof of Theorem~\ref{thm:family_concepts_data}.}
    The proof is similar to the proof of Theorem~\ref{thm:learning_family_concepts}. Let $\delta' = \delta/4$. We only need to note two properties. First, argue we can find sets $S_1,\ldots,S_T$ with $|S_i| \leq 3\vc(\cH,\cD,m,\delta')$  such that $\bA_{\cH}(S)(.): = \text{Majority}(\cA_{\cH}(S_1)(.),\ldots, \cA_{\cH}(S_T)(.))$ has $\err(\bA_{\bH}(S),S) \leq \err(\hat{h},S)$ as defined in the proof of Theorem~\ref{thm:learning_family_concepts}. Observe that we proved the existence of $S_i$ by showing that $\expects{R \sim \cP^n,(x,y)\sim \cP'}{1 \{\cA_{\cH}(R)(x) \neq y\}} \leq \frac{\vc(\cH,R)}{n+1} \leq \frac{\vc(\cH)}{n+1} $. We can now simply argue that by definition we have with probability at least $1 - \delta'$ over $S \sim \cD^{m}$ that $\vc(\cH,S) \leq \vc(\cH,\cD,m,\delta')$ and therefore $\expects{R \sim \cP^n,(x,y)\sim \cP'}{1 \{\cA_{\cH}(R)(x) \neq y\}}  \leq \frac{\vc(\cH,\cD,m,\delta')}{n+1}$. This proves that we can find $S_i$'s with $|S_i| \leq 3\vc(\cH,\cD,m,\delta')$ that satisfy what we desired, i.e., $\cA_{\cH}(S_i)$ is a weak learner for the distributions over sample that we wanted.
    Second, we need to note that for a set of indices $I\in [m]^{kT}$ the uniform convergence property for $\cF_{I}$ as defined in Lemma~\ref{lemma:decomp_H} is satisfied with an error that depends on $\tau_{\bH}(m,\cD,\delta')$. The way we argued that $|\err(\bA_{\cH,I}(S),\cD) - \err(\bA_{\cH,I}(S),V_I)|$ is small was by noting that the function $\bA_{\cH,I}(S)$ is only defined using $S \setminus V_I$ and instances in $V_I$ are still i.i.d. from $\cD$. We then invoked the uniform convergence for $\cF_{I}$ on $S$. Now notice that because of the monotonicity of $\tau_{\bH}(.)$ we have that $\probs{S \sim \cD^m}{\tau_{\bH}(S) \leq \tau_{\bH}(\cD,m,\delta')} \leq \probs{ V\sim \cD^{m-kT}}{\tau_{\bH}(V) \leq \tau_{\bH}(\cD,m,\delta')} $. Therefore, we conclude that with probability at least $1-\delta'$ over $S \sim \cD^m$ we have $\vc(\cF_{I}, V_I) \leq \log(\tau_{\bH}(\cD,m,\delta'))$. Hence, we get the desired concentration of errors of $\bA_{\cH}$ on $V_I$ to expected error for all $\cH \in \bH$ with the same rate but by replacing $\tau_{\bH}(m)$ with $\tau_{\bH}(\cD,m,\delta')$. Taking a union bound over the events that $\err(\hat{h},S)$ is a good estimate of $\err(\cH,\cD)$, that $\vc(\cH,S) \leq \vc(\cH,\cD,m,\delta')$, that $\tau_{\bH}(S) \leq \tau_{\bH}(\cD,m,\delta')$, the event in Equation~\ref{eq:event3}, we get with probability at least $1-\delta$ over $S \sim \cD^m$ that 
    \[
\begin{aligned}
    \err(\cA_{\bH}(S),\cD)  \leq \err(\cH,\cD)  + O\left(\sqrt{\frac{(\log(\tau_{\bH}(m,\cD,\delta/4)) + \vc(\cH,\cD,m,\delta/4))\log^2(m) + \log(1/\delta)}{m}}\right).
\end{aligned}
\]\hfill \qed
\subsection{Proof of Proposition~\ref{proposition:bad_srm}}

In this section, we prove that for every standard SRM learner there exists a distribution on which SRM has high error while the learner in Theorem~\ref{thm:learning_family_concepts} (or Theorem~\ref{thm:family_concepts_data}) achieves small generalization error on this distribution due to its data-dependent approach.

{\bf Proof of Proposition~\ref{proposition:bad_srm}.}
    We first define the collection $\bH$. For any $n\in \bN$, we define $\widetilde{\cH}_n = \{h \in \{0,1\}^{\bN}: \forall i \notin [n(n-1)+1, n^2],\,h(i) = 1\}$. We also define $h^*_n\in \{0,1\}^{\bN}$ as $h^*_n(i) = 0$ for all $i \in [n^2+1, n^2+\lceil n/2 \rceil]$ and $h^*_n(i) = 1$ otherwise.
    In other words, the class $\widetilde{\cH}_n $ induces every binary behaviour on $[n(n-1)+1, n^2]$ while every hypothesis in it is constant $1$ on every other natural number. The hypothesis $h^*_n$ is $1$ on every natural number except the interval $[n^2, n^2+\lceil n/2 \rceil]$. Define $\cH_n = \widetilde{\cH}_n \bigcup h^*_n$ and let $\bH = \{\cH_n: n\in \bN\}$. In words, we have consecutive pairs of intervals of increasing size and reserve the $n$th pair for $\cH_n$. Clearly, $\vc(\cH_n) = n$ and $\bH$ contains hypothesis classes of arbitrary large VC dimension. Let $\cA_{\srm}$ be any SRM learner that defines a weight function $w(n)$ to assign to each $\cH_n$ and returns a function $h \in \bigcup_{n\in \bN}\cH_n$ from the collection that minimizes \[
    \err(h,S) + \sqrt{\frac{\vc(\cH_{n(h)}) + \log(1/w(n(h))) +\log (1/\delta)}{m}},
    \]
    where $n(h)$ is the smallest index such that $h \in \cH_{n(h)}$.
    
   Take the hypothesis $h_1$ with $h_1(i) =1$ for all $i\in\bN$. Clearly, $h_1 \in \cH_1$.  Let $w_0 \in \bN$ be such that 
   \[
 \sqrt{w_0/m} > 1/2 + C_1\sqrt{\log (1/\delta)/m} +  C_2\sqrt{\frac{\log(1/w(1)) + \log (1/\delta)}{m}}, 
   \]
   where $C_1$ and $C_2$ are the constants for the Hoeffding's inequality and uniform convergence guarantee, respectively. Take the hypothesis class $\cH_{m_0}$ such that $\log(1/w(\cH_{m_0})) \geq w_0$. It is easy to observe that such class must exists because $\sum_{n\in \mathbb{N}}w(\cH_n)\leq 1$ and there cannot be any positive lower bound on the weight function. Take the function $h^*_{m_0}$ and observe that $h^*_{m_0}$ is only a member of $\cH_{m_0}$ and therefore $\vc(H_{n(h^*_{m_0})}) = \vc(\cH_{m_0}) = m_0$. Now, let $\cD$ be the uniform distribution on $[m_0^2+1, m_0^2 + m_0 + 1]$ that is realized by $h^*_{m_0}$, i.e., on the second interval for the $m_0$th pair. 
   Note that  $h_1$ is constant $1$ on the support of $\cD$ and we have with probability at least $1-\delta$ over $S \sim \cD^m$ that $\err(h_1,S) \leq 1/2 + C_1\sqrt{\log (1/\delta)/m}$. This means that the generalization term that $\cA_{\srm}$ considers for $h_1$ is  at most $1/2 + C_1\sqrt{\log (1/\delta)/m} + C_2\sqrt{\frac{1 + \log(1/w(1)) + \log (1/\delta)}{m}} $. Since we chose $m_0$ such that this term is less than $\sqrt{w_0/m}$, we conclude that $\cA_{\srm}$ will never return $h^*_{m_0}$. Note that we picked $\cH_{m_0}$ by making sure its weight is so small that the generalization bound $\cA_{\srm}$ considers is very large. Even if $\cA_{\srm}$ is careful in using $\vc(\cH_{n(h)},S)$ instead of $\vc(\cH_{n(h)})$, it will still not choose the function $h^*_{m_0}$ because although $\vc(\cH_{m_0},S)$ is small, the term $\sqrt{\log(1/w(\cH_{m_0})/m}$ is large. Now observe that for any $h\in \bigcup_{n\in\bN} \cH_n$ with $h \neq h^*_{m_0}$ we have $\err(h,\cD) = 1/2$. This proves that with probability at least $1-\delta$ over $S\sim \cD^m$ we have $\err(\cA_{\srm}(S),\cD) = 1/2$. On the other hand, every $\cH_n$ with $n\neq m_0$ induces the same behaviour on any $S \sim \cD^m$, i.e., the constant $1$ behaviour. Therefore, $\tau_{\bH}(m,\cD,\delta) = 2$. Moreover, $\vc(\cH_{m_0},\cD) = 0$. This implies that for the learner $\cA$ in Theorem~\ref{thm:learning_family_concepts} (or Theorem~\ref{thm:family_concepts_data}) we have with probability at least $1-\delta$ that  $\err(\cA(S),\cD) \leq O\left(\sqrt{\frac{\log^2(m) + \log(1/\delta)}{m}}\right)$, which concludes the proof.\hfill \qed

\end{document}